\def\set@curr@file#1{\def\@curr@file{#1}} %temp workaround for 2019 latex release
\title[Power Constrained Bandits]{Power Constrained Bandits}
\author{\Name{Jiayu Yao}
       \Email{jiy328@g.harvard.edu}\\ 
       \addr SEAS, Harvard University\\
       Cambridge, MA, USA
       \AND
       \Name{Emma Brunskill}
       \Email{ebrun@cs.stanford.edu}\\ 
       \addr CS Department, Stanford University\\
       Stanford, CA, USA
       \AND
       \Name{Weiwei Pan}
       \Email{weiweipan@g.harvard.edu}\\ 
       \addr SEAS, Harvard University\\
       Cambridge, MA, USA
       \AND
       \Name{Susan Murphy}
       \Email{samurphy11@gmail.com }\\ 
       \addr SEAS, Harvard University\\
       Cambridge, MA, USA
       \AND
       \Name{Finale Doshi-Velez}
       \Email{finale@seas.harvard.edu}\\ 
       \addr SEAS, Harvard University\\
       Cambridge, MA, USA}
\begin{document}

\maketitle

\begin{abstract}
  Contextual bandits often provide simple and effective personalization in decision making problems, making them popular tools to deliver personalized interventions in mobile health as well as other health applications. However, when bandits are deployed in the context of a scientific study---e.g. a clinical trial to test if a mobile health intervention is effective---the aim is not only to personalize for an individual, but also to determine, with sufficient statistical power, whether or not the system's intervention is effective.  It is essential to assess the effectiveness of the intervention before broader deployment for better resource allocation.
  The two objectives are often deployed under different model assumptions, making it hard to determine how achieving the personalization and statistical power affect each other. In this work, we develop general meta-algorithms to modify existing algorithms such that sufficient power is guaranteed while still improving each user's well-being. We also demonstrate that our meta-algorithms are robust to various model mis-specifications possibly appearing in statistical studies, thus providing a valuable tool to study designers. 
\end{abstract}
\section{Introduction}
Mobile health applications are gaining more popularity due to easy access to smartphones and wearable devices. Mobile health applications can increase patients' information access, improve patients' communication with clinicians and help with self-monitoring. In mobile health applications, much of the initial research and development is done via clinical studies.  In these safety-critical applications, it is crucial to determine whether or not a treatment has an effect on the health of the patient (i.e. whether or not such an effect exists). This property is known as \emph{power} in the statistical literature: the probability of detecting an effect if it exists. A currently popular study design for assessing the treatment effect is the \emph{micro-randomized} trial \cite{liao2015micro, klasnja2015microrandomized}, in which an automated agent interacts in parallel with a number of individuals over a number of times.  At each interaction point, the intervention (or lack of intervention), is chosen according to some apriori determined probability.  This type of design allows the designer to observe the pattern of initial excitement/novelty effect followed by some disengagement that one would observe in a real deployment. The fact that each intervention is randomized also allows for rigorous statistical analysis to quantify the treatment effect.  However, it is also true that certain interventions may be more effective in certain contexts for certain people, and this knowledge may not be captured in apriori randomization probabilities.  Thus, another important goal in mobile health is to personalize these randomized probabilities to each user.

Contextual bandits provide an attractive tool for personalization in mobile health studies.  They represent a middle ground between basic multi-arm bandits, which ignore the intervention contexts, and full Markov Decision Processes (MDPs), which may be hard to learn given limited data.  
In this work, we are interested in meeting the dual objective in mobile health where we not only want to personalize actions for the users, but we also want to guarantee the ability to detect whether an intervention has an effect (if the effect exists) for the study designers.  Such situations arise frequently in mobile health studies: imagine a mobile app that will help patients manage their mental illness by delivering reminders to self-monitor their mental state.  In this case, not only may we want to personalize reminders, but we also want to measure the marginal effect of reminders on self-monitoring.  Quantifying these effects is often essential for downstream scientific analysis and development.

Currently, there exist algorithms that either have principled bounds on regret (e.g.~\cite{abbasiyadkori2011,agrawal2012thompson,Krishnamurthy2018}), which largely come from the Reinforcement Learning (RL) community, or aim to rigorously determine an effect (e.g. micro-randomized trials \cite{liao2015micro, klasnja2015microrandomized,Kramer2019}), which have been a focus in the experimental design community. Practical implementation of these algorithms  often results in tensions in mobile health applications: for regret minimization, one may make assumptions that are likely not true, but close enough to result in fast personalization. However, for treatment effect analysis, one must be able to make strong statistical claims in the face of a potentially non-stationary user---e.g. one who is initially excited by the novelty of a new app, and then disengages---as well as highly stochastic, hidden aspects of the environment---e.g. if the user has a deadline looming, or starts watching a new television series.  It is not obvious whether an algorithm that does a decent job of personalization under one set of assumptions would guarantee desired power under more general assumptions.

In this work, we \emph{both} rigorously guarantee that a trial will be sufficiently powered to provide inference about treatment effects (produce generalizable knowledge about a population of users) \emph{and} minimize regret (improve each user's well-being).  In minimizing regret, each user represents a different task; the task is performed separately on the entire sample of users.  Finally, mobile health studies and trials are expensive as each trial might be long. Thus not only must one be sufficiently powered, one must also leave open the option for post-hoc analyses via off-policy evaluation techniques; the latter implies that all action probabilities must be bounded away from 0 or 1.

\paragraph{Generalizable Insights for Machine Learning in the Context of Healthcare}
We provide important tools for study designers in mobile health to achieve good personalization and power at the same time.  Specifically, we introduce a novel meta-algorithm that 
can make simple adjustments to a variety of popular regret minimization algorithms 
such that sufficient power is guaranteed \emph{and} we get optimal regret per user with respect to an oracle that selects from a class of power-preserving policies. 
The wrapper algorithm only makes slight changes to the original algorithms and works by selectively sharing the information with them. Although our focus in this paper is on mobile health, our analysis and methods apply to many settings where personalization and power are equally prioritized.

\paragraph{Structure} In Section~\ref{sec:background}, we provide necessary technical tools for this work. In Section~\ref{sec:power} and~\ref{sec:regret}, we provide theoretical analyses of our methods. In Section~\ref{sec:experiments}, we describe all experiment details and demonstrate our approaches on a realistic mobile health simulator based on HeartSteps~\cite{liao2015micro}, a mobile health app designed to encourage users' physical activities (we focus on simulations because real studies are expensive and demonstration of power estimation requires running a large number of studies to compute the proportion of times one correctly detects a treatment effect).

\section{Related Work}
Micro-randomized trial (MRT), which can be used to determine whether a treatment effect exists in a time-varying environment, is a popular method in mobile health to inform the development of system intervetions~\cite{li2020micro,bell2020notifications,info:doi/10.2196/15033,liao2015micro}. For example,~\citet{li2020micro} used MRT to promote long term engagement of users in mobile health to help data collection.~\citet{bell2020notifications} used MRT to assess if in Drink Less, a behavior change app that helps users reduce alcohol consumption, sending a message at night increases behavioral engagement. However, in these studies, the randomized probabilities are fixed and the treatment plans are not personalized for users.

There is a body of works focusing on ways to quantify properties of various arms of a bandit. Some works estimate the means of all arms \citep{carpentier2011upper} while others focus on best-arm identification to find the best treatment with confidence \citep{audibert2010best}.  Best-arm identification has been applied to both stochastic and adversarial settings   \citep{abbasiyadkori2018, LattimoreSzepesvari2019}.  However, these algorithms typically personalize little if at all, and thus can result in high regret.  

Other works focus on minimizing regret without considering testing hypotheses related to treatment effectiveness.  While there exists a long history of optimizing bandits in RL \citep{abbasiyadkori2011,agrawal2012thompson}, perhaps most relevant are more recent works that can achieve optimal first order regret rates in highly stochastic, even adversarial settings \citep{LattimoreSzepesvari2019,Krishnamurthy2018,greenewald2017action}. Our approach also provides power guarantees in those challenging settings without significant increase in regret.

Finally, other works consider different simultaneous objectives. \citet{erraqabi2017} consider arm value estimation jointly with regret minimization. 
\citet{nie2017adaptively,deshpande, hadad2019confidence} consider how to accurately estimate the means or provide confidence intervals with data collected via adaptive sampling algorithms.  At a high level, most similar to this work is  that of~\citet{williamson2017bayesian,villar2015multi,degenne2019}. All of them assume multi-arm bandits while for regret minimization, we consider contextual bandits and for statistical analysis, we assume very general settings common in the mobile health where the environments can be non-stationary and highly stochastic.
\citeauthor{williamson2017bayesian,villar2015multi}  consider  the task of assigning treatments to $N$ individuals so as to minimize regret (i.e., maximize success rate).  They consider heuristic alternatives to improve power but not guarantee it while our work provides theoretical guarantees for a stated power.~\citeauthor{degenne2019} consider best arm identification with regret minimization with application to A/B testing. They studied one particular algorithm while we develop several meta-algorithms that allow us to adapt a broad range of existing algorithms.

To our knowledge, we are the first to consider the two following tasks: a sequential decision problem per user with the goal to minimize regret during the study while guaranteeing the power to detect a marginal (across the users) effect after the study is over. We guarantee the latter in a \emph{non-stationary} and \emph{stochastic} setting.

\section{Technical Preliminaries: Notation, Model, and Statistical Setting}
\label{sec:background}

In this section, we lay out the formal notations and assumptions for our work.  We then develop our methods in Sections~\ref{sec:power} and~\ref{sec:regret} before moving on to the results in the context of a mobile health simulator.  A critical point in all of the following is that it is quite common for study designers to consider two different sets of assumptions when designing their intervention algorithms and conducting treatment effect analyses.  When it comes to maximizing personalization for each user, designers may make stronger assumptions---e.g. use a model with fewer parameters---that allow for faster exploration and learning.  However, for the statistical analysis of the treatment effect, the study designers will want to ensure that their study is sufficiently powered even if the environment is stochastic, non-stationary, and future contexts can depend on past ones---all of which are common in mobile health and other applications.  Here and in Section~\ref{sec:power}, we will consider these very general settings for our power analyses.  In Section ~\ref{sec:regret}, we will consider a variety of additional assumptions that might be made by the regret minimization algorithms.  For example, Action-Centered Thompson Sampling~\citep{greenewald2017action} and Semi-Parametric Contextual Bandit~\citep{Krishnamurthy2018} assume that the treatment effect only depends on the current context  while our setting for power guarantees allows it to be a function of full history.  We also allow correlated reward noise  across time.

\paragraph{Basic Notation}
We consider a collection of histories $\{H_{nT}\}_{n=1}^N$ consisting of $N$ users, each with $T$ steps, where $H_{nt} = (C_{n0},A_{n0}, R_{n0},C_{n1},A_{n1}, R_{n1} \dots, C_{nt})$, $t\le T$ is the history of user $n$ up to time step $t$;
$C_{nt}$ denotes the context of user $n$ at time step $t$, $A_{nt}\in \{0,1\}$ denotes the binary action (no intervention and intervention), and $R_{nt}$ denotes the reward. The potential rewards are $(R_{nt}(0), R_{nt}(1))$.  The reward $R_{nt}$ is a composite of the potential rewards and the action, $A_{nt}$: $R_{nt}=R_{nt}(A_{nt})$.
For each user, a contextual bandit algorithm uses a policy $\pi_t$ which is a function constructed from the user's prior data $H_{n,t-1}, A_{n,t-1}, R_{n,t-1}$, in order to select action $A_{nt}$ based on the current context $C_{nt}$ (i.e. $P(A_{nt}=1)=\pi_{t}(C_{nt})$). We write the policy $\pi_{t}(C_{nt})$ as $\pi_{nt}$ for short in the following text. 

In this work, we will require policies to have action probabilities in some $[\pi_{\min},\pi_{\max}]$ bounded away from 0 and 1.  In mobile health where clinical trials are often expensive, this policy class is preferred---and often required---by scientists who wish to preserve their ability to perform unspecified secondary analyses \citep{philip2016, su2019} and causal inference analyses \citep{boruvka2018assessing}. 
We also run the algorithm for each user separately. Although it is possible to analyze the treatment effect with adaptively collected data~\cite{nie2017adaptively,deshpande, hadad2019confidence}, in mobile health, correctly accounting for treatment effect when combining data over users is nontrivial since users may enter the study at different times. Furthermore, some works have found that for online detection and prediction, user-specific algorithms work better than population-based algorithms \citep{dallery2013single,korinek2018adaptive,albers2017personalized}.

\paragraph{Preliminaries: Environment and Notation for Statistical Analyses}
In the contextual bandits literature, linear models are often preferred because they are well understood theoretically and easy to implement. However, in real life, linear models are often insufficient to model rewards accurately, and domain scientists wish to make as few assumptions as practically possible when testing for treatment effects.

In this work, we consider a semiparametric linear contextual bandit setting, which provides a middle ground between linear models and fully flexible models. In this setting, the reward function is decomposed into an action-dependent linear treatment effect term, which preserves nice theoretical properties for rigorous statistical analyses, and an action-independent marginal reward term, which constructs a reward model accurately. 

For the treatment effect, we assume it satisfies \begin{equation}
\mathbb{E}[R_{nt}(1)| H_{nt}]-\mathbb{E}[R_{nt}(0)| H_{nt}]=Z_t^{\intercal}(H_{nt})\delta_0, 
\label{eqn:true_reward}
\end{equation}
where 
$Z_t(H_{nt})$ is a set of features that are a known function of the history $H_{nt}$ and $\delta_0$ is a vector encodes the information of treatment effect.  Importantly, the feature vector $Z_t(H_{nt})$ is independent of the present action, $A_{nt}$, but may depend on prior actions.  We assume that an expert defines what features of a history may be important for the reward but make \emph{no} assumptions about how the history itself evolves.  
We assume the histories $\{H_{nt}\}_{n=1}^N$ are independent and identically distributed as we run algorithms on each user separately. 
However, there  may be dependencies across time within a specific user. Finally, we assume that 
the variance of potential rewards is finite ($\mathrm{Var}[R_{nt}(a)|H_{nt}]< \infty$ for $a\in\{0,1\}$ and $t=1,\dots, T$). We denote the marginal reward over treatments, $\mathbb{E}[R_{nt}| H_{nt}]$,  by $\gamma_t(H_{nt})$, which can be a complex non-linear function of the history $H_{nt}$. We discuss how to approximate $\gamma_t(H_{nt})$ later. 
 In the following text, we  write the features
$Z_{t}(H_{nt})$ as $Z_{nt}$ and the marginal reward $\gamma_{t}(H_{nt})$ as $\gamma_{nt}$ for short. In fact, the reward function can be written as,
 \[\mathbb{E}[R_{nt}|A_{nt}, H_{nt}]=\gamma_{nt}+(A_{nt}-\pi_{nt})Z^\intercal_{nt}\delta_0 \quad \text{ (Appendix~\ref{apdx:pf_reward_fcn})}.\]

\paragraph{Preliminaries: Hypothesis Testing.}
In statistics, hypothesis testing is the act of testing an assumption about the population based on observations collected from an experiment. In this work, we are interested in testing if there exists a treatment effect. Our goal is to test between the null hypothesis $H_0$, which proposes there is no treatment effect 
($H_0: \delta_0=0$), and the alternate hypothesis $H_1$, which proposes there is a treatment effect ($H_1: \delta_0\neq 0$). Hypothesis testing is often analyzed in terms of \emph{Type 1 error} and \emph{power}. The Type 1 error is the probability of finding a treatment effect when there is no effect ($P(\text{Reject } H_0|\ H_0 \text{ is True})$), and the power is the probability of detecting a treatment effect when an effect exists ($P(\text{Reject } H_0|\ H_1 \text{ is True})$). Prior to data collection, power analysis is used to compute the number of samples needed to achieve a particular level of power (if an effect exists).

\paragraph{Preliminaries: Test Statistic.}
 To identify if we can reject the null hypothesis, we need to construct a test statistic that allows us to compare the sample data with what is expected under the null hypothesis.
Drawing on one used in multiple micro-randomized trials in mobile health~\citep{liao2015micro,boruvka2018assessing,klasnja2019,Bidargaddi2018}, we construct a test statistic that requires minimal assumptions to guarantee the desired Type 1 error and the desired power.  The construction  assumes the treatment effect model in Equation~\ref{eqn:true_reward}. Next we construct a \lq\lq working model\rq\rq\ for the marginal reward $\gamma_{nt}$:
\begin{equation}
\mathbb{E}[R_{nt}|H_{nt}]=\gamma_{nt}=B_{nt}^\intercal\gamma_0, 
\label{eqn:working_model}
\end{equation}
for some vector $\gamma_0$ and
$B_{nt}$, which is a feature vector provided by experts  constructed from the history $H_{nt}$  and is different from $Z_{nt}$. 

Let $\theta=\begin{bmatrix}\gamma\\\delta\end{bmatrix}$. Our test statistics $\hat\theta=\begin{bmatrix}\hat\gamma\\\hat\delta\end{bmatrix}$  will minimize
\begin{equation}
\label{eqn:loss_delta}
 L(\theta) = \sum_{n=1}^N\sum_{t=1}^T\frac{\left( R_{nt}-X^\intercal_{nt}\theta\right)^2}{\pi_{nt}(1-\pi_{nt})}
\end{equation}
where $X_{nt} = \begin{bmatrix}B_{nt}\\(A_{nt}-\pi_{nt})Z_{nt}\end{bmatrix}\in\mathcal{R}^{(p+q)\times 1}$, and $p,q$ are the dimensions of $B_{nt},Z_{nt}$ respectively. 
Setting $\partial L(\theta)/\partial \theta = 0$ gives the solution for $\hat\theta$:
\begin{align}\small
\begin{split}
\hat\theta = &\left(\frac{1}{N}\sum_{n=1}^N\sum_{t=1}^T\frac{X_{nt}X_{nt}^\intercal}{\pi_{nt}(1-\pi_{nt})}\right)^{-1} \left(\frac{1}{N}\sum_{n=1}^N\sum_{t=1}^T\frac{R_{nt}X_{nt}}{\pi_{nt}(1-\pi_{nt})}\right)
\end{split}\label{eqn:theta_hat}
\end{align}

Since we are mainly interested in detecting the treatment effect, we focus on properties of $\hat\delta$, which is the estimator of $\delta_0$.  The loss function in Equation~\ref{eqn:loss_delta} centers the action by $A_{nt}-\pi_{nt}$. This results in an unbiased estimator of $\delta_0$ even when the model in Equation~\ref{eqn:working_model} is false~\citep{boruvka2018assessing}. The asymptotic distribution of $\sqrt{N}(\hat\delta-\delta_0)$ is as follows:
\begin{theorem}
Under the assumptions in this section, and the assumption that matrices $\mathbb{E}[\sum_{t=1}^TZ_{nt}Z_{nt}^\intercal]$, $\mathbb{E}\left[\sum_{t=1}^T\frac{B_{nt}B_{nt}^\intercal}{\pi_{nt}(1-\pi_{nt})}\right]$ are invertible, the distribution of $\sqrt{N}(\hat\delta-\delta_0)$ converges, as $N$ increases, to  a normal distribution with $0$ mean and covariance $\Sigma_\delta = QW^{-1}Q$, where  $Q=\mathbb{E}\left[\sum_{t=1}^TZ_{nt}Z_{nt}^\intercal\right]^{-1}$, and

\begin{align*}
W=\mathbb{E}\bigg[&\sum_{t=1}^T\frac{(R_{nt}-X_{nt}^\intercal\theta^*)(A_{nt}-\pi_{nt})Z_{nt}}{\pi_{nt}(1-\pi_{nt})}\times\sum_{t=1}^T\frac{(R_{nt}-X_{nt}^\intercal\theta^*)(A_{nt}-\pi_{nt})Z_{nt}^\intercal}{\pi_{nt}(1-\pi_{nt})}\bigg],
\end{align*} 
where $\small{ \gamma^*=\mathbb{E}\left[\sum_{t=1}^T\frac{B_{nt}B_{nt}^\intercal}{\pi_{nt}(1-\pi_{nt})}\right]^{-1}\mathbb{E}\left[\sum_{t=1}^T\frac{B_{nt}R_{nt}}{\pi_{nt}(1-\pi_{nt})}\right]}$ and $\small{\theta^*=\begin{bmatrix}\delta_0\\\gamma^*\end{bmatrix}}$.
\label{thm:theorem1}
\end{theorem}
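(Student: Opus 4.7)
The plan is to treat this as a linear Z-estimator asymptotic normality result, exploiting the fact that the loss in Equation~\ref{eqn:loss_delta} is quadratic in $\theta$ and therefore $\hat\theta$ has the closed form in Equation~\ref{eqn:theta_hat}. Writing $M_n = \sum_{t=1}^T X_{nt}X_{nt}^\intercal/(\pi_{nt}(1-\pi_{nt}))$ and $v_n = \sum_{t=1}^T R_{nt}X_{nt}/(\pi_{nt}(1-\pi_{nt}))$, we have $\hat\theta = \bar M^{-1}\bar v$ with $\bar M = N^{-1}\sum_n M_n$ and $\bar v = N^{-1}\sum_n v_n$. Because the histories are i.i.d.\ across $n$ with finite second moments, the strong law gives $\bar M \to M := \mathbb{E}[M_n]$. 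I would then define the population pseudo-parameter $\theta^*$ as the unique solution of $M\theta^* = \mathbb{E}[v_n]$, write
\[
\sqrt N\,(\hat\theta-\theta^*) \;=\; \bar M^{-1}\cdot \frac{1}{\sqrt N}\sum_{n=1}^N \bigl(v_n - M_n\,\theta^*\bigr),
\]
and conclude via the multivariate CLT applied to the average on the right combined with Slutsky's theorem.

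The key structural step, which produces the clean form of the stated covariance, is to show that $M$ is block-diagonal in the $(\gamma,\delta)$ partition. Since $B_{nt}$, $Z_{nt}$ and $\pi_{nt}$ are $H_{nt}$-measurable while $\mathbb{E}[A_{nt}-\pi_{nt}\mid H_{nt}]=0$ and $\mathbb{E}[(A_{nt}-\pi_{nt})^2\mid H_{nt}]=\pi_{nt}(1-\pi_{nt})$, the off-diagonal block $\mathbb{E}[\sum_t B_{nt}(A_{nt}-\pi_{nt})Z_{nt}^\intercal/(\pi_{nt}(1-\pi_{nt}))]$ vanishes by iterated expectation, and the $\delta$-diagonal block collapses to $\mathbb{E}[\sum_t Z_{nt}Z_{nt}^\intercal] = Q^{-1}$. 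Together with invertibility of $\mathbb{E}[\sum_t B_{nt}B_{nt}^\intercal/(\pi_{nt}(1-\pi_{nt}))]$ from the hypothesis, this makes $M^{-1}$ block-diagonal with lower-right block equal to $Q$, so asymptotically the $\delta$-coordinates of $\hat\theta$ depend only on the $\delta$-sub-block of the score.

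Next I would verify that the $\delta$-component of $\theta^*$ is exactly $\delta_0$, even though the $\gamma$-component $\gamma^*$ is only a projection onto a potentially mis-specified working model. Using the semiparametric decomposition $\mathbb{E}[R_{nt}\mid A_{nt},H_{nt}] = \gamma_{nt} + (A_{nt}-\pi_{nt})Z_{nt}^\intercal\delta_0$, the conditional mean $\mathbb{E}[R_{nt}(A_{nt}-\pi_{nt})\mid H_{nt}]$ simplifies via $\mathbb{E}[A_{nt}-\pi_{nt}\mid H_{nt}]=0$ to $\pi_{nt}(1-\pi_{nt})Z_{nt}^\intercal\delta_0$; plugging this into the $\delta$-row of $M\theta^* = \mathbb{E}[v_n]$ yields $Q^{-1}\delta^* = Q^{-1}\delta_0$, i.e.\ $\delta^*=\delta_0$. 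This is exactly the robustness-to-misspecification property of the centered loss that \citet{boruvka2018assessing} exploit.

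Finally, applying the multivariate CLT to $N^{-1/2}\sum_n(v_n-M_n\theta^*)$, whose $\delta$-sub-block has mean zero (by the previous paragraph) and variance equal to the $W$ of the statement, and then composing with the block-diagonal limit $M^{-1}$ via Slutsky, gives $\sqrt N(\hat\delta-\delta_0)\Rightarrow \mathcal N(0,\Sigma_\delta)$ with the sandwich form $Q\,W\,Q$. The main obstacle is the mis-specification step: one must evaluate the linearization at $\theta^*=(\gamma^*,\delta_0)$ rather than at any genuinely correct parameter (the latter need not exist if the working model for $\gamma$ is wrong), and apply the centering identity $\mathbb{E}[A_{nt}-\pi_{nt}\mid H_{nt}]=0$ in two distinct places — once to zero out the cross-Hessian and once to eliminate the bias contribution from $\gamma^*-\gamma_{nt}$ in the $\delta$-score. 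Everything else is routine under the stated invertibility and finite-variance assumptions.
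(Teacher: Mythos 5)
Your proposal is correct and follows essentially the same route as the paper's proof: standard M-estimator sandwich asymptotics in which the centering identity $\mathbb{E}[A_{nt}-\pi_{nt}\,|\,H_{nt}]=0$ is used once to block-diagonalize the expected Hessian (so the $\delta$-block collapses to $\mathbb{E}[\sum_t Z_{nt}Z_{nt}^\intercal]$) and once to show $\delta^*=\delta_0$ despite possible mis-specification of the working model for $\gamma$; your exact linear identity $\sqrt{N}(\hat\theta-\theta^*)=\bar M^{-1}N^{-1/2}\sum_n(v_n-M_n\theta^*)$ is just a cleaner packaging of the paper's Term~1/Term~2 decomposition, since the estimator is exactly linear. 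One remark: your limiting covariance $QWQ$ matches the paper's appendix derivation; the $QW^{-1}Q$ printed in the theorem statement appears to be a typo.
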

\begin{proof}
The proof is a minor adaptation of \citet{boruvka2018assessing} (Appendix Section \ref{pf:apdx_thm1}).
\end{proof}

The covariance matrix, $\Sigma_{\delta}$, can be estimated from the data using standard methods. Denote the estimator of $\Sigma_{\delta}$ by $\hat\Sigma_\delta$ (See  Section~\ref{subsec:settings}, for  $\hat\Sigma_{\delta}$). Under the null hypothesis $\delta_0=0$, the statistic $N\hat\delta^\intercal{\hat\Sigma_\delta}^{-1}\hat\delta$ asymptotically follows a $\chi^2_p$ where $p$ is the number of parameters in $\delta_0$. Under the alternate hypothesis $\delta_0=\delta$, $N\hat\delta^\intercal{\hat\Sigma_\delta}^{-1}\hat\delta$ has an asymptotic 
non-central $\chi^2_p$ distribution with degrees of freedom $p$ and non-centrality parameter $c_N =N\delta{\Sigma_{\delta}}^{-1}\delta $. The Type 1 error is  the percentage of times that the null hypothesis is incorrectly rejected; power is the percentage of times that the null hypothesis is correctly rejected.

\section{Power Constrained Bandits}
\label{sec:power}

In clinical studies of mobile health where the number of trials is often limited, if the amount of exploration, which is controlled by the intervention probability $\pi_{nt}$ is insufficient, we won't be able to determine the treatment effect. That is, to guarantee sufficient power, each treatment option needs to be tried at least some minimal number of times.
In this section, we develop a set of constraints on the randomized probability of the intervention $\pi_{nt}$ which guarantees sufficient power.

We start by proving the intuition that sufficient power requires a intervention probability $\pi_{nt}$ that ensures each option is tried enough times: for a fixed randomization probability $\pi_{nt}=\pi\in(0,1)$, for all $n,\ t$,  there exists a $\pi_{\min}$ and a $\pi_{\max}$ ($\pi_{\min} \leq \pi_{\max}$) such that when $\pi$ is $\pi_{\min}$ or $\pi_{\max}$, the experiment is sufficiently powered.  Conceptually, if the intervention probability $\pi_{nt}$ is too close to 0 or 1, then we will not see one of the alternatives often enough to detect an effect of the intervention.  

\begin{theorem}
Let $\epsilon_{nt}=R_{nt}-X^\intercal_{nt} \theta^*$ where $\theta^*$ is defined in Theorem~\ref{thm:theorem1}.  Assume that the working model in Equation~\ref{eqn:working_model} is correct.  Further assume that  $\mathbb{E}[\epsilon_{nt}|A_{nt},H_{nt}]=0$ and $Var(\epsilon_{nt}|H_{nt},A_{nt})=\sigma^2$.
Let $\alpha_0$ be the desired Type 1 error and $1-\beta_0$ be the desired power.  Set
\begin{align*}
\pi_{\text{min}}=\frac{1-\sqrt{1-4\triangle}}{2},\
\pi_{\text{max}}=\frac{1+\sqrt{1-4\triangle}}{2},\
\triangle=\frac{\sigma^2c_{\beta_0}}{N\delta^\intercal_0 \mathbb{E}\bigg[\sum_{t=1}^TZ_{nt}Z_{nt}^\intercal\bigg]\delta_0}.
\end{align*}
We choose $c_{\beta_0}$ such that $1-\Phi_{p;c_{\beta_0}}(\Phi_{p}^{-1}(1-\alpha_0))=\beta_0$, where $\Phi_{p;c_{\beta_0}}$ denotes the cdf of a non-central $\chi^2$ distribution with d.f. $p$ and non-central parameter $c_{\beta_0}$, and $\Phi_{p}^{-1}$ denotes the inverse cdf of a $\chi^2$ distribution with d.f. $p$.
For a given trial with  $N$ subjects each over $T$ time units, if the randomization probability is fixed at $\pi_{nt}=\pi_{\text{min}}$ or $\pi_{\text{max}}$, 
the  resulting Type 1 error converges to $\alpha_0$ as $N \xrightarrow{} \infty$ and the resulting power converges to $1-\beta_0$ as $N \xrightarrow{} \infty$.
\label{thm:theorem2}
\end{theorem}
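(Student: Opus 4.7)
The plan is to specialize the asymptotic distribution in Theorem~\ref{thm:theorem1} under the additional assumptions of Theorem~\ref{thm:theorem2} (constant randomization $\pi_{nt}=\pi$, correct working model for $\gamma_{nt}$, and $\mathbb{E}[\epsilon_{nt}|H_{nt},A_{nt}]=0$, $\mathrm{Var}(\epsilon_{nt}|H_{nt},A_{nt})=\sigma^2$) to compute $\Sigma_\delta$ in closed form, then invert the resulting non-centrality equation to solve for $\pi$.

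First I would simplify the ``meat'' matrix $W$. Expanding the product of sums, $W$ is a double sum over $(t,s)$ of terms of the form $\mathbb{E}[\epsilon_{nt}\epsilon_{ns}(A_{nt}-\pi)(A_{ns}-\pi)Z_{nt}Z_{ns}^{\intercal}]/(\pi(1-\pi))^2$. For $t\neq s$ (say $t>s$), I condition on $(H_{nt},A_{nt})$ on the inside; since $Z_{nt}$, $\epsilon_{ns}$, $A_{ns}$, and $A_{nt}-\pi$ are measurable with respect to $(H_{nt},A_{nt})$, and $\mathbb{E}[\epsilon_{nt}|H_{nt},A_{nt}]=0$, the off-diagonal expectations vanish. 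On the diagonal, the conditional variance assumption gives $\mathbb{E}[\epsilon_{nt}^2|H_{nt},A_{nt}]=\sigma^2$, and conditioning $A_{nt}$ on $H_{nt}$ gives $\mathbb{E}[(A_{nt}-\pi)^2|H_{nt}]=\pi(1-\pi)$. Combining these collapses the sum to
\[
W=\frac{\sigma^2}{\pi(1-\pi)}\,\mathbb{E}\!\left[\sum_{t=1}^T Z_{nt}Z_{nt}^{\intercal}\right]=\frac{\sigma^2}{\pi(1-\pi)}\,Q^{-1}.
\]
Substituting into the sandwich form from Theorem~\ref{thm:theorem1} yields $\Sigma_\delta=\frac{\sigma^2}{\pi(1-\pi)}\,Q$.

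Next I would translate this into a power equation. Under $H_0$, Theorem~\ref{thm:theorem1} together with a consistent estimator $\hat\Sigma_\delta$ gives $N\hat\delta^{\intercal}\hat\Sigma_\delta^{-1}\hat\delta\Rightarrow \chi^2_p$, so the Type 1 error converges to $\alpha_0$ by construction of the rejection threshold $\Phi_p^{-1}(1-\alpha_0)$. Under $H_1$, the asymptotic non-central $\chi^2_p$ distribution of the test statistic has non-centrality
\[
c_N=N\delta_0^{\intercal}\Sigma_\delta^{-1}\delta_0=\frac{N\pi(1-\pi)}{\sigma^2}\,\delta_0^{\intercal}\mathbb{E}\!\left[\sum_{t=1}^T Z_{nt}Z_{nt}^{\intercal}\right]\delta_0,
\]
and the resulting power is $1-\Phi_{p;c_N}(\Phi_p^{-1}(1-\alpha_0))$. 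Setting this equal to $1-\beta_0$ is equivalent, by the defining property of $c_{\beta_0}$, to requiring $c_N=c_{\beta_0}$, which rearranges to the quadratic $\pi(1-\pi)=\triangle$. The two roots $\pi=(1\pm\sqrt{1-4\triangle})/2$ are exactly $\pi_{\min}$ and $\pi_{\max}$.

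The main obstacle I anticipate is the vanishing of the off-diagonal terms in $W$: the histories within a user are allowed to be dependent, so one cannot invoke independence of $\epsilon_{nt}$ and $\epsilon_{ns}$ directly. The argument instead has to be a careful martingale-style conditioning that exploits the tower property together with $\mathbb{E}[\epsilon_{nt}|H_{nt},A_{nt}]=0$, and correctly identifies which variables are measurable with respect to the conditioning $\sigma$-algebra at each step. Everything else (the Bernoulli second moment, the cancellation of $Q$'s in the sandwich, and solving the quadratic) is routine once $W$ has the stated form.
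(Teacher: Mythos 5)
Your proposal is correct and follows essentially the same route as the paper's proof: the paper likewise kills the off-diagonal terms of $W$ by conditioning on $(H_{nt},A_{nt})$ and using $\mathbb{E}[\tilde\epsilon_{nt}\mid A_{nt},H_{nt}]=0$ together with the measurability of earlier quantities, reduces the diagonal via $\mathbb{E}[(A_{nt}-\pi_{nt})^2\mid H_{nt}]=\pi_{nt}(1-\pi_{nt})$, and then solves $c_N=c_{\beta_0}$ as the quadratic $\pi(1-\pi)=\triangle$. The only cosmetic difference is that the paper first derives $W$ with an extra term $\mathbb{E}\bigl[\sum_t (\gamma_{nt}-B_{nt}^\intercal\gamma^*)^2 Z_{nt}Z_{nt}^\intercal/(\pi_{nt}(1-\pi_{nt}))\bigr]$ and then shows it vanishes under the correct working model, whereas you absorb this step by working directly with $\epsilon_{nt}$ under the theorem's stated assumptions.
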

\begin{proof} (Sketch) The rejection region for $H_0:\delta_0=0$ is
$
\{N\hat\delta^\intercal{\hat\Sigma_\delta}^{-1}\hat\delta>\Phi^{-1}_{p}(1-\alpha_0)\},
$
which results in the 
Type 1 error of
\begin{equation*}
    \alpha_0 = \Phi_{p}(\Phi_{p}^{-1}(1-\alpha_0)),
\end{equation*}
and the power of
\begin{equation}
    1-\beta_0 = 1-\Phi_{p;c_N}(\Phi_{p}^{-1}(1-\alpha_0))
\label{eqn:power}
\end{equation}
where $c_N = N\delta^\intercal_0\Sigma^{-1}_{\delta}\delta_0$.  The formula for $\Sigma_{\delta}$ is in Theorem~\ref{thm:theorem1}, thus we only need to solve for $\pi_{\min},\pi_{\max}$ when we substitute the expression for $\Sigma_{\delta}$ in $c_N$ (full analysis in Appendix \ref{pf:apdx_thm2}).
\end{proof} 

Violations of assumptions listed in Theorem~\ref{thm:theorem2} have an effect on the robustness of the power guarantee. For example, 
although for the test statistic defined in Theorem~\ref{thm:theorem1} to possess the desired Type 1 error, we do not need the working model in Equation~\ref{eqn:working_model} to be correct,  the choice of $B_{nt}$ can have an effect on the robustness of power guarantee (Appendix \ref{pf:apdx_thm4}). Calculations in Theorem~\ref{thm:theorem2} also requires a correct treatment effect model.
In some cases, such as in the work of \citet{liao2015micro}, $Z_{nt}$ may be available in advance of the study.  In other cases, the study designer will need to specify a set of plausible models and determining the power for some fixed randomization probability will require finding the worst-case $\mathbb{E}[\sum_t Z_{nt} Z^\intercal_{nt}]$. If
the average treatment effect, $\frac{1}{T}\mathbb{E}[\sum_{t=1}^T Z_{nt}^\intercal \delta_0]$, is overestimated, it will result in lower power ($\triangle$ increases) because more exploration is needed.
Additionally, if the noise variance, $\sigma^2$, is underestimated, the resulting power will also be lower (since $\triangle$ increases) because less exploration is required in a less noisy environment.  
In Section~\ref{subsec:results}, we show that our power guarantees are robust to these possible violations: There is still a reasonable proportion of times that the treatment effect (if it exists) can be detected. 

Next, we prove that as long as each randomization probability $\pi_{nt} \in [\pi_{\min}, \pi_{\max}]$, the power constraint will be met.  Our proof holds for \emph{any} selection strategy for $\pi_{nt}$, \emph{including} ones where the policy is adversarially chosen to minimize power based on the subject's history $H_{nt}$.  Having the condition across myriad ways of choosing $\pi_{nt}$ is essential to guaranteeing power for any contextual bandit algorithm that can be made to produce clipped probabilities.

\begin{theorem}
With the same set of assumptions in Theorem~\ref{thm:theorem2}, given $\pi_{\text{min}},\pi_{\text{max}}$ we solved for above, if for all $n$ and all $t$ we have that $\pi_{nt}\in[\pi_{\text{min}},\pi_{\text{max}}]$, then the resulting power will converge to a value no smaller than $1-\beta_0$ as $N\xrightarrow{} \infty$.
\label{thm:theorem3}
\end{theorem}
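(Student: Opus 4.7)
My plan is to reduce Theorem~\ref{thm:theorem3} to a Loewner-order monotonicity statement about the non-centrality parameter $c_N = N\delta_0^\intercal \Sigma_\delta^{-1}\delta_0$. From the power expression in the sketch of Theorem~\ref{thm:theorem2}, the asymptotic power equals $1-\Phi_{p;c_N}(\Phi_p^{-1}(1-\alpha_0))$, and the non-central $\chi^2_p$ cdf $\Phi_{p;c}$ is stochastically decreasing in its non-centrality parameter. It therefore suffices to show $c_N \geq c_{\beta_0}$ whenever every realized $\pi_{nt}$ lies in $[\pi_{\min},\pi_{\max}]$.

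Next, I would simplify the score-outer-product matrix $W$ from Theorem~\ref{thm:theorem1} under the homoscedasticity and correct-working-model assumptions of Theorem~\ref{thm:theorem2}. Using iterated expectations with $\mathbb{E}[\epsilon_{nt}\mid A_{nt},H_{nt}]=0$ and $\mathbb{E}[A_{nt}-\pi_{nt}\mid H_{nt}]=0$, all time cross-terms ($t\ne s$) in $W$ vanish, and the on-diagonal terms collapse via $\mathbb{E}[(A_{nt}-\pi_{nt})^2\mid H_{nt}]=\pi_{nt}(1-\pi_{nt})$, giving
\[
W \;=\; \sigma^{2}\,\mathbb{E}\!\left[\sum_{t=1}^{T}\frac{Z_{nt}Z_{nt}^\intercal}{\pi_{nt}(1-\pi_{nt})}\right].
\]

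The structural observation that unlocks the proof is: since $\pi_{\min}+\pi_{\max}=1$, both endpoints yield the common product $\pi_{\min}(1-\pi_{\min})=\pi_{\max}(1-\pi_{\max})=\triangle$, and because $\pi\mapsto\pi(1-\pi)$ is concave with maximum at $1/2\in[\pi_{\min},\pi_{\max}]$, every admissible $\pi_{nt}$ obeys the pathwise bound $\pi_{nt}(1-\pi_{nt})\geq\triangle$, no matter how adaptively the policy chose it. In the Loewner order this gives $Z_{nt}Z_{nt}^\intercal/(\pi_{nt}(1-\pi_{nt})) \preceq Z_{nt}Z_{nt}^\intercal/\triangle$ pointwise, and taking expectations (which preserves PSD ordering) yields $W \preceq (\sigma^2/\triangle)\,Q^{-1}$, which is precisely $W$ evaluated at the boundary probability used in Theorem~\ref{thm:theorem2}.

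Finally, plugging this bound into the sandwich form for $\Sigma_\delta$ from Theorem~\ref{thm:theorem1}, and combining the facts that matrix inversion reverses the Loewner order while the relevant congruences by $Q$ and $Q^{-1}$ preserve it, I would obtain $\Sigma_\delta^{-1} \succeq (\Sigma_\delta^{\mathrm{boundary}})^{-1}$, so that $c_N \geq c_{\beta_0}$ and therefore the asymptotic power is at least $1-\beta_0$. The main obstacle is essentially bookkeeping rather than anything probabilistic---keeping the direction of each Loewner inequality correct across the inversion and congruence steps, and verifying that the a.s.\ scalar bound on $\pi_{nt}(1-\pi_{nt})$ propagates through the expectation defining $W$. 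The probabilistic content is deliberately light because the clip $[\pi_{\min},\pi_{\max}]$ is enforced on every realization, which is exactly what makes the guarantee robust to any (even adversarial) adaptive choice of $\pi_{nt}$ by the outer algorithm.
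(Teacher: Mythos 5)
Your proposal is correct and follows the same overall strategy as the paper's proof: reduce the claim to showing $c_N \ge c_{\beta_0}$ via monotonicity of the non-central $\chi^2$ power function, simplify $W$ to $\sigma^2\,\mathbb{E}\bigl[\sum_t Z_{nt}Z_{nt}^\intercal/(\pi_{nt}(1-\pi_{nt}))\bigr]$ under the working-model and homoscedasticity assumptions, and exploit the pathwise bound $\pi_{nt}(1-\pi_{nt}) \ge \pi_{\min}(1-\pi_{\min}) = \pi_{\max}(1-\pi_{\max}) = \triangle$. The one genuine difference is how the monotonicity of $c_N$ in the weights is established: the paper computes $\partial c_N/\partial a_{nt}$ with $a_{nt} = 1/(\pi_{nt}(1-\pi_{nt}))$ treated as a free scalar parameter inside the expectation, and argues the derivative is nonpositive because $\mathbb{E}[Z_{nt}Z_{nt}^\intercal]$ is PSD; you instead push the almost-sure scalar bound through to a Loewner inequality $W \preceq (\sigma^2/\triangle)\,\mathbb{E}[\sum_t Z_{nt}Z_{nt}^\intercal]$ and then use order reversal under inversion plus order preservation under quadratic forms. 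Your route is arguably cleaner, since $a_{nt}$ is a history-dependent random variable rather than a deterministic parameter, so the Loewner argument handles the adversarial/adaptive choice of $\pi_{nt}$ without the informality of differentiating with respect to a random quantity; the paper's derivative calculation buys a slightly more quantitative picture of how $c_N$ degrades as exploration decreases, but both yield the same conclusion $c_N \ge c_{\beta_0}$. One bookkeeping caution: the main-text display $\Sigma_\delta = QW^{-1}Q$ is a typo for the appendix's $\Sigma_\delta = QWQ$ with $Q = \mathbb{E}[\sum_t Z_{nt}Z_{nt}^\intercal]^{-1}$; the inequality directions you state ($\Sigma_\delta^{-1} \succeq$ the boundary value) are the ones consistent with the correct sandwich form, so your chain goes through as written.
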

\begin{proof} (Sketch)
The right hand side of Equation ~\ref{eqn:power} is monotonically increasing with respect to $c_N$. The resulting power will be no smaller than $1-\beta_0$ as long as $c_N\geq c_{\beta_0}$. This holds when $\pi_{nt}\in[\pi_{\text{min}},\pi_{\text{max}}]$. The full proof is in Appendix~\ref{pf:apdx_thm3}.
\end{proof}

\section{Regret with Power-Constrained Bandits}
\label{sec:regret}

When running a mobile health study, the study designer may already know that certain regret minimization algorithms with certain assumptions will work well in their particular domain.  These assumptions often come from knowledge about the domain and the designer's experience from prior studies.  
Our contribution in this section is to provide very general ways for study designers to take their preferred contextual bandit algorithm and adapt it such that (a) one can perform sufficiently powered analyses of the treatment effect under very general assumptions and (b) the contextual bandit algorithm retains its original regret guarantees (among the set of algorithms that give sufficient powers).  

Said more formally, in Section~\ref{sec:power}, we provided an algorithm-agnostic way to guarantee a study's power constraints were met, in the very general setting described in Section~\ref{sec:background}. In practice, to facilitate personalized treatment design, developers often use bandit algorithms that make more specific environment assumptions than power analyses do.  Now, we consider, with the power constraints, how well we can do with respect to regret \emph{under the bandit algorithm's environment assumptions}. Because we  are constrained to policies that guarantee a certain amount of exploration, our goal is to preserve regret rates, now with respect to a clipped oracle, i.e. an oracle whose action probabilities $\pi_{nt}$ lie within $\pi_{\min}$ and $\pi_{\max}$. 
We first present some specific algorithms in which we can preserve regret rates with respect to a clipped oracle by simply clipping the action selection probability to lie in $[\pi_{\min}, \pi_{\max}]$. We then present very general wrapper algorithms with formal analyses that allow us to adapt a large class of existing algorithms while preserving regret rates.

We formally define the regret as,
\begin{equation}
\label{eqn:regret_rate}
\text{reg} =\mathbb{E}\left[\sum_{t=1}^T\max_{\substack{a, \pi^*}} \mathbb{E}[R_{nt}|A_{nt}=a,H_{nt}]\right]-\mathbb{E}\left[\sum_{t=1}^T R_{nt}\right]
\end{equation}
where $a\in\{0,1\},\pi^*\in[0,1]$,
and the regret with respect to clipped oracle as 
\begin{equation}
\label{eqn:regret_rate_wrt_clipped_oracle}
\text{reg}_\text{c} =\mathbb{E}\left[\sum_{t=1}^T\max_{\substack{a, \pi^*}} \mathbb{E}[R_{nt}|A_{nt}=a,H_{nt}]\right]-\mathbb{E}\left[\sum_{t=1}^T R_{nt}\right]
\end{equation}
where $a\in\{0,1\},\pi^*\in[\pi_{\min}, \pi_{\max}]$.
\subsection{Regret Rates of Specific Algorithms with Probability Clipping}
\label{subsec:regret_wt_clipping}
Before getting into the very general case (Section~\ref{subsec:regret_wrapper}), we note that in some cases, one can simply clip action probabilities and still achieve optimal regret with respect to a clipped oracle.  For example, Action-Centered Thompson Sampling (ACTS~\citep{greenewald2017action}) and Semi-Parametric Contextual Bandits (BOSE~\citep{Krishnamurthy2018}) have optimal first order regret with respect to a clipped oracle if one clips probabilities.  Both algorithms perform in non-stationary, adversarial settings where the features and rewards are a function of the current context $C_{nt}$ (unlike our full history $H_{nt}$). BOSE further assumes the noise term is action independent. Neither algorithms consider power; using our probabilities will result in optimal regret and satisfy the required power guarantees at the same time.

Other cases are more subtle but still work: for example, we can prove that clipped Linear Stochastic Bandits (OFUL) preserves regret with respect to a clipped oracle (the proof involves ensuring optimism under the constraint, see Appendix~\ref{subsec:apdx_alg_regret_bound}).

\subsection{Regret Rate of General Power-Preserving Wrapper Algorithms}
\label{subsec:regret_wrapper}
The above cases require a case-by-case analysis to determine if clipping probabilities would preserve regret rates (with respect to a clipped oracle).  Now we describe how to adapt a wide variety of bandit algorithms in a way that (a) guarantees sufficient power and (b) preserves regret rates with respect to a clipped oracle. 

We first present the main meta-algorithm, data dropping, where information is selectively shared with the algorithm. 
The key to guaranteeing good regret with this wrapper for a broad range of input algorithms $\mathcal{A}$ is to ensure that the input algorithm $\mathcal{A}$ only sees samples that match the data it would observe if \emph{it} was making all decisions.
 Denote the action probability given by a bandit algorithm $\mathcal{A}$ as $\pi_\mathcal{A}(C_{nt})$.  The algorithm works as follows:
\paragraph{Meta-Algorithm: Selective Data Dropping.}
\begin{enumerate}
    \item Produce $\pi_\mathcal{A}(C_{nt})$ as before. If sampling $A_{nt} \sim \texttt{Bern}(\pi_\mathcal{A}(C_{nt}))$ would have produced the same action as sampling $A'_{nt} \sim \texttt{Bern}(\texttt{clip}(\pi_\mathcal{A}(C_{nt})))$ (see detailed algorithm description in Appendix~\ref{sec:apdx_data_dropping} as to how to do this efficiently), then perform $A_{nt}$; else perform $A'_{nt}$.
    \item The algorithm $\mathcal{A}$ stores the tuple $C_{nt} , A_{nt}, R_{nt}$ \emph{if} $A_{nt}$ was performed; else it stores nothing from that interaction.
    \item The scientist \emph{always} stores the tuple $C_{nt} , A'_{nt}, R_{nt}$
\end{enumerate}
\begin{theorem}
\label{thm:data_dropping}
Given input $\pi_{\min},\pi_{\max}$ and a contextual bandit algorithm $\mathcal{A}$. Assume algorithm $\mathcal{A}$ has a regret bound $\mathcal{R}(T)$ and that one of the following holds for the setting $\mathcal{B}$: (1) under $\mathcal{B}$ the data generating process for each context is independent of history, or (2) under $\mathcal{B}$ the context depends on the history, and the bound $\mathcal{R}$ for algorithm $\mathcal{A}$ is robust to an adversarial choice of context.

Then our wrapper algorithm will (1) return a dataset that  satisfies the desired power constraints under the data generation process of Section ~\ref{sec:background} and (2) has expected regret no larger than $ \mathcal{R}(\pi_{\text{max}}T) + (1-\pi_{\text{max}})T$ if assumptions of $\mathcal{B}$ are satisfied in the true environment.
\end{theorem}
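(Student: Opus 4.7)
The proof naturally splits into two parts corresponding to the two claims. For the power claim, the argument is essentially immediate from Theorem~\ref{thm:theorem3}: by construction, the scientist always executes and records the action $A'_{nt} \sim \mathrm{Bern}(\mathrm{clip}(\pi_\mathcal{A}(C_{nt})))$, so the realized randomization probabilities stored in the scientist's dataset lie in $[\pi_{\min}, \pi_{\max}]$ by construction, regardless of what $\mathcal{A}$ proposes. I would simply verify that the hypotheses of Theorem~\ref{thm:theorem3} transfer to the wrapped process (the assumptions in Section~\ref{sec:background} on the treatment effect model and noise are statements about the environment, not the algorithm) and conclude that the resulting power is at least $1-\beta_0$.

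For the regret claim, the plan is to couple the unclipped draw $A_{nt}$ and the clipped draw $A'_{nt}$ using a single uniform $U_{nt} \sim \mathrm{Uniform}(0,1)$, letting $A_{nt} = \mathbf{1}\{U_{nt} < \pi_\mathcal{A}(C_{nt})\}$ and $A'_{nt} = \mathbf{1}\{U_{nt} < \mathrm{clip}(\pi_\mathcal{A}(C_{nt}))\}$. A short calculation then gives
\begin{equation*}
\Pr(A_{nt} \neq A'_{nt} \mid C_{nt}) = |\pi_\mathcal{A}(C_{nt}) - \mathrm{clip}(\pi_\mathcal{A}(C_{nt}))| \le 1 - \pi_{\max},
\end{equation*}
using $\pi_{\min} = 1 - \pi_{\max}$ from Theorem~\ref{thm:theorem2}. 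I would call rounds with $A_{nt} = A'_{nt}$ \emph{matched} and the rest \emph{mismatched}, and split the regret accordingly. Because rewards are bounded (or at least the per-step suboptimality gap is), mismatched rounds contribute at most a constant per round in expectation, giving a total contribution of at most $(1-\pi_{\max})T$ after summing.

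For matched rounds, the key observation is that the tuples $(C_{nt}, A_{nt}, R_{nt})$ that $\mathcal{A}$ stores are exactly those it would have seen had it been in control, because on those rounds the executed action equals what $\mathcal{A}$ would have drawn from its own $\pi_\mathcal{A}$. Under case (1) of the theorem, contexts are drawn independently of history, so dropping intervening rounds does not alter the marginal context distribution, and the matched subsample is distributionally identical to the first $K$ steps of $\mathcal{A}$ running in $\mathcal{B}$, where $K$ is the (random) number of matched rounds; $\mathcal{A}$'s regret guarantee $\mathcal{R}(\cdot)$ thus applies directly with input $K$. Under case (2), the contexts along the wrapped trajectory may differ from those $\mathcal{A}$ would have generated, but since $\mathcal{R}$ is assumed robust to adversarial contexts, it still upper-bounds the regret of $\mathcal{A}$ on its observed rounds. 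Since $\mathbb{E}[K] \ge \pi_{\max} T$ by the coupling bound, and $\mathcal{R}$ is typically concave (or at least non-decreasing sublinear) in its argument, Jensen's inequality or monotonicity gives $\mathbb{E}[\mathcal{R}(K)] \le \mathcal{R}(\pi_{\max} T)$. Combining the two contributions yields the stated $\mathcal{R}(\pi_{\max}T) + (1-\pi_{\max})T$ bound.

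The main obstacle I anticipate is the clean handling of case (2): one must argue that the ``virtual'' history seen by $\mathcal{A}$ (formed from only the matched rounds) constitutes a legitimate input for its adversarial regret bound, even though the contexts on mismatched rounds were generated under a different action sequence. This is where the robustness assumption in the theorem statement does the work, and I would make the reduction explicit by describing an adversary that realizes exactly the wrapper's context sequence on matched rounds. A minor secondary subtlety is ensuring that the bound $\mathbb{E}[\mathcal{R}(K)] \le \mathcal{R}(\pi_{\max} T)$ is invoked under an appropriate monotonicity or concavity condition on $\mathcal{R}$; for the standard rates of ACTS, BOSE, and OFUL this is automatic, but it should be noted.
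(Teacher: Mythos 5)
Your proposal follows essentially the same route as the paper's proof: power is satisfied by construction since the executed action is always drawn from the clipped probability, and regret is decomposed into mismatched rounds (at most $(1-\pi_{\max})T$ of them in expectation, each costing at most a constant) and matched rounds, on which $\mathcal{A}$ sees exactly the self-consistent data stream it would have generated on its own, so its bound $\mathcal{R}(\cdot)$ applies. Your explicit single-uniform coupling is precisely what Algorithm~\ref{alg:apdx_dropping} implements (lines 9--22), and your remark about case (2) matches the paper's closing caveat.

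One step is stated with the inequality pointing the wrong way: from $\mathbb{E}[K]\ge \pi_{\max}T$ and monotonicity/concavity of $\mathcal{R}$ you get $\mathbb{E}[\mathcal{R}(K)]\le \mathcal{R}(\mathbb{E}[K])$, but $\mathcal{R}(\mathbb{E}[K])\ge \mathcal{R}(\pi_{\max}T)$ for increasing $\mathcal{R}$, so a \emph{lower} bound on $\mathbb{E}[K]$ cannot yield the claimed upper bound $\mathcal{R}(\pi_{\max}T)$; since $K\le T$ always, the clean conclusion for the matched rounds is $\mathcal{R}(T)$. The equality $\mathbb{E}[K]=\pi_{\max}T$ (and hence the Jensen bound $\mathbb{E}[\mathcal{R}(K)]\le\mathcal{R}(\pi_{\max}T)$) holds only in the worst case where $\mathcal{A}$ is deterministic, which is implicitly the regime the paper argues in; for the general case the honest bound is $\mathcal{R}(T)+(1-\pi_{\max})T$, which is of the same order as the stated bound for sublinear $\mathcal{R}$ (cf. the paper's own corollary, which is phrased with an $O(\cdot)$). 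The paper's proof is equally informal on this point, so this is a wrinkle to acknowledge rather than a failure of the approach.
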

\begin{proof}(Sketch)
The key to guaranteeing good regret with this wrapper for a broad range of input algorithms $\mathcal{A}$ is in deciding what information we share with the algorithm. The context-action-reward tuple from that action is only shared with the input algorithm $\mathcal{A}$ if $\mathcal{A}$ would have also made that same decision. This process ensures that the input algorithm $\mathcal{A}$ only
sees samples that match the data it would observe if it was making all decisions. Hence, the environment $\Omega$ remains closed when data are dropped and the expected regret rate is no worse than $R(\pi_{\max}T)$ with respect to a clipped oracle. The full proof is in Appendix Section~\ref{sec:apdx_data_dropping}.
\end{proof}
The data dropping strategy can be applied to two general classes of algorithms described in Theorem~\ref{thm:data_dropping} (e.g. OFUL belongs to setting (1), ACTS and BOSE belong to setting(2)).  It is simple to implement and gives good regret rates (Section~\ref{subsec:results}). 
In addition to data dropping, there are alternative ways to adapt algorithms and still preserve the regret rates with respect to a clipped oracle. Next, we present another simple meta-algorithm, action flipping, which encourages exploration by taking the action output by any algorithm and flipping it with some probability. While action flipping has nice asymptotic properties, in Section~\ref{subsec:results}, we will see that it can result in extra power and  high regret due to over-exploration and extra stochasticity of the agent's perceived environment. 
\paragraph{Meta-Algorithm: Action-Flipping.} 
The pseudocode is given as follows:
\begin{enumerate} 
    \item Given current context $C_{nt}$, algorithm $\mathcal{A}$ produces action probabilities $\pi_\mathcal{A}(C_{nt})$
    \item  Sample $A_{nt} \sim \texttt{Bern}(\pi_\mathcal{A}(C_{nt}))$.
    \item If $A_{nt}=1$, sample  $A_{nt}^\prime \sim \texttt{Bern}(\pi_{\max})$.  If $A_{nt}=0$, sample  $A_{nt}^\prime \sim \texttt{Bern}(\pi_{\min})$.
    \item We perform $A'_{nt}$ and receive reward $R_{nt}$.
    \item The algorithm $\mathcal{A}$ stores the tuple $C_{nt} , A_{nt}, R_{nt}$.  (Note that if $A_{nt}$ and $A'_{nt}$ are different, then, unbeknownst to the algorithm $\mathcal{A}$, a different action was actually performed.)
    \item The scientist stores the tuple $C_{nt} , A'_{nt} , R_{nt}$ for their analysis.
\end{enumerate}
Let $A'_{nt} = G_(A_{nt})$ denote the stochastic transformation by which the wrapper above transforms the action $A_{nt}$ from algorithm $\mathcal{A}$ to the new action $A'_{nt}$.  
Suppose that the input algorithm $\mathcal{A}$ had an regret rate $\mathcal{R}(T)$ for a set of environments $\Omega$ (e.g. assumptions on distributions of $\{C_{nt}, R_{nt}(0), R_{nt}(1)\}_{t=1}^T $).  We give conditions under which the altered algorithm $\mathcal{A}$, as described above, will achieve the same rate against a clipped oracle: 
\begin{theorem}
\label{thm:wrapper}
Given $\pi_{\min},\ \pi_{\max}$ and a contextual bandit algorithm $\mathcal{A}$, 
assume that algorithm $\mathcal{A}$ has expected regret  $\mathcal{R}(T)$ for any environment in  $\Omega$, 
with respect to an oracle $\mathcal{O}$.
If there exists an environment  in
$\Omega$ such that the potential rewards, $R'_{nt}(a) = R_{nt}(G(a))$, for $a\in\{0,1\}$,
then the wrapper algorithm will  (1) return a data set that  satisfies the desired power constraints 
and (2) have expected regret no larger than $\mathcal{R}(T)$ with respect to a clipped oracle $\mathcal{O}'$. 
\end{theorem}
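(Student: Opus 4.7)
The plan is to split the argument in two: first, show that the sequence of executed actions $\{A'_{nt}\}$ satisfies the hypothesis of Theorem~\ref{thm:theorem3} and hence inherits the desired power; second, transfer the regret bound of $\mathcal{A}$ from an auxiliary ``transformed'' environment---supplied by the closure hypothesis on $\Omega$---back to the actual environment.

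For the power claim, I would condition on $H_{nt}$ and unpack the two-stage sampling in the meta-algorithm to obtain
\[
P(A'_{nt}=1 \mid H_{nt}) \;=\; \pi_{\min}\bigl(1-\pi_{\mathcal{A}}(C_{nt})\bigr) + \pi_{\max}\,\pi_{\mathcal{A}}(C_{nt}),
\]
which is a convex combination of $\pi_{\min}$ and $\pi_{\max}$ and therefore lies in $[\pi_{\min},\pi_{\max}]$. Theorem~\ref{thm:theorem3} then yields the stated power immediately.

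For the regret claim, the key observation is that from algorithm $\mathcal{A}$'s perspective each round looks like a single interaction with an environment whose potential rewards are $R'_{nt}(a) = R_{nt}(G(a))$: when $\mathcal{A}$ proposes $A_{nt}$ and stores $(C_{nt},A_{nt},R_{nt})$, the reward it records is exactly $R'_{nt}(A_{nt})$. By the closure hypothesis this transformed environment lies in $\Omega$, so $\mathcal{A}$ attains expected regret at most $\mathcal{R}(T)$ against its oracle $\mathcal{O}$ there. I then need to identify the per-step expected reward of $\mathcal{O}$ in the transformed environment with that of the clipped oracle $\mathcal{O}'$ in the original: a case split on the sign of $\mathbb{E}[R_{nt}(1) - R_{nt}(0)\mid H_{nt}]$ shows both equal $(1-\pi_{\max})\mathbb{E}[R_{nt}(0)\mid H_{nt}] + \pi_{\max}\mathbb{E}[R_{nt}(1)\mid H_{nt}]$ in one case and $(1-\pi_{\min})\mathbb{E}[R_{nt}(0)\mid H_{nt}] + \pi_{\min}\mathbb{E}[R_{nt}(1)\mid H_{nt}]$ in the other, because the optimal action $a\in\{0,1\}$ in the transformed environment corresponds exactly to clipping the unrestricted oracle's probability to $\pi_{\max}$ or $\pi_{\min}$. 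Since the wrapper's realized reward coincides with the realized reward of $\mathcal{A}$ in the transformed environment (both are $R_{nt}(A'_{nt}) = R'_{nt}(A_{nt})$), subtracting these identities yields $\mathrm{reg}_{\mathrm{c}} \le \mathcal{R}(T)$.

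The main obstacle I expect is making the ``$\mathcal{A}$ sees a legitimate instance of $\Omega$'' step watertight: the stochastic transformation $G$ perturbs reward marginals in an action-dependent fashion, and for algorithms whose guarantee $\mathcal{R}(T)$ depends on fine distributional assumptions (sub-Gaussian noise, linear means, independence of noise from action, etc.) one must check that those assumptions survive the transformation. The theorem statement packages this requirement into its hypothesis by assuming such a member of $\Omega$ exists, so formally the proof reduces to invoking that hypothesis---but in applications this is the condition that would need to be checked case by case. Everything else is the bookkeeping sketched above.
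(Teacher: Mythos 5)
Your proposal is correct and follows essentially the same route as the paper's proof: the power claim via showing $P(A'_{nt}=1\mid H_{nt})=\pi_{\min}(1-\pi_{\mathcal{A}})+\pi_{\max}\pi_{\mathcal{A}}\in[\pi_{\min},\pi_{\max}]$ and invoking Theorem~\ref{thm:theorem3}, and the regret claim by viewing $\mathcal{A}$ as running in the transformed environment $R'_{nt}(a)=R_{nt}(G(a))$ (assumed to lie in $\Omega$) and identifying its oracle's value with that of the clipped oracle. Your explicit case split on the sign of the treatment effect just spells out the identity $\mathbb{E}[R_{nt}(\mathcal{O}')]=\mathbb{E}[R_{nt}(G(\mathcal{O}))]$ that the paper asserts directly, and your closing caveat about verifying membership in $\Omega$ is exactly what the paper's Lemma and Remarks in the appendix address case by case.
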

\begin{proof} (Sketch)
Our wrapper algorithm makes the input algorithm $\mathcal{A}$ believe that the environment is more stochastic than it is.  If algorithm $\mathcal{A}$ achieves some rate in this more stochastic environment, then it will be optimal with respect to the clipped oracle. Full proof in Appendix Section~\ref{sec:apdx_action_flip}.
\end{proof}
There exists many environments $\Omega$ which are closed under the reward transformation above, including ~\citet{abbasiyadkori2011,agrawal2012thompson,langford2007epoch}.
In Appendix~\ref{sec:apdx_action_flip}, we describe a large number of settings, including stochastic contextual bandits and adversarial contextual bandits, in which this wrapper could be used.

\section{Experiments \& Results}
\label{sec:experiments}
In clinical studies, power analyses are often conducted before the data collection process to help the scientists to determine the smallest sample size that is needed in order to detect a certain level of treatment effect. To estimate the power accurately, multiple runs of a study are needed to compute the proportion of times a treatment effect is detected (if it exists). Collecting preliminary data for this process would often be prohibitively expensive and thus simulations are often used for power analyses.
In this work, we demonstrate the properties of our power-constrained bandits on a realistic mobile health simulator\footnote{Our code is public at \url{https://github.com/dtak/power-constrained-bandits-public}.}.

\subsection{Realistic Mobile Health Simulator} To demonstrate our approaches on real life tasks, we utilize a mobile health simulator that was introduced in \cite{liao2015micro} and was  
motivated by the HeartSteps mobile health application. HeartSteps aims to encourage physical activities in users by sending suggestions for a walk tailored to the user's current context, such as user's location and current events based on the user's calendar. 
The suggestions will be sent during  morning commute, mid-day, 
mid-afternoon, evening commute, and post-dinner times, which encourages the user to take a walk in the next few hours. Our mobile health simulator mimics the data generating process of HeartSteps. In this task, we aim to detect a certain amount of treatment effect (how much more physically active users become on average) as well as increase physical activity for each user as much as possible.

 In real studies, the number of users and study length will be provided by domain experts. In our simulations, we chose a sample size that is close to real life and is large enough so that the power constraint will be met under maximal exploration (when $\pi$=0.5).  Specifically, we collected $N=20$ users and
 each simulated user $n$ participates for $90$ days. The action $A_{nt}=1$ represents a message is delivered while $A_{nt}=0$ represents not, and the reward $R_{nt}$ represents the square root of the step count at day $t$. The marginal reward, $\gamma_{nt}$, decreases linearly over time as people engage more at the start of the study. The feature vector $Z_{nt}$ is created by experts such that the treatment effect $Z_{nt}^\intercal \delta_0$ starts small at day $0$, as people have not developed the habits of increasing physical activity, then peaks at day $45$, and decays to $0$ at day $90$ as people disengage. The noise  $\epsilon_{nt}$ follows an AR(1) process. 
 We generated $1,000$ simulated data based on a desired standard error level. Generating multiple datasets corresponds to  running a specific study multiple times, which allows us to calculate how often---if one could run a study multiple times---one would correctly detect the treatment effect. The desired Type 1 error is set to $\alpha_0=0.05$ and the desired power to $1-\beta_0=0.8$.
See simulation details in Appendix~\ref{subsec:apdx_mobile_environments}.

\subsection{Test Statistics, Baselines and Metrics}
\label{subsec:settings}
\paragraph{Test Statistic Calculation}
To calculate the test statistic $N\hat\delta^\intercal{\hat\Sigma_\delta}^{-1}\hat\delta$, $\hat\delta$ and $\hat\Sigma_\delta$ are needed. 
For the $s^{th}$ simulation dataset, ${\hat\delta}^{(s)}$ can be obtained from ${\hat\theta}^{(s)}$ where ${\hat\theta}^{(s)}$ is estimated with Equation~\ref{eqn:theta_hat}. With all simulated datasets,  $\hat\Sigma_\delta$ can be obtained from $\hat\Sigma_{\theta}=\begin{bmatrix}\hat\Sigma_{\gamma}&\hat\Sigma_{\gamma\delta}\\ \hat\Sigma_{\delta\gamma}&\hat\Sigma_{\delta}\end{bmatrix}$ where $\hat\Sigma_\theta$ is estimated with
\begin{equation}\scriptsize
\begin{aligned}
\hat\Sigma_\theta &= \bigg(\frac{1}{N}\sum_{n=1}^N\sum_{t=1}^T\frac{X_{nt}X_{nt}^\intercal}{\pi_{nt}(1-\pi_{nt})}\bigg)^{-1}
\left(\frac{1}{N}\sum_{n=1}^N\sum_{t=1}^T\frac{\hat\epsilon_{nt}X_{nt}}{\pi_{nt}(1-\pi_{nt})}\right)\\
&\times\quad\left(\frac{1}{N}\sum_{n=1}^N\sum_{t=1}^T\frac{\hat\epsilon_{nt}X_{nt}^\intercal}{\pi_{nt}(1-\pi_{nt})}\right)
\bigg(\frac{1}{N}\sum_{n=1}^N\sum_{t=1}^T\frac{X_{nt}X_{nt}^\intercal}{\pi_{nt}(1-\pi_{nt})}\bigg)^{-1}
\end{aligned}
\label{eqn:sigmahat}
\end{equation}
where $\hat\epsilon_{nt}=R_{nt}-X^\intercal_{nt}\hat\theta$. Equation~\ref{eqn:sigmahat} is derived in Appendix Section~\ref{pf:apdx_thm1}.
The test statistics $\{N{\hat\delta}^{(s)\intercal}{\hat{\Sigma}_\delta}^{-1}{\hat\delta}^{(s)}\}_{s=1}^{1000}$ follow the distribution  in Section~\ref{sec:background}. 
\paragraph{Baselines}
To our knowledge, bandit algorithms with power guarantees are novel.  Thus, we compare our power-preserving strategies applied to various algorithms focused on minimizing regret: ACTS, BOSE, and linear Upper Confidence Bound (linUCB~\citep{chu2011contextual}, which is similar to OFUL but simpler to implement and more commonly used in practice).  We also include the performance of a Fixed Policy ($\pi_{nt}=0.5$ for all $n,t$), a clipped (power-preserving) oracle, 
and standard (non-power preserving) oracle (details in Appendix~\ref{sec:apdx_algo}). 

\paragraph{Metrics}
 For each algorithm, we compute the resulting Type 1 error, the resulting power (under correct and incorrect specifications of various model assumptions in Section~\ref{sec:background}), the regret with respect to a standard oracle (Equation~\ref{eqn:regret_rate}), the regret with respect to a clipped oracle(Equation~\ref{eqn:regret_rate_wrt_clipped_oracle}), 
and the average return $\left(\text{AR} =\mathbb{E}\left[\sum_{t=1}^T R_{nt}\right]\right)$.

\paragraph{Hyperparameters}
All the algorithms require hyperparameters, which are selected by maximizing the average return.
The same parameter values are used in the adapted and non-adapted versions of the algorithms. (All hyperparameter settings in Appendix ~\ref{apdx:subsec_hyperparameter}).

\subsection{Results}
\label{subsec:results}

\begin{figure}[h]
    \centering
    \subfigure[Type 1 error (Mobile Health)]{
    \includegraphics[width=0.38\textwidth,valign=t]{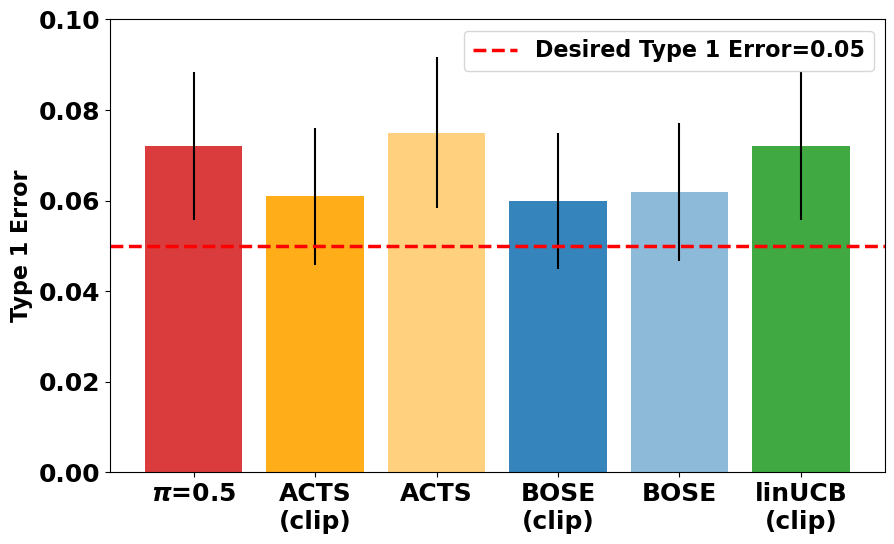}
    \label{fig:mb_type_i}
    }
    \centering
    \subfigure[AR v.s. Power (Mobile Health)]{
    \includegraphics[width=0.38\textwidth,valign=t]{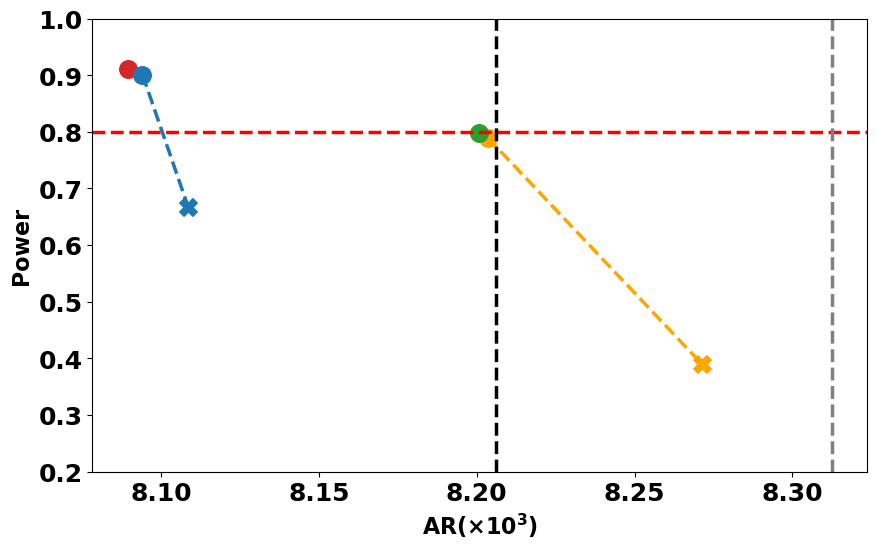}
    \includegraphics[width=0.15\textwidth,valign=t]{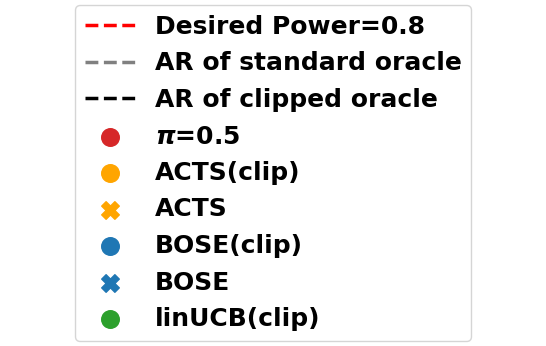}
     \label{fig:mb_regret_vs_power}
     }
     \caption{(a)Type 1 error with 95\% Confidence Interval: We denote estimated Type 1 error as ${\hat\alpha}_0$. 95\% C.I. = 2$\sqrt{{\hat\alpha}_0(1-{\hat\alpha}_0)/S}$ where
$S = 1000$. The red dashed line denotes the desired Type 1 error and the black bar denotes 95\% C.I.. 
We see that some Type 1 errors are slightly higher than 0.5. (b)Average Return v.s. Resulting power: $x$-axis denotes average return and $y$-axis denotes the resulting power. Clipping preserves the power. Power tends to decrease as average return increases. BOSE has the best power with the worst average return. ACTS and linUCB perform similarly in term of power and average return. }
\end{figure}

\textbf{When there is no treatment effect, we recover the correct Type 1 error.} Before power analysis, a basic but critical question is whether we achieve the correct Type 1 error when there is no treatment effect. We have shown in Theorem~\ref{thm:theorem3} that Type 1 error will be trivially guaranteed when the null hypothesis is true.
In Figure~\ref{fig:mb_type_i}, we see that when there is no treatment effect (the messages delivered fail to encourage the user for more physical activity), some Type 1 errors are slightly higher than $0.05$.  This makes sense as the estimated covariance $\hat{\Sigma}_\delta$ is biased downwards due to sample size~\citep{mancl2001covariance}; if needed, this could be controlled by various adjustments or by using critical values based on Hotelling's $T^2$ distribution instead of $\chi^2$ distribution.
% \vskip -0.15cm
\begin{figure}[h]
    \centering
    \subfigure[Mis-estimated Treatment Effect Size]{
    \includegraphics[width=0.32\textwidth,valign=t]{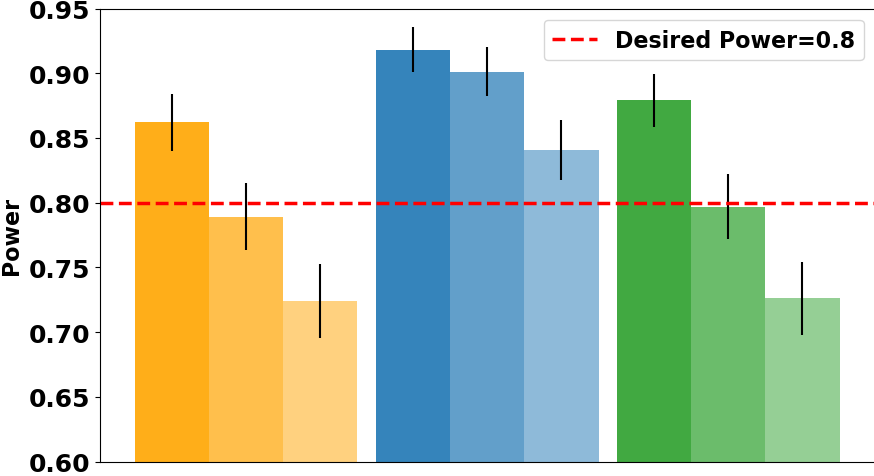}
    \includegraphics[width=0.15\textwidth,valign=t]{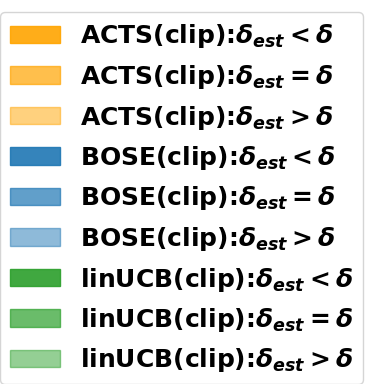}
    \label{fig:mb_treatment}
    }
     \subfigure[Mis-specified Noise Model]{
    \includegraphics[width=0.32\textwidth,valign=t]{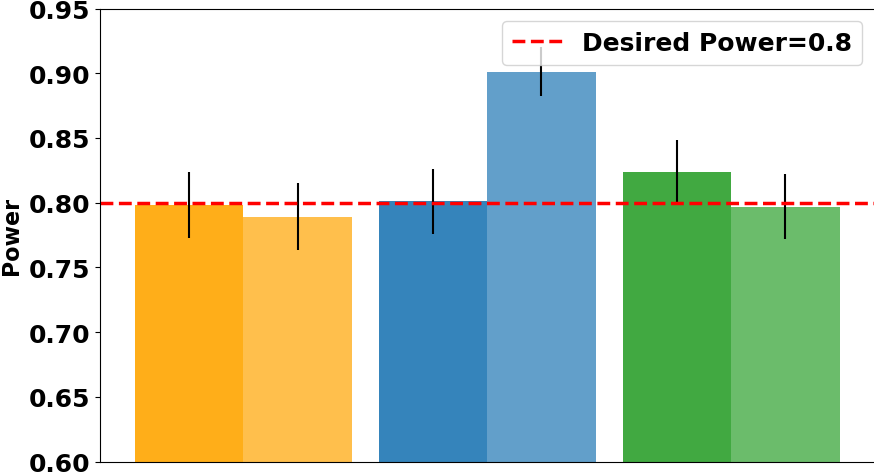}
    \includegraphics[width=0.15\textwidth,valign=t]{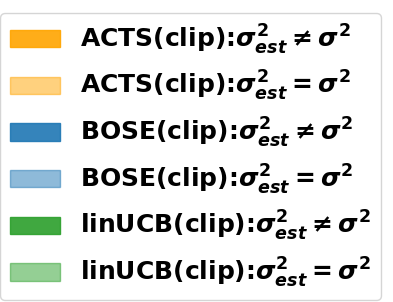}
    \label{fig:mb_noise}
    }
    \subfigure[ Mis-specified Marginal Reward Model]{
    \includegraphics[width=0.32\textwidth,valign=t]{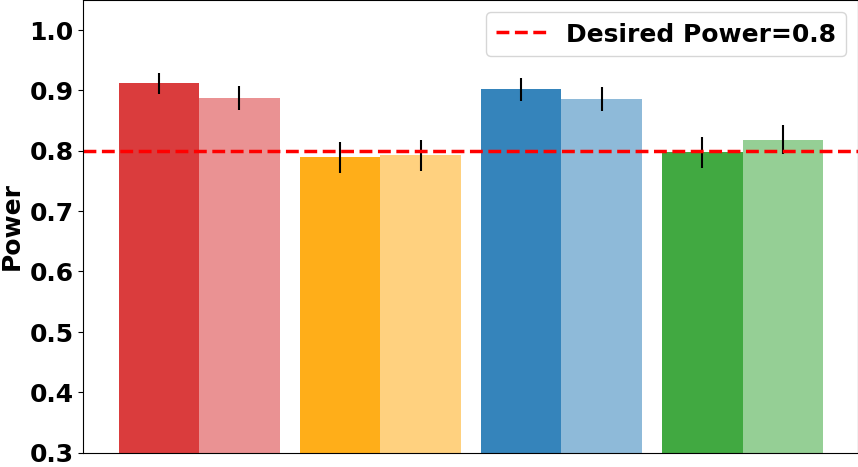}
    \includegraphics[width=0.15\textwidth,valign=t]{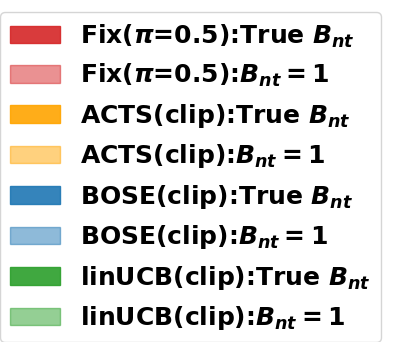}
    \label{fig:mb_marginal_reward_approx}
    }
    \subfigure[Mis-specified Treatment Effect Model]{
    \includegraphics[width=0.32\textwidth,valign=t]{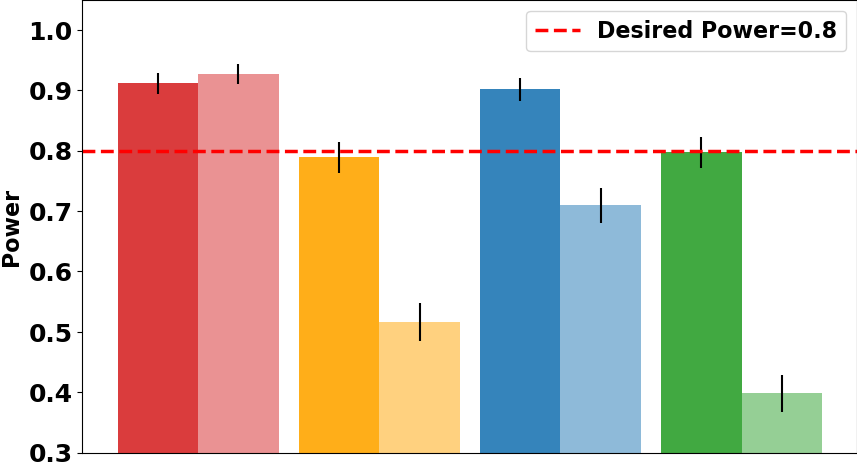}
    \includegraphics[width=0.15\textwidth,valign=t]{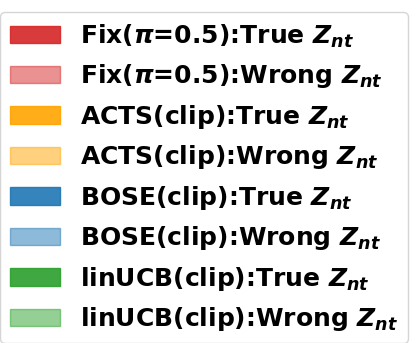}
    \label{fig:mb_treatment_model_approx}
    }
    \caption{Robustness of power guarantees: (a)Effect of mis-estimated treatment effect size on power: When $Z_t\delta_{est}<Z_t\delta_0$, power is higher and when $Z_t\delta_{est}>Z_t\delta_0$, power is lower. Clipped BOSE is the most robust algorithm of all. (b)Effect of mis-specified noise model on power: All algorithms are robust to the specific noise mis-specification in mobile health with all bars meeting the red dashed line. (c)Effect of marginal reward model mis-specification: All the algorithms are robust to marginal reward mis-specification with fixed policy decrease the most. (d)Effect of treatment effect model mis-specification on power: excluding a key feature can have a large impact on the power with clipped linUCB dropping to around $0.4$.}
    \label{fig:mb_robustness test}
\end{figure}

\textbf{When there is a treatment effect, we recover the correct power if we guessed the effect size correctly.} From Figure~\ref{fig:mb_regret_vs_power}, we see that, without clipping, the desired power cannot be achieved  while clipped algorithms recover the correct power (All crosses are below the red line while all circles are above).  
Fixed Policy ($\pi =0.5$) achieves the highest power because the exploration is maximal. Clipped BOSE performs similarly to Fixed Policy. 
For both clipped ACTS and clipped linUCB, the power is approximately $0.80$. Our test statistic relies on a stochastic policy (Theorem~\ref{thm:theorem1}) and is thus not compatible with linUCB's deterministic policy. 

\textbf{There can be a trade-off between regret and the resulting power.}
Figure~\ref{fig:mb_regret_vs_power} also shows  that the average return often increases as the power  decreases overall. For example, Fixed Policy ($\pi =0.5$) gives us the highest power but the lowest average return. Without probability clipping, ACTS and BOSE achieve higher average return but result in less power. For clipped BOSE, the decrease in average return is not significant: the users take
around 100 steps less on each day in average.

\textbf{The power is reasonably robust to a variety of
model mis-specifications, e.g. mis-estimated treatment effect size, mis-estimated noise level, mis-specified marginal reward model (Equation~\ref{eqn:working_model}) and treatment effect model  (Equation~\ref{eqn:true_reward})}. 

\emph{Treatment effect size mis-specification.} We tested when the estimated treatment effect is larger and  smaller than the true treatment effect (The message encourages the users to have more or less physical activities than they truly do). As expected from Theorem~\ref{thm:theorem2}, Figure~\ref{fig:mb_treatment} shows that underestimation results in more exploration, and thus higher power while 
overestimation results in less exploration and lower power.

\emph{Noise Model Mis-specification.} We test the robustness of power against mis-estimated noise variance. For this experiment, we set up the simulator in a way to mimic the data pattern that during the weekend, the user's behavior has more stochasticity due to less motivation. Specifically, we let the noise variance of the weekend to be $1.5^2$ larger than that of the weekdays. The estimated variance is calculated using the average variance over time $\sigma^2_{est}=\frac{1}{T}\sum_{t=1}^T \sigma^2_t$. Figure~\ref{fig:mb_noise} shows that all algorithms are robust to this specific noise mis-specification with all bars meeting the desired power.

\emph{Marginal Reward Model Mis-specification.} Marginal reward mis-specification will also affect the power. In this case, we can prove that when the marginal reward model is mis-specified, the resulting power will decrease (Appendix \ref{pf:apdx_thm4}). The amount of decrease in power, however, may vary, and experimentally we confirm that the effect is insignificant. For this experiment, we approximate the marginal reward, which starts at a large value and decays to $0$ linearly over time, as a constant. From Figure~\ref{fig:mb_marginal_reward_approx}, we see that in this case, all algorithms perform robustly with the heights of the bars remain almost the same.

\emph{Treatment Effect Model Mis-specification.} To see the effect of mis-specified treatment effect models, we consider the case where the constructed feature space is smaller than the true feature space (i.e. experts mistakenly exclude some relevant features). For this experiment, we drop the last dimension of the feature vector $Z_{nt}$ provided by the experts. For mobile health, it turns out that excluding a key feature can have a big effect: In Figure~\ref{fig:mb_treatment_model_approx}, the power of clipped linUCB drops to
around $0.4$.

\begin{figure}[h]
    \centering
    \includegraphics[scale=0.32,valign=t]{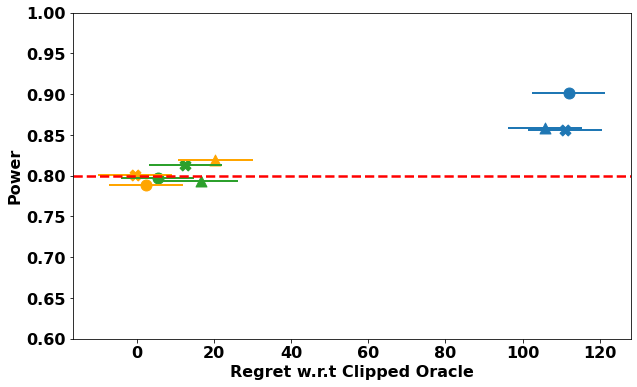}
    \includegraphics[scale=0.4,valign=t]{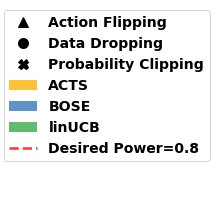}
    \caption{Regret w.r.t clipped oracle v.s. Resulting power for different wrapper algorithms: $x$-axis is regret with respect to clipped oracle and $y$-axis is the resulting power. In mobile health simulator, probability clipping, data dropping and action flipping perform similarly.}
     \label{fig:mb_regret_vs_power_wrapper}
\end{figure}
% \vskip-0.3cm

\textbf{Different algorithms have different regrets, but all still converge as expected with respect to the clipped oracle.} 
Based on Figure~\ref{fig:mb_regret_vs_power}, overall, the regret of clipped algorithms with respect to a clipped oracle is on the same scale as the regret of non-clipped algorithms with respect to a non-clipped oracle (The distance between crosses (x) and the grey dashed line, and the distance between circles (o) and the black dashed line are similar). Our results support the claims in Section~\ref{subsec:regret_wt_clipping} that for specific algorithms we tested, clipping preserves regret rates with respect to the clipped oracle.

\textbf{All wrapper algorithms achieve good regret rate with slightly different trade-offs given the situation}. Figure~\ref{fig:mb_regret_vs_power_wrapper} shows that, for BOSE and linUCB algorithms, all three strategies perform similarly in terms of power and regret. For ACTS, action flipping results in highest regret and highest power due to more exploration and environment stochasticity.

\subsection{Additional Benchmark Environments}
To show the generality of our approach, we also test our algorithms in standard semiparametric and adversarial semiparametric settings whose results are included in Appendix~\ref{subsec:res_benchmark}. In general, the strong results on the mobile health simulator still hold: (1) When the model is correctly specified, we can recover the correct Type 1 error and the correct power with probability clipping. (2) In general, we see a trade-off between average return and resulting power. (3) Our approaches are robust to various model mis-specifications possibly showing up in the clinical studies. (4) Our adapted algorithms are able to retain their original regret guarantees with respect to the clipped oracle. Additionally, similar to Figure~\ref{fig:mb_regret_vs_power_wrapper}, we see that action flipping can perform badly in terms 
of regret comparing to the other two meta-algorithms.

\section{Directions for Extensions}
Our work provides a very general approach to adapting existing contextual bandit algorithms to guarantee sufficient study power in the kinds of very general settings necessary for mobile health studies, while enabling effective personalization.  In this section, we sketch a few extensions to apply our approach to an even broader range of applications.

\paragraph{Extensions to Markov Decision Processes (MDPs)} While we focus on bandits in this work, in some mobile health applications, it might be more reasonable to assume that data are generated from an MDP, where the current state depends on the previous state. For example, in mobile apps for self-management of diabetes, the food intake will have an affect on the patient's glucose level in the next hour.   
Since our power guarantees allow the feature $Z_{nt}$ to be a function of the full history $H_{nt}$, our results in Section~\ref{sec:power} give us the power to identify marginal treatment effects \emph{even if the environment is an MDP}.  The action flipping strategy of Section~\ref{subsec:regret_wrapper} yields the following corollary to Theorem~\ref{thm:wrapper} (proof in Appendix~\ref{subsec:apdx_mdp}):

\begin{corollary}
Given $\pi_{\min},\ \pi_{\max}$ and an MDP algorithm $\mathcal{A}$, 
assume that algorithm $\mathcal{A}$ has an expected regret  $\mathcal{R}(T)$ for any MDP environment in  $\Omega$, 
with respect to an oracle $\mathcal{O}$.
Under stochastic transformation $G$, if there exists an environment  in
$\Omega$ that contains the new transition probability function:
$$P_{s,s'}^{'a}=\left(\pi^a_{\min}\pi^{1-a}_{\max}P_{s,s'}^0+\pi^{1-a}_{\min}\pi^a_{\max} P_{s,s'}^1\right),$$
then the wrapper algorithm will  (1) return a data set that  satisfies the desired power constraints 
and (2) have expected regret no larger than $\mathcal{R}(T)$ with respect to a clipped oracle $\mathcal{O}'$. 
\end{corollary}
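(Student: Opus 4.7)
The plan is to mirror the structure of the proof of Theorem~\ref{thm:wrapper}, now adapted to the MDP setting. First, I would dispatch the power claim: because the wrapper executes $A'_{nt}\sim\mathrm{Bern}(\pi_{\min})$ when $\mathcal{A}$ outputs $0$ and $A'_{nt}\sim\mathrm{Bern}(\pi_{\max})$ when $\mathcal{A}$ outputs $1$, the conditional probability $P(A'_{nt}=1\mid H_{nt})$ lies in $\{\pi_{\min},\pi_{\max}\}\subseteq[\pi_{\min},\pi_{\max}]$ for every $(n,t)$. Theorem~\ref{thm:theorem3} then applies verbatim, since its statement is agnostic to how the randomization probabilities are chosen (including states evolving as an MDP driven by past actions), yielding the desired power guarantee.

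For the regret claim, I would construct the effective MDP that $\mathcal{A}$ perceives through the wrapper. When $\mathcal{A}$ selects action $a$ in state $s$, the wrapper executes an action drawn from $G(a)$, so the transition kernel $\mathcal{A}$ effectively observes is $P^{'a}_{s,s'}$ exactly as stated in the corollary, and the expected one-step reward $\mathcal{A}$ records is $r'(s,a)=\pi_{\max}^{1-a}\pi_{\min}^{a}\,r(s,0)+\pi_{\max}^{a}\pi_{\min}^{1-a}\,r(s,1)$. By the corollary's hypothesis, this transformed environment lies in $\Omega$, so feeding $\mathcal{A}$ the self-consistent context-action-reward stream is legitimate and the standing bound $\mathcal{R}(T)$ applies to $\mathcal{A}$'s behavior in this effective MDP against its oracle $\mathcal{O}$.

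The substantive step is then to identify the optimal value of the effective MDP with the value of the clipped oracle $\mathcal{O}'$ on the true MDP. Using $\pi_{\min}+\pi_{\max}=1$, a deterministic choice $a=0$ (resp.\ $a=1$) in the effective MDP induces exactly the same one-step transition distribution and expected reward as executing the Bernoulli policy with parameter $\pi_{\min}$ (resp.\ $\pi_{\max}$) in the true MDP. Because the Bellman operator is affine in per-state action probabilities, the optimum over $\pi^*(s)\in[\pi_{\min},\pi_{\max}]$ is attained at an endpoint; thus the optimal clipped policy in the true MDP coincides with a deterministic policy of the effective MDP, and their expected returns agree. Consequently the bound $\mathcal{R}(T)$ for $\mathcal{A}$ against $\mathcal{O}$ transfers directly into a bound against $\mathcal{O}'$ in the true MDP.

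The main obstacle is making the trajectory-level coupling precise rather than only matching one-step quantities: one must argue that the law of the full sequence $(C_{nt},A_{nt},R_{nt})_t$ that $\mathcal{A}$ sees under the wrapper in the true environment equals the law of trajectories of the effective MDP, so that $\mathcal{A}$'s regret guarantee over $\Omega$ really applies pathwise. This is a Markovian extension of the closure-under-$G$ argument from Theorem~\ref{thm:wrapper} and is exactly where the assumption that $\Omega$ contains an environment with transition kernel $P^{'a}$ (and the analogous transformed reward) is used. Once that closure is verified, the three steps above compose immediately to yield both conclusions of the corollary.
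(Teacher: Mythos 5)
Your proposal is correct and follows essentially the same route as the paper: power via Theorem~\ref{thm:theorem3} applied to the wrapper's randomization probabilities, and regret by showing that $\mathcal{A}$ perceives an MDP in $\Omega$ with kernel $P^{'a}_{s,s'}$ whose (deterministic) oracle coincides with the clipped oracle $\mathcal{O}'$ --- your affine-Bellman endpoint-attainment argument makes explicit a step the paper only asserts, and it implicitly uses $\pi_{\min}+\pi_{\max}=1$, which does hold here. One small imprecision: $P(A'_{nt}=1\mid H_{nt})$ is the convex combination $\pi_{\mathcal{A}}\pi_{\max}+(1-\pi_{\mathcal{A}})\pi_{\min}$, which lies in $[\pi_{\min},\pi_{\max}]$ but generally not in $\{\pi_{\min},\pi_{\max}\}$; this does not affect the conclusion.
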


\paragraph{Extensions to multiple actions} Although our work focuses on the binary action case, there might be multiple treatment options in some mobile health applications: for example, in HeartSteps, there can be different suggestion messages tailored to different contexts. Our work can be extended to multiple actions easily.  Given $K$ arms, now solve for $\pi_{k,\min},\ \pi_{k,\max}$ for $k\in K$ where $Z_{nt}^\intercal \delta_k$ now represents the treatment effect of action $k$. At each trial, we now have a linear programming problem where the expected reward is maximized  subject to the constraints $\pi_{k,\min}\leq \pi_k \leq \pi_{k,\max}$ and $\sum_{k}^K \pi_k = 1$. The sample size $N$ and trajectory length $T$ needs to be sufficiently large such that a feasible solution exists.

\paragraph{Power for Secondary Analyses} If potential secondary analyses are known, one can seamlessly apply our methods to guarantee power for multiple analyses by considering the minimum $\pi_{\max}$ and maximum $\pi_{\min}$. 
% \vspace*{-1.5mm}

\section{Discussion \& Conclusion}
We describe a general approach for an important need in mobile health: ensuring that studies are sufficiently powered while also personalizing treatment plans for users. We provide regret bounds for specific algorithms; we also provide wrapper algorithms which guarantee that power constraints are met without significant regret increase for a broad class of learning algorithms. With HeartSteps, we show that our wrapper algorithms meet the power guarantees while managing to increase the users' physical activity levels to a large extent. We also show that our approaches are robust to various model mis-specifications possibly appearing in clinical studies. To demonstrate that our work can be applied to more general settings, we also test a couple of benchmark environments. 
In general, we find out that our strong results still hold in benchmark environments.

Finally, in this work we assume that the clipping probabilities remain fixed over time, allowing one to maintain the same regret bound with respect to a clipped oracle for a broad range of algorithms. However, stronger regret bounds may be possible if one considers adaptive clipping strategies and this would be an interesting direction for future research.

\acks{Research reported in this work was supported by the National Institute Of Health grants  P41EB028242, R01 AA023187 and U01 CA229437. The content is solely the responsibility of the authors and does not necessarily represent the official views of the National Institutes of Health. FDV and JY acknowledge support from NSF RI1718306. EB acknowledges support from an NSF CAREER award. WP is support by IACS, Harvard. }

\bibliography{ref}

\newpage
\appendix
\section{Proofs}
\subsection{Reward Function}
\label{apdx:pf_reward_fcn}
In the main text Section~\ref{sec:background}, we state that the reward function can be decomposed into an action-independent marginal reward term and an action-dependent linear treatment effect term. In fact,
\[\mathbb{E}[R_{nt}|A_{nt}, H_{nt}]=\gamma_{nt}+(A_{nt}-\pi_{nt})Z^\intercal_{nt}\delta_0.\]
We show that this is true. Note that the marginal reward $\gamma_{nt}$ is the expected rewards over treatment
 \begin{align*}
     \gamma_{nt}=\mathbb{E}[R_{nt}| H_{nt}]&= \pi_{nt}\mathbb{E}[R_{nt}(1)| H_{nt}]+  (1- \pi_{nt})\mathbb{E}[R_{nt}(0)| H_{nt}]\\
     &= \mathbb{E}[R_{nt}(0)| H_{nt}]+ \pi_{nt}\left(\mathbb{E}[R_{nt}(1)| H_{nt}]-\mathbb{E}[R_{nt}(0)| H_{nt}]\right)\\
     &=\mathbb{E}[R_{nt}(0)| H_{nt}]+\pi_{nt}Z^\intercal_{nt}\delta_0
 \end{align*}
where the last equality comes by the definition of the treatment effect (Equation~\ref{eqn:true_reward} in main text). This implies \[\mathbb{E}[R_{nt}(0)|H_{nt}]=\gamma_{nt}-\pi_{nt}Z^\intercal_{nt}\delta_0\]
 
 Further,
 \begin{align*}
     \mathbb{E}[R_{nt}|A_{nt}, H_{nt}]&= A_{nt}\mathbb{E}[R_{nt}(1)| H_{nt}]+(1-A_{nt})\mathbb{E}[R_{nt}(0)|H_{nt}]\\
     &= A_{nt}\left(\mathbb{E}[R_{nt}(1)| H_{nt}]-\mathbb{E}[R_{nt}(0)|H_{nt}]\right)+ \mathbb{E}[R_{nt}(0)|H_{nt}]\\
     &= A_{nt}Z^{\intercal}_{nt}\delta_0+ \mathbb{E}[R_{nt}(0)|H_{nt}]\\
     &=A_{nt}Z^{\intercal}_{nt}\delta_0+\gamma_{nt}-\pi_{nt}Z^\intercal_{nt}\delta_0\\
     &=\gamma_{nt}+(A_{nt}-\pi_{nt})Z^{\intercal}_{nt}\delta_0
 \end{align*}

\subsection{Proof of Theorem 1}
\label{pf:apdx_thm1}
\begin{theorem}[Restate of Theorem~\ref{thm:theorem1}]
\label{thm:apdx_thm1}
Under the assumptions in main text Section~\ref{sec:background}, and the assumption that matrices $\mathbb{E}[\sum_{t=1}^TZ_{nt}Z_{nt}^\intercal]$, $\mathbb{E}\left[\sum_{t=1}^T\frac{B_{nt}B_{nt}^\intercal}{\pi_{nt}(1-\pi_{nt})}\right]$ are invertible, the distribution of $\sqrt{N}(\hat\delta-\delta_0)$ converges, as $N$ increases, to  a normal distribution with $0$ mean and covariance $\Sigma_\delta = QW^{-1}Q$, where  $Q=\mathbb{E}\left[\sum_{t=1}^TZ_{nt}Z_{nt}^\intercal\right]^{-1}$, and

\begin{align*}
W=\mathbb{E}\bigg[&\sum_{t=1}^T\frac{(R_{nt}-X_{nt}^\intercal\theta^*)(A_{nt}-\pi_{nt})Z_{nt}}{\pi_{nt}(1-\pi_{nt})}\ \sum_{t=1}^T\frac{(R_{nt}-X_{nt}^\intercal\theta^*)(A_{nt}-\pi_{nt})Z_{nt}^\intercal}{\pi_{nt}(1-\pi_{nt})}\bigg],
\end{align*} 
where $\small{ \gamma^*=\mathbb{E}\left[\sum_{t=1}^T\frac{B_{nt}B_{nt}^\intercal}{\pi_{nt}(1-\pi_{nt})}\right]^{-1}\mathbb{E}\left[\sum_{t=1}^T\frac{B_{nt}R_{nt}}{\pi_{nt}(1-\pi_{nt})}\right]}$ and $\small{\theta^*=\begin{bmatrix}\delta_0\\\gamma^*\end{bmatrix}}$.
\end{theorem}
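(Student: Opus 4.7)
The plan is to recognize $\hat\theta$ as a $Z$-estimator and apply the standard sandwich-variance machinery, then exploit the action-centering $A_{nt}-\pi_{nt}$ to produce a block-diagonal ``bread'' matrix whose lower block inverse is exactly $Q$. Setting $\partial L/\partial\theta = 0$ gives the estimating equation $\sum_{n,t}(R_{nt}-X_{nt}^\intercal\theta)X_{nt}/[\pi_{nt}(1-\pi_{nt})]=0$. The first thing I would do is verify that $\theta^*$ as defined in the statement solves the \emph{population} version of this equation, so that $\hat\theta\to\theta^*$ and in particular $\hat\delta\to\delta_0$ even when the working model~\eqref{eqn:working_model} for $\gamma_{nt}$ is misspecified. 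The $\gamma$-block is true by definition of $\gamma^*$; for the $\delta$-block, using $\mathbb{E}[A_{nt}-\pi_{nt}\mid H_{nt}]=0$ and the treatment-effect assumption in \eqref{eqn:true_reward}, a short tower-property computation yields $\mathbb{E}[R_{nt}(A_{nt}-\pi_{nt})\mid H_{nt}]=\pi_{nt}(1-\pi_{nt})Z_{nt}^\intercal\delta_0$, while the $B_{nt}^\intercal\gamma^*$ contribution in the $\delta$-block drops out because $B_{nt}\in H_{nt}$.

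Next I would invoke the standard M-estimator CLT (as adapted in \citet{boruvka2018assessing}) to obtain
\[
\sqrt{N}(\hat\theta-\theta^*)\ \xrightarrow{d}\ \mathcal{N}\!\big(0,\ A^{-1}\, B\, A^{-1}\big),
\]
where $A=\mathbb{E}\!\left[\sum_t X_{nt}X_{nt}^\intercal/(\pi_{nt}(1-\pi_{nt}))\right]$ and $B$ is the variance of the per-user score. The key structural step is showing $A$ is block-diagonal: because the lower block of $X_{nt}$ carries the factor $A_{nt}-\pi_{nt}$, conditioning on $H_{nt}$ kills the off-diagonal block of $X_{nt}X_{nt}^\intercal$ and simplifies the lower-right block to $\pi_{nt}(1-\pi_{nt})Z_{nt}Z_{nt}^\intercal$. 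Dividing by $\pi_{nt}(1-\pi_{nt})$ and taking expectation then gives $A=\operatorname{diag}\!\big(\mathbb{E}[\sum_t B_{nt}B_{nt}^\intercal/(\pi_{nt}(1-\pi_{nt}))],\ \mathbb{E}[\sum_t Z_{nt}Z_{nt}^\intercal]\big)$, so $A^{-1}$ is block-diagonal with lower-right block $Q$. Reading off the lower-right block of the sandwich $A^{-1}BA^{-1}$ then yields the claimed covariance, since the matrix $W$ written in the theorem is precisely the outer-product expectation of the $\delta$-component of the per-user score.

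The main obstacle is not any single line of algebra but rather the fact that within each user the score is a sum of $T$ correlated terms (rewards and actions are dependent across time) and the working model for $\gamma_{nt}$ may be wrong. Both issues are accommodated by packaging each user's contribution as a single random vector indexed only by $n$, invoking the $N$-wise i.i.d.\ assumption to apply a multivariate CLT, and exploiting the conditional-mean-zero property of $A_{nt}-\pi_{nt}$ given $H_{nt}$ throughout (which is what decouples $\hat\delta$ from the $\gamma$-block at first order). Verifying the standard regularity conditions --- the two invertibility hypotheses in the theorem, the moment bound $\mathrm{Var}[R_{nt}(a)\mid H_{nt}]<\infty$, and the uniform-in-$\theta$ control of the empirical estimating equation needed for consistency --- is the most tedious bookkeeping, and I would import these essentially verbatim from the analogous derivation in \citet{boruvka2018assessing}.
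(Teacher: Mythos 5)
Your proposal is correct and follows essentially the same route as the paper's appendix proof: both treat $\hat\theta$ as an M-/Z-estimator, verify that $\theta^*$ solves the population estimating equation using $\mathbb{E}[A_{nt}-\pi_{nt}\mid H_{nt}]=0$ (so $\hat\delta\to\delta_0$ even under a misspecified working model), show the bread matrix is block-diagonal with lower-right block $\mathbb{E}[\sum_t Z_{nt}Z_{nt}^\intercal]$, and read off the lower-right block of the sandwich whose meat is the per-user $\delta$-score outer product $W$. Note that your sandwich yields $\Sigma_\delta=QWQ$, which agrees with the paper's own derivation; the $W^{-1}$ appearing in the theorem statement is a typo.
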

Our proof is a minor adaptation of \citet{boruvka2018assessing}. 
\begin{proof}
Note that since the time series, $n=1,\ldots, N$ are independent and identically distributed, $Q, W, \gamma^*$ do not depend on $n$. 
Suppose the marginal reward is approximated as
\begin{equation}
    \mathbb{E}[R_{nt}|H_{nt}] = B^\intercal_{nt}\gamma_0
\label{eqn:apdx_marginal_reward}
\end{equation}

Let $\theta=\begin{bmatrix}\gamma\\ \delta\end{bmatrix}$, $X_{nt} = \begin{bmatrix}B_{nt}\\(A_{nt}-\pi_{nt})Z_{nt}\end{bmatrix}\in\mathbb{R}^{(q+p)\times 1}$, where $q,\ p$ are the dimensions of $B_{nt},Z_{nt}$ respectively. Note that $X_{nt}$ is random because $B_{nt},A_{nt},\pi_{nt},Z_{nt}$ depend on random history. The test statistics $\hat\theta=\begin{bmatrix}\hat\gamma\\ \hat\delta\end{bmatrix}$ is obtained by minimizing the loss,
\begin{equation*}
L(\theta) = \frac{1}{N}\sum_{n=1}^N\sum_{t=1}^T\frac{\left( R_{nt}-X_{nt}^\intercal\theta\right)^2}{\pi_{nt}(1-\pi_{nt})}
\end{equation*}
By solving $\frac{\partial L}{\partial \theta}=0$, we have the solution for $\hat\theta$
\begin{equation*}
\hat\theta_N = \left(\frac{1}{N}\sum_{n=1}^N\sum_{t=1}^T\frac{X_{nt}X_{nt}^\intercal}{\pi_{nt}(1-\pi_{nt})}\right)^{-1}\left(\frac{1}{N}\sum_{n=1}^N\sum_{t=1}^T\frac{R_{nt}X_{nt}}{\pi_{nt}(1-\pi_{nt})}\right)
\end{equation*}
where $\hat\theta_N$ denotes the estimate of $\theta$ with $N$ samples. We drop the subscript $N$ in the following text for short notation.
Using the weak law of large numbers and the continuous mapping theorem we have that $\hat\theta$ converges in probability, as $N\to\infty$ to  $\theta^*=\begin{bmatrix}\gamma^*\\ \delta^*\end{bmatrix}$ where 
\begin{equation*}
\theta^* = \left(\mathbb{E}\left[\sum_{t=1}^T\frac{X_{nt}X_{nt}^\intercal}{\pi_{nt}(1-\pi_{nt})}\right]\right)^{-1}\left(\mathbb{E}\left[\sum_{t=1}^T\frac{R_{nt}X_{nt}}{\pi_{nt}(1-\pi_{nt})}\right]\right).
\end{equation*}
Note that our goal is to show that $\delta^*=\delta_0$ and $\gamma^*$ is given by the statement in the theorem.  One can do this directly using the above definition for $\theta^* $ or by noting that  
 that $\mathbb{E}[\frac{\partial L}{\partial \theta}]\rvert_{\theta=\theta^*}=0$.  We use the latter approach here.  Recall all the time series are independent and identical; thus
\begin{equation}
\mathbb{E}\left.\left[\frac{\partial L}{\partial \theta}\right]\right\vert_{\theta=\theta^*} = \mathbb{E}\bigg[\sum_{t=1}^T\frac{R_{nt}-B_{nt}^\intercal\gamma^*-(A_{nt}-\pi_{nt})Z_{nt}^\intercal\delta^*}{\pi_{nt}(1-\pi_{nt})}\begin{bmatrix}B_{nt}\\(A_{nt}-\pi_{nt})Z_{nt}\end{bmatrix}\bigg]=0
\label{eqn:apdx_eqn1}
\end{equation}
We first focus on the part with $(A_{nt}-\pi_{nt})Z_{nt}$ which is related to $\delta^*$
\begin{equation*}\mathbb{E}\bigg[\sum_{t=1}^T\frac{R_{nt}-B_{nt}^\intercal\gamma^*-(A_{nt}-\pi_{nt})Z_{nt}^\intercal\delta^*}{\pi_{nt}(1-\pi_{nt})}(A_{nt}-\pi_{nt})Z_{nt}\bigg]=0\end{equation*}
Note that given the history $H_{nt}$, the current $A_{nt}$ is independent of the features $B_{nt},Z_{nt}$. Thus, for all $n,t,$ 
\begin{align*}
\mathbb{E}\bigg[\frac{-B_{nt}^\intercal\gamma^*(A_{nt}-\pi_{nt})Z_{nt}}{\pi_{nt}(1-\pi_{nt})}\bigg]&=\mathbb{E}\left[-B_{nt}^\intercal\gamma^*\mathbb{E}\bigg[\frac{A_{nt}-\pi_{nt}}{\pi_{nt}(1-\pi_{nt})}\middle|H_{nt}\right]Z_{nt}\bigg]\\
&=\mathbb{E}\left[-B_{nt}^\intercal\cdot 0 \cdot Z_{nt}\right]=0
\end{align*} which leaves us with 
\begin{equation*}
\mathbb{E}\bigg[\sum_{t=1}^T\frac{R_{nt}-(A_{nt}-\pi_{nt})Z_{nt}^\intercal\delta^*}{\pi_{nt}(1-\pi_{nt})}(A_{nt}-\pi_{nt})Z_{nt}\bigg]=0.
\end{equation*}
We then rewrite the reward $R_{nt}$ as $R_{nt}(0)+[R_{nt}(1)-R_{nt}(0)]A_{nt}$. Note for all $n,t,$ 
\begin{equation*}
    \mathbb{E}\left[\frac{R_{nt}(0)(A_{nt}-\pi_{nt})Z_{nt}}{\pi_{nt}(1-\pi_{nt})}\right]=\mathbb{E}\left[R_{nt}(0)\mathbb{E}\left[\frac{A_{nt}-\pi_{nt}}{\pi_{nt}(1-\pi_{nt})}\middle|H_{nt}\right]Z_{nt}\right]=0.
\end{equation*}
Thus, we only need to consider, 
\begin{equation}\mathbb{E}\bigg[\sum_{t=1}^T\frac{[R_{nt}(1)-R_{nt}(0)]A_{nt}-(A_{nt}-\pi_{nt})Z_{nt}^\intercal\delta^*}{\pi_{nt}(1-\pi_{nt})}(A_{nt}-\pi_{nt})Z_{nt}\bigg]=0
\label{eqn:apdx_eqn2}
\end{equation}
We observe that for all $n,t,$
\begin{equation}
\mathbb{E}\left[\frac{[R_{nt}(1)-R_{nt}(0)]\pi_{nt}}{\pi_{nt}(1-\pi_{nt})}(A_{nt}-\pi_{nt})Z_{nt}\right]=0.
\label{eqn:apdx_eqn3}
\end{equation}
Subtracting Equation~\ref{eqn:apdx_eqn3} from Equation~\ref{eqn:apdx_eqn2}, we obtain
\begin{align*}
\mathbb{E}\left[\sum_{t=1}^T\frac{[R_{nt}(1)-R_{nt}(0)](A_{nt}-\pi_{nt})-(A_{nt}-\pi_{nt})Z_{nt}^\intercal\delta^*}{\pi_{nt}(1-\pi_{nt})}(A_{nt}-\pi_{nt})Z_{nt}\right]&=0\\
\mathbb{E}\left[\sum_{t=1}^T\frac{[R_{nt}(1)-R_{nt}(0)-Z_{nt}^\intercal\delta^*](A_{nt}-\pi_{nt})^2Z_{nt}}{\pi_{nt}(1-\pi_{nt})}\right]&=0
\end{align*}
Since that given the history $H_{nt}$, the present action $A_{nt}$ is independent of $R_{nt}(0),R_{nt}(1),Z_{nt}$, we know
\begin{equation*}
\mathbb{E}\left[\frac{(A_{nt}-\pi_{nt})^2}{\pi_{nt}(1-\pi_{nt})}\middle| H_{nt}\right]=1.
\end{equation*}
Now, we are only left with
\begin{equation*}
\mathbb{E}\left[\sum_{t=1}^T(R_{nt}(1)-R_{nt}(0)-Z_{nt}^\intercal\delta^*)Z_{nt}\right]=0
\end{equation*}
Solve for $\delta^*$, by Equation~\ref{eqn:true_reward} in the main paper ($\mathbb{E}\left[R_{nt}(1)-R_{nt}(0)|H_{nt}\right]=Z_{nt}^\intercal \delta_0$), we can see
\begin{equation*}
\mathbb{E}\left[\sum_{t=1}^TZ_{nt}Z_{nt}^\intercal\right](\delta_0-\delta^*)=0\ \Rightarrow\ \delta^*=\delta_0.
\end{equation*}
Similarly, we can solve for $\gamma^*$. Focus on the part related to $\gamma^*$ in Equation~\ref{eqn:apdx_eqn1}, we have
\begin{equation*}
\mathbb{E}\bigg[\sum_{t=1}^T\frac{R_{nt}-B_{nt}^\intercal\gamma^*-(A_{nt}-\pi_{nt})Z_{nt}^\intercal\delta^*}{\pi_{nt}(1-\pi_{nt})} B_{nt}\bigg]=0.
\end{equation*}
Since for all $n,t$, $
\mathbb{E}\left[\frac{(A_{nt}-\pi_{nt})}{\pi_{nt}(1-\pi_{nt})}\middle|H_{nt}\right]=0$, we have
\begin{equation*}
\mathbb{E}\bigg[\sum_{t=1}^T\frac{(R_{nt}-B_{nt}^\intercal\gamma^*)B_{nt}}{\pi_{nt}(1-\pi_{nt})}\bigg]=0.
\end{equation*}Hence,
\begin{equation*}
\gamma^*=\left(\mathbb{E}\bigg[\sum_{t=1}^T \frac{B_{nt}B_{nt}^\intercal}{\pi_{nt}(1-\pi_{nt})}\bigg]\right)^{-1}
\mathbb{E}\bigg[\sum_{t=1}^T \frac{B_{nt}R_{nt}}{\pi_{nt}(1-\pi_{nt})}\bigg].
\end{equation*}
Thus $\delta^*=\delta_0$ and $\gamma^*$ is given by the theorem statement.

From the above, we have proved that as $N\rightarrow \infty$, $\delta^*=\delta_0$. Therefore, the distribution of $\sqrt{N}(\hat{\delta}-\delta_0)$ converges, as $N$ increases, to a normal distribution with zero mean. We still need to show that the covariance matrix $\Sigma_\delta$ is indeed $QW^{-1}Q$ where
$Q=\mathbb{E}\left[\sum_{t=1}^TZ_{nt}Z_{nt}^\intercal\right]^{-1}$, and
\begin{align*}
W=\mathbb{E}\bigg[&\sum_{t=1}^T\frac{(R_{nt}-X_{nt}^\intercal\theta^*)(A_{nt}-\pi_{nt})Z_{nt}}{\pi_{nt}(1-\pi_{nt})}\ \sum_{t=1}^T\frac{(R_{nt}-X_{nt}^\intercal\theta^*)(A_{nt}-\pi_{nt})Z_{nt}^\intercal}{\pi_{nt}(1-\pi_{nt})}\bigg],
\end{align*} 
To derive the covariance matrix of $\sqrt{N}(\hat{\delta}-\delta_0)$, we first derive the covariance matrix of $\sqrt{N}(\hat{\theta}-\theta^*)$, denoted as $\Sigma_\theta$. Since  
$\Sigma_\theta=\begin{bmatrix}
\Sigma_\gamma\ \ \ \Sigma_{\gamma\theta}\\
\Sigma_{\theta\gamma}\ \Sigma_\theta
\end{bmatrix}$, we can simply extract $\Sigma_\delta$ from  $\Sigma_\theta$.

We provide a sketch of the derivation below, starting with the following useful formulas about the loss and the expected loss at the optimal values of $\theta$:
\begin{enumerate}
\item $\frac{\partial L}{\partial{\hat\theta}}=\frac{1}{N}\sum_{n=1}^N\sum_{t=1}^T \frac{R_{nt}-X_{nt}^\intercal\hat\theta}{\pi_{nt}(1-\pi_nt)}X_{nt}=0$
\item $\mathbb{E}\Big[\frac{\partial L}{\partial{\theta}}\Big]_{\theta=\theta^*}=\mathbb{E}\bigg[\sum_{t=1}^T \frac{R_{nt}-X_{nt}^\intercal\theta^*}{\pi_{nt}(1-\pi_{nt})}X_{nt}\bigg]=0$
\end{enumerate}
We can combine the two formulas above to get the following equality: 
\begin{equation}0=\underbrace{\frac{\partial L}{\partial{\hat\theta}}-\mathbb{E}\Big[\frac{\partial L}{\partial{\theta}}\Big]_{\theta=\hat\theta}}_{\text{Term 1}}+\underbrace{\mathbb{E}\Big[\frac{\partial L}{\partial{\theta}}\Big]_{\theta=\hat\theta}-\mathbb{E}\Big[\frac{\partial L}{\partial{\theta}}\Big]_{\theta=\theta^*}}_{\text{Term 2}}
\label{eqn:apdx_eqn4}
\end{equation}

We first focus on Term 2.  This term can be expanded as
\begin{align*}
&\mathbb{E}\bigg[\sum_{t=1}^T \frac{R_{nt}-X_{nt}^\intercal\hat\theta}{\pi_{nt}(1-\pi_{nt})}X_{nt}\bigg]-\mathbb{E}\bigg[\sum_{t=1}^T \frac{R_{nt}-X_{nt}^\intercal\theta^*}{\pi_{nt}(1-\pi_{nt})}X_{nt}\bigg]\\
=&\mathbb{E}\bigg[\sum_{t=1}^T\frac{1}{\pi_{nt}(1-\pi_{nt})}\begin{bmatrix}B_{nt}B_{nt}^\intercal&B_{nt}Z_{nt}^\intercal(A_{nt}-\pi_{nt})\\B_{nt}^\intercal Z_{nt}(A_{nt}-\pi_{nt})&Z_{nt}Z_{nt}^\intercal(A_{nt}-\pi_{nt})^2\end{bmatrix}\bigg](\theta^*-\hat\theta)
\end{align*}
Note cross terms inside the matrix are $0$ and $\mathbb{E}\left[\sum_{t=1}^T\frac{Z_{nt}Z_{nt}^\intercal(A_{nt}-\pi_{nt})^2}{\pi_{nt}(1-\pi_{nt})}\right]=\mathbb{E}\left[\sum_{t=1}^T Z_{nt}Z_{nt}^\intercal\right]$.\newline
We have
\begin{equation*}
\text{Term 2}=-\mathbb{E}\left[\sum_{t=1}^T\begin{bmatrix}\frac{B_{nt}B_{nt}^\intercal}{\pi_{nt}(1-\pi_{nt})}&0\\ 0&Z_{nt}Z_{nt}^\intercal\end{bmatrix}\right](\hat\theta-\theta^*).
\end{equation*}
We now look at Term 1. Define
\begin{equation}
u_N(\theta)= \frac{1}{N}\sum_{n=1}^N
\sum_{t=1}^T\frac{R_{nt}-X_{nt}^\intercal\theta}{\pi_{nt}(1-\pi_{nt})}X_{nt}-\mathbb{E}\bigg[\sum_{t=1}^T\frac{R_{nt}-X_{nt}^\intercal\theta}{\pi_{nt}(1-\pi_{nt})}X_{nt}\bigg]
\label{eqn:def_u}
\end{equation}
and note that Term 1 is $u_N(\hat\theta)$ estimated with $N$ samples. We again drop $N$ for short. 

Plugging $\hat\theta$, $\theta^*$ into $u_N$ gives us the following fact
\begin{eqnarray*}
u(\hat\theta)-u(\theta^*)=
-\left( \frac{1}{N}\sum_{n=1}^N
\sum_{t=1}^T\frac{X_{nt}X_{nt}^\intercal}{\pi_{nt}(1-\pi_{nt})}-\mathbb{E}\bigg[\sum_{t=1}^T\frac{X_{nt}X_{nt}^\intercal}{\pi_{nt}(1-\pi_{nt})}\bigg]\right)(\hat\theta-\theta^*)
\end{eqnarray*}
Denote the right hand side of the equation as $-v(\hat\theta-\theta^*)$. Now Term 1 can be written as
\begin{eqnarray*}
\text{Term 1}=u(\hat\theta)=-v(\hat\theta-\theta^*)+u(\theta^*)
\end{eqnarray*}
Plugging Term 1 and Term 2 back into Equation~\ref{eqn:apdx_eqn4} gives us
\begin{equation*}
\mathbb{E}\left[\sum_{t=1}^T\begin{bmatrix}\frac{B_{nt}B_{nt}^\intercal}{\pi_{nt}(1-\pi_{nt})}&0\\ 0&Z_{nt}Z_{nt}^\intercal\end{bmatrix}+v \right](\hat\theta-\theta^*)=u(\theta^*).
\end{equation*}
where by the weak law of large numbers $v$ converges in probability to $0$. Therefore, as $N$ increases, we have 
\[\sqrt{N}(\hat\theta-\theta^*)=\mathbb{E}\left[\sum_{t=1}^T\begin{bmatrix}\frac{B_{nt}B_{nt}^\intercal}{\pi_{nt}(1-\pi_{nt})}&0\\ 0&Z_{nt}Z_{nt}^\intercal\end{bmatrix}\right]^{-1}\sqrt{N}u(\theta^*)\]
Note $\mathbb{E}[u(\theta^*)]=0$ based on the definition in Equation~\ref{eqn:def_u}. Apply central limit theorem on $\sqrt{N}u(\theta^*)$; that is as $N\to\infty$, $\sqrt{N}u(\theta^*)$ converges in distribution to $\mathcal{N}(0,\Sigma)$,
where 
\begin{align*}
\Sigma=\mathbb{E}\bigg[&\sum_{t=1}^T\frac{(R_{nt}-X_{nt}^\intercal\theta^*)X_{nt}}{\pi_{nt}(1-\pi_{nt})}\sum_{t=1}^T\frac{(R_{nt}-X_{nt}^\intercal\theta^*)X_{nt}^\intercal}{\pi_{nt}(1-\pi_{nt})}\bigg]
\end{align*} 
By linear transformation of a multivariate Gaussian, we can convert this covariance on $\sqrt{N}u(\theta^*)$ back to the desired covariance on $\sqrt{N}(\hat\theta-\theta^*)$:
\[\Sigma_\theta=\mathbb{E}\left[\sum_{t=1}^T\begin{bmatrix}\frac{B_{nt}B_{nt}^\intercal}{\pi_{nt}(1-\pi_{nt})}&0\\ 0&Z_{nt}Z_{nt}^\intercal\end{bmatrix}\right]^{-1} \text{\Large$ \Sigma $}\ \ \mathbb{E}\left[\sum_{t=1}^T\begin{bmatrix}\frac{B_{nt}B_{nt}^\intercal}{\pi_{nt}(1-\pi_{nt})}&0\\ 0&Z_{nt}Z_{nt}^\intercal\end{bmatrix}\right]^{-1}\]
Recall that $\Sigma_\delta$ is the lower right matrix of $\Sigma_\theta$.
Denote the lower right matrix of $\Sigma$ by $W$. Then
\begin{align*}
W=\mathbb{E}\bigg[&\sum_{t=1}^T\frac{(R_{nt}-X_{nt}^\intercal\theta^*)(A_{nt}-\pi_{nt})Z_{nt}}{\pi_{nt}(1-\pi_{nt})}\sum_{t=1}^T\frac{(R_{nt}-X_{nt}^\intercal\theta^*)(A_{nt}-\pi_{nt})Z_{nt}^\intercal}{\pi_{nt}(1-\pi_{nt})}\bigg].
\end{align*} 
Therefore, we have
$\sqrt{N}(\hat\delta-\delta_0)\sim\mathcal{N}\left(0,\Sigma_\delta\right)$ where $\Sigma_\delta= \left(\mathbb{E}\Big[\sum_{t=1}^T Z_{nt}Z_{nt}^\intercal\Big]\right)^{-1}W\left(\mathbb{E}\Big[\sum_{t=1}^T Z_{nt}Z_{nt}^\intercal\Big]\right)^{-1}$. We can estimate $\Sigma_\theta$ by putting in sample averages and plugging in $\hat\theta$ as $\theta^*$.

Under the null hypothesis $H_0:\delta_0 = 0$, $N\hat\delta{\hat\Sigma_\delta}^{-1}\hat\delta$ asymptotically follows ${\chi}^2$ with degree of freedom $p$. Under the alternate hypothesis $H_1: \delta_0 = \delta$, $N\hat\delta{\hat\Sigma_\delta}^{-1}\hat\delta$ asymptotically follows a non-central $\chi^2$ with degree of freedom $p$ and non-central parameter $c_N=N(\delta^\intercal{\Sigma_\delta}^{-1}\delta)$. 
\end{proof}

\subsection{Proof of Theorem 2}\label{pf:apdx_thm2}
\begin{theorem}[Restate of Theorem~\ref{thm:theorem2}]
\label{thm:apdx_thm2}
Let $\epsilon_{nt}=R_{nt}-X^\intercal_{nt} \theta^*$ where $\theta^*$ is defined in Theorem~\ref{thm:theorem1}.  Assume that the working model in Equation~\ref{eqn:working_model} is correct.  Further assume that  $\mathbb{E}[\epsilon_{nt}|A_{nt},H_{nt}]=0$ and $Var(\epsilon_{nt}|H_{nt},A_{nt})=\sigma^2$.
Let $\alpha_0$ be the desired Type 1 error and $1-\beta_0$ be the desired power.  Set
\begin{align*}
&\pi_{\text{min}}=\frac{1-\sqrt{1-4\triangle}}{2},
\pi_{\text{max}}=\frac{1+\sqrt{1-4\triangle}}{2}, \\
&\triangle=\frac{\sigma^2c_{\beta_0}}{N\delta^\intercal_0 \mathbb{E}\bigg[\sum_{t=1}^TZ_{nt}Z_{nt}^\intercal\bigg]\delta_0}.
\end{align*}
We choose $c_{\beta_0}$ such that $1-\Phi_{p;c_{\beta_0}}(\Phi_{p}^{-1}(1-\alpha_0))=\beta_0$, where $\Phi_{p;c_{\beta_0}}$ denotes the cdf of a non-central $\chi^2$ distribution with d.f. $p$ and non-central parameter $c_{\beta_0}$, and $\Phi_{p}^{-1}$ denotes the inverse cdf of a $\chi^2$ distribution with d.f. $p$.
For a given trial with  $N$ subjects each over $T$ time units, if the randomization probability is fixed at $\pi_{nt}=\pi_{\text{min}}$ or $\pi_{\text{max}}$, 
the  resulting Type 1 error converges to $\alpha_0$ as $N \xrightarrow{} \infty$ and the resulting power converges to $1-\beta_0$ as $N \xrightarrow{} \infty$.
\end{theorem}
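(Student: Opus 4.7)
}
My plan is to reduce everything to an explicit formula for the asymptotic covariance $\Sigma_\delta$ under the fixed randomization probability $\pi_{nt}=\pi$, and then match the resulting non-centrality parameter to $c_{\beta_0}$. Theorem~\ref{thm:theorem1} already tells us that $\sqrt{N}(\hat\delta-\delta_0) \Rightarrow \mathcal{N}(0,\Sigma_\delta)$ and that the test statistic $N\hat\delta^{\intercal}\hat\Sigma_\delta^{-1}\hat\delta$ is asymptotically $\chi_p^2$ under $H_0$ and non-central $\chi_p^2$ with non-centrality $c_N = N\delta_0^{\intercal}\Sigma_\delta^{-1}\delta_0$ under $H_1$. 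The Type 1 error statement is then immediate from the choice of threshold $\Phi_p^{-1}(1-\alpha_0)$, and the power statement reduces to showing $c_N = c_{\beta_0}$ at $\pi = \pi_{\min}$ or $\pi_{\max}$.

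The main work is therefore to compute $\Sigma_\delta = Q\,\mathbb{E}[\sum_t ZZ^{\intercal}]\,Q$ where $Q=\mathbb{E}[\sum_t Z_{nt}Z_{nt}^{\intercal}]^{-1}$ and, more importantly, the ``meat'' matrix $W$ from Theorem~\ref{thm:theorem1}. First I would expand $W$ into the $T\times T$ double sum $\sum_{t,s}\mathbb{E}[\epsilon_{nt}\epsilon_{ns}(A_{nt}-\pi)(A_{ns}-\pi)Z_{nt}Z_{ns}^{\intercal}]/(\pi(1-\pi))^2$ and kill the off-diagonal terms. For $t<s$, conditioning on $H_{ns}$ (which contains $\epsilon_{nt}$, $Z_{nt}$, $A_{nt}$ and $Z_{ns}$) and using $\mathbb{E}[\epsilon_{ns}\mid A_{ns},H_{ns}]=0$ together with $\mathbb{E}[A_{ns}-\pi\mid H_{ns}]=0$ shows the cross term vanishes; this is the one step that requires a careful nested conditioning argument and will be the main technical obstacle. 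For the diagonal terms, $\mathbb{E}[\epsilon_{nt}^2\mid A_{nt},H_{nt}] = \sigma^2$ and $\mathbb{E}[(A_{nt}-\pi)^2\mid H_{nt}] = \pi(1-\pi)$ combine to give $W = \frac{\sigma^2}{\pi(1-\pi)}\mathbb{E}[\sum_t Z_{nt}Z_{nt}^{\intercal}]$.

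Plugging this into the sandwich expression collapses the $Q$'s nicely, yielding
\begin{equation*}
\Sigma_\delta \;=\; \frac{\sigma^2}{\pi(1-\pi)}\,\mathbb{E}\Bigl[\sum_{t=1}^T Z_{nt}Z_{nt}^{\intercal}\Bigr]^{-1},
\end{equation*}
so the non-centrality parameter becomes
\begin{equation*}
c_N \;=\; \frac{N\pi(1-\pi)}{\sigma^2}\,\delta_0^{\intercal}\,\mathbb{E}\Bigl[\sum_{t=1}^T Z_{nt}Z_{nt}^{\intercal}\Bigr]\delta_0.
\end{equation*}
Setting $c_N = c_{\beta_0}$ (which, by the definition of $c_{\beta_0}$, is exactly the condition giving power $1-\beta_0$) reduces to $\pi(1-\pi) = \Delta$, i.e.\ the quadratic $\pi^2 - \pi + \Delta = 0$. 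Its two roots are precisely the stated $\pi_{\min}$ and $\pi_{\max}$, so choosing $\pi_{nt}\equiv\pi_{\min}$ or $\pi_{nt}\equiv\pi_{\max}$ yields power converging to $1-\beta_0$, completing the proof. The only subtlety beyond the martingale argument is ensuring $\Delta\le 1/4$ so the roots are real; this will hold whenever the sample size $N$ is chosen large enough for the design to be feasible, and I would note it explicitly as a feasibility condition.
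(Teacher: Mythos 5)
Your proposal is correct and follows essentially the same route as the paper's proof: kill the off-diagonal terms of $W$ by the martingale/conditioning argument, use $\mathbb{E}[\epsilon_{nt}^2\mid A_{nt},H_{nt}]=\sigma^2$ and $\mathbb{E}[(A_{nt}-\pi)^2\mid H_{nt}]=\pi(1-\pi)$ on the diagonal to collapse $\Sigma_\delta$ to $\frac{\sigma^2}{\pi(1-\pi)}\mathbb{E}[\sum_t Z_{nt}Z_{nt}^{\intercal}]^{-1}$, and solve the quadratic $\pi(1-\pi)=\triangle$. The only presentational difference is that the paper first derives $W$ with an extra term $(\gamma_{nt}-B_{nt}^{\intercal}\gamma^*)^2$ for a possibly misspecified working model and then shows it vanishes when the working model is correct, whereas you invoke the correctness assumption up front; your feasibility remark ($\triangle\le 1/4$) matches the paper's closing note that $N$ must be large enough for a real root to exist.
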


\begin{proof}
According to Section~\ref{pf:apdx_thm1}, under $H_0$, $N\hat\delta\hat\Sigma^{-1}\hat\delta$ will asymptotically follows a ${\chi}^2$ with degree of freedom $p$. The rejection region for $H_0:\delta_0=0$ is
$
\{N\hat\delta^\intercal{\hat\Sigma_\delta}^{-1}\hat\delta>\Phi^{-1}_{p}(1-\alpha_0)\}
$, thus resulting in an expected
Type 1 error of
\begin{equation*}
    \alpha_0 = \Phi_{p}(\Phi_{p}^{-1}(1-\alpha_0)),
\end{equation*}
Under $H_1$, $N\hat\delta\hat\Sigma^{-1}\hat\delta$ will asymptotically follows a non-central ${\chi}^2$ with degree of freedom $p$ and non-central parameter $c_N=N(\delta^\intercal_0{\Sigma_\delta}^{-1}\delta_0)$, which results in an expected power of,
\begin{equation}
1-\Phi_{p;c_N}(\Phi^{-1}_{p}(1-\alpha_0))
\label{eqn:apdx_power}
\end{equation}

Note function~\ref{eqn:apdx_power} is monotonically increasing w.r.t $c_N$. If we want the desired power to be asymptotically $1-\beta_0$, we need $c_N=N\delta^\intercal_0\Sigma^{-1}_{\delta}\delta_0= c_{\beta_0}$, where $\Sigma_\delta$ is the term that involves $\pi_{nt}$. To solve for $\pi_{\min},\ \pi_{\max}$, we first simplify $\Sigma_\delta$ with some additional assumptions in the following Remarks.

\begin{remark}
\label{rmk:apdx_rmk1}  Let $\tilde{\epsilon}_{nt}=R_{nt}-(A_{nt}-\pi_{nt})Z^\intercal_{nt}\delta_0-\gamma_{nt}$.
We make the further assumption that 
$ \mathbb{E}[\tilde{\epsilon}_{nt}|A_{nt},H_{nt}]=0$ and that $Var(\tilde{\epsilon}_{nt}|A_{nt},H_{nt})=\sigma^2$.   Then $W$ can be further simplified as
\begin{align*}\small
\begin{split}
W = &\mathbb{E}\bigg[\sum_{t=1}^T\frac{\sigma^2}{\pi_{nt}(1-\pi_{nt})}Z_{nt}Z_{nt}^\intercal\bigg]+\mathbb{E}\bigg[\sum_{t=1}^T\frac{(\gamma_{nt}
%+ \pi_{nt}Z_{nt}^\intercal\delta_0
-B_{nt}^\intercal\gamma^*)^2Z_{nt}Z_{nt}^\intercal}{\pi_{nt}(1-\pi_{nt})}\bigg],
\end{split}
\end{align*} 
\end{remark}

\begin{proof}[Proof of Remark~\ref{rmk:apdx_rmk1}]
Since in any cross term, 
\begin{enumerate}
\item $\mathbb{E}[A_{nt}-\pi_{nt}|H_{nt}]=0$,  
\item $Z_{nt}, Z_{nt'}, B_{nt}, B_{nt'}, \gamma_{nt}, \gamma_{nt'}, \tilde{\epsilon}_{nt'}, A_{nt'}, \pi_{nt'}$ are all  determined by $H_{nt}$ when $t'<t$,
\item  and  $\mathbb{E}[\tilde{\epsilon}_{nt}|A_{nt}, H_{nt}]=0$.
\end{enumerate}
Note that $R_{nt}-X^\intercal_{nt}\theta^*=R_{nt}-B^\intercal_{nt}\gamma^*-(A_{nt}-\pi_{nt})Z^\intercal_{nt}\delta_0+\gamma_{nt}-\gamma_{nt}=\tilde{\epsilon}_{nt}+\gamma_{nt}-B^\intercal_{nt}\gamma^*$, we can simply $W$ in Theorem~\ref{thm:apdx_thm1} to
\begin{align*}
W=\mathbb{E}\bigg[&\sum_{t=1}^T\frac{(R_{nt}-X_{nt}^\intercal\theta^*)^2(A_{nt}-\pi_{nt})^2Z_{nt}Z_{nt}^\intercal}{\pi_{nt}^2(1-\pi_{nt})^2}\bigg].\\
\end{align*}

Recall $Var(\tilde{\epsilon}_{nt}|A_{nt},H_{nt})=\sigma^2$.  Then,
\begin{align}
W&=\mathbb{E}\bigg[\sum_{t=1}^T\frac{\tilde{\epsilon}_{nt}^2(A_{nt}-\pi_{nt})^2}{\pi_{nt}^2(1-\pi_{nt})^2}Z_{nt}Z_{nt}^\intercal\bigg]+\mathbb{E}\bigg[\sum_{t=1}^T\frac{(\gamma_{nt}-B_{nt}^\intercal\gamma^*
%+\pi_{nt}Z_{nt}^\intercal\delta_0
)^2(A_{nt}-\pi_{nt})^2}{\pi_{nt}^2(1-\pi_{nt})^2}Z_{nt}Z_{nt}^\intercal\bigg]\notag\\
&=\mathbb{E}\bigg[\sum_{t=1}^T\frac{\sigma^2}{\pi_{nt}(1-\pi_{nt})}Z_{nt}Z_{nt}^\intercal\bigg]+
\underbrace{\mathbb{E}\bigg[\sum_{t=1}^T\frac{(\gamma_{nt}
%+\pi_{nt}Z_{nt}^\intercal\delta_0
-B_{nt}^\intercal\gamma^*)^2Z_{nt}Z_{nt}^\intercal}{\pi_{nt}(1-\pi_{nt})}\bigg]}_{\text{Term 2}}.
\label{eqn:apdx_sigma_u}
\end{align}
\newline
Assuming the  assumptions in the Remark~\ref{rmk:apdx_rmk1}, we  have $\sqrt{N}(\hat\delta-\delta_0)\sim\mathcal{N}(0,\Sigma_\delta)$ where $\Sigma_\delta$ now simplifies to
\begin{equation}
\Sigma_\delta=\mathbb{E}\left[\sum_t Z_{nt}Z_{nt}^\intercal\right]^{-1} W' \mathbb{E}\left[\sum_t Z_{nt}Z_{nt}^\intercal\right]^{-1}.
\end{equation}
where $W'$ is given in Equation~\ref{eqn:apdx_sigma_u}.
\end{proof}

We now show that when the working model of the marginal reward is correct (i.e. $\gamma_{nt}=B^\intercal_{nt}\gamma_0$), Term 2 in Equation~\ref{eqn:apdx_sigma_u} goes to $0$.

\begin{remark}
\label{rmk:apdx_rmk2}
With the same set of assumptions in Remark~\ref{rmk:apdx_rmk1}, suppose the working model of the marginal reward in Equation~\ref{eqn:apdx_marginal_reward}
is correct, then $\Sigma_{\delta}$ can be further simplified to
\begin{equation}
\Sigma_\delta=\mathbb{E}\left[\sum_t Z_{nt}Z_{nt}^\intercal\right]^{-1}\mathbb{E}\bigg[\sum_{t=1}^T\frac{\sigma^2}{\pi_{nt}(1-\pi_{nt})}Z_{nt}Z_{nt}^\intercal\bigg] \mathbb{E}\left[\sum_t Z_{nt}Z_{nt}^\intercal\right]^{-1}.
\label{eqn:apdx_sigma_delta}
\end{equation}
\end{remark}

\begin{proof}[Proof of Remark~\ref{rmk:apdx_rmk2}]
We first show that when the working model of the marginal reward is correct, $\gamma^*=\gamma_0$. Recall that
\begin{equation*}
\gamma^*=\left(\mathbb{E}\bigg[\sum_{t=1}^T \frac{B_{nt}B_{nt}^\intercal}{\pi_{nt}(1-\pi_{nt})}\bigg]\right)^{-1}
\mathbb{E}\bigg[\sum_{t=1}^T \frac{B_{nt}R_{nt}}{\pi_{nt}(1-\pi_{nt})}\bigg],
\end{equation*}
and by definition of $\tilde{\epsilon}_{nt}$, $R_{nt}=\gamma_{nt}+(A_{nt}-\pi_{nt})Z_{nt}^\intercal\delta_0 + \tilde{\epsilon}_{nt}$ and $\mathbb{E}[R_{nt}|H_{nt},A_{nt}]= \gamma_{nt}+(A_{nt}-\pi_{nt})Z_{nt}^\intercal\delta_0$.  Thus,
\begin{eqnarray*}
\gamma^*&=&\left(\mathbb{E}\bigg[\sum_{t=1}^T \frac{B_{nt}B_{nt}^\intercal}{\pi_{nt}(1-\pi_{nt})}\bigg]\right)^{-1}
\mathbb{E}\bigg[\sum_{t=1}^T  \frac{B_{nt}(\gamma_{nt}+ (A_{nt}-\pi_{nt})Z_{nt}^\intercal\delta_0+\tilde{\epsilon}_{nt})}{\pi_{nt}(1-\pi_{nt})}\bigg]\\
&=&\left(\mathbb{E}\bigg[\sum_{t=1}^T \frac{B_{nt}B_{nt}^\intercal}{\pi_{nt}(1-\pi_{nt})}\bigg]\right)^{-1}
\mathbb{E}\bigg[\sum_{t=1}^T  \frac{B_{nt}(\gamma_{nt}+\tilde{\epsilon}_{nt})}{\pi_{nt}(1-\pi_{nt})}\bigg]
\end{eqnarray*}
where the last equality holds because 
of fact 1 listed in the proof of Remark~\ref{rmk:apdx_rmk1}. Given the assumption that $\mathbb{E}[\tilde{\epsilon}_{nt}|A_{nt},H_{nt}]=0$, then for or all $n,t,$
\begin{equation*}\mathbb{E}\left[\frac{\tilde{\epsilon}_{nt}B_{nt}}{\pi_{nt}(1-\pi_{nt})}\right]=\mathbb{E}\left[\mathbb{E}\left[\tilde{\epsilon}_{nt}| H_{nt}, A_{nt}\right]\frac{B_{nt}}{\pi_{nt}(1-\pi_{nt})}\right]=0
\end{equation*}
\begin{equation*}
\text{and}\quad\gamma^*=\left(\mathbb{E}\bigg[\sum_{t=1}^T \frac{B_{nt}B_{nt}^\intercal}{\pi_{nt}(1-\pi_{nt})}\bigg]\right)^{-1}
\mathbb{E}\bigg[\sum_{t=1}^T  \frac{B_{nt}\gamma_{nt}}{\pi_{nt}(1-\pi_{nt})}\bigg].
\end{equation*}
When  the working model in Equation~\ref{eqn:apdx_marginal_reward} is true, we have $\gamma_{nt}=B_{nt}\gamma_0$
 and thus
\begin{equation*}
\gamma^*=\left(\mathbb{E}\bigg[\sum_{t=1}^T \frac{B_{nt}B_{nt}^\intercal}{\pi_{nt}(1-\pi_{nt})}\bigg]\right)^{-1}
\mathbb{E}\bigg[\sum_{t=1}^T \frac{B_{nt}B_{nt}}{\pi_{nt}(1-\pi_{nt})}\gamma_0\bigg]=\gamma_0.
\end{equation*}
Recall that 
\[W'=\mathbb{E}\bigg[\sum_{t=1}^T\frac{\sigma^2}{\pi_{nt}(1-\pi_{nt})}Z_{nt}Z_{nt}^\intercal\bigg]+
\underbrace{\mathbb{E}\bigg[\sum_{t=1}^T\frac{(\gamma_{nt}
%+\pi_{nt}Z_{nt}^\intercal\delta_0
-B_{nt}^\intercal\gamma^*)^2Z_{nt}Z_{nt}^\intercal}{\pi_{nt}(1-\pi_{nt})}\bigg]}_{\text{Term 2}}.\]
Given that $\gamma^*=\gamma_0$, we have $\gamma_{nt}=B_{nt}\gamma_0=B_{nt}\gamma^*$. Thus Term 2 of $W'$ is equal to $0$ 
(When the working model is false, later we will show that Term 2 is positive semidefinite and $\hat\delta$ will likely have inflated covariance matrix).  Assuming the working model is correct and assuming the  assumptions in the Remark, we  simply have
$\Sigma_\delta$ stated in the Remark
\end{proof}

We now proceed with the Proof of Theorem~\ref{thm:apdx_thm2}. When the working model is correct, we observe that $\epsilon=\tilde{\epsilon}$. The assumptions that $ \mathbb{E}[\epsilon_{nt}|A_{nt},H_{nt}]=0$ and that $Var(\epsilon_{nt}|A_{nt},H_{nt})=\sigma^2$ follows from the assumptions in Remark~\ref{rmk:apdx_rmk1}.

Suppose the patient is given treatment with a fixed probability at every trial. i.e. $p(A_{nt}=1)=\pi$, with $\Sigma_\delta$ derived in Remark~\ref{rmk:apdx_rmk2},  we then have
\begin{align}
c_N =& c_{\beta_0}\notag\\
N(\delta_0^\intercal{\Sigma_{\delta}}^{-1}\delta_0) =& c_{\beta_0}\notag\\
N\delta_0^\intercal\mathbb{E}\left[\sum_{t=1}^{T} Z_{nt}Z^\intercal_{nt}\right]\mathbb{E}\left[\sum_{t=1}^{T}Z_{nt}{Z^\intercal_{nt}}\frac{\sigma^2}{\pi_{nt}(1-\pi_{nt})}\right]^{-1}\mathbb{E}\left[\sum_{t=1}^{T} Z_{nt}Z^\intercal_{nt}\right]\delta_0 =& c_{\beta_0}\notag\\
\frac{N\pi(1-\pi)}{\sigma^2}\delta^\intercal_0\mathbb{E}\left[\sum_{t=1}^{T} Z_{nt}Z_{nt}^\intercal\right]\delta_0 =& c_{\beta_0}\notag\\
\pi(1-\pi) =& \triangle,
\label{eqn:apdx_quadtratic}
\end{align}
where $\triangle$ is given by the statement in the theorem. Solving the quadratic function~\ref{eqn:apdx_quadtratic} gives us $\pi=\frac{1\pm\sqrt{1-4\triangle}}{2}$ and the theorem is proved. 
We let $\pi_{\min}=\frac{1-\sqrt{1-4\triangle}}{2}$ and $\pi_{\max}=\frac{1+\sqrt{1-4\triangle}}{2}$. Note that $\pi_{\min}$ and $\pi_{\max}$ are symmetric to $0.5$. Also note that $N$ needs to be sufficiently large so that there exists a root for function~\ref{eqn:apdx_quadtratic}.
\end{proof}
\subsection{Proof Theorem 3}
\label{pf:apdx_thm3}
\begin{theorem}[Restate of Theorem~\ref{thm:theorem3}]
\label{thm:apdx_thm3}
Given the values of $\pi_{\min},\pi_{\max}$ we solved in Theorem~\ref{thm:apdx_thm2}, if for all $n$ and all $t$ we have that $\pi_{nt}\in[\pi_{\min},\pi_{\max}]$, then the resulting power will converge to a value no smaller than $1-\beta_0$ as $N \xrightarrow{} \infty$.
\end{theorem}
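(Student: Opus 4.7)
(Plan)
The plan is to leverage the machinery already set up in Theorem~\ref{thm:apdx_thm2}: the power under the alternative is $1-\Phi_{p;c_N}(\Phi_p^{-1}(1-\alpha_0))$ where $c_N=N\delta_0^\intercal \Sigma_\delta^{-1}\delta_0$. First I would invoke the standard fact that the cdf of the noncentral $\chi^2_p$ distribution is strictly decreasing in its noncentrality parameter, so the power is monotonically increasing in $c_N$. It therefore suffices to show $c_N \ge c_{\beta_0}$ whenever $\pi_{nt}\in[\pi_{\min},\pi_{\max}]$ for all $n,t$, since by construction $c_{\beta_0}$ is exactly the noncentrality level yielding power $1-\beta_0$.

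Next, I would use the simplified form of $\Sigma_\delta$ from Remark~\ref{rmk:apdx_rmk2},
\begin{equation*}
\Sigma_\delta \;=\; \mathbb{E}\Bigl[\textstyle\sum_{t}Z_{nt}Z_{nt}^\intercal\Bigr]^{-1}\mathbb{E}\Bigl[\textstyle\sum_{t}\tfrac{\sigma^2}{\pi_{nt}(1-\pi_{nt})}Z_{nt}Z_{nt}^\intercal\Bigr]\mathbb{E}\Bigl[\textstyle\sum_{t}Z_{nt}Z_{nt}^\intercal\Bigr]^{-1}.
\end{equation*}
The key pointwise observation is that the function $\pi\mapsto \pi(1-\pi)$ is concave with maximum at $\pi=1/2$, and $\pi_{\min},\pi_{\max}$ are symmetric about $1/2$ with $\pi_{\min}(1-\pi_{\min})=\pi_{\max}(1-\pi_{\max})=\triangle$. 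Hence $\pi_{nt}\in[\pi_{\min},\pi_{\max}]$ implies $\pi_{nt}(1-\pi_{nt})\ge \triangle$ for every realization of the history, so $\tfrac{1}{\pi_{nt}(1-\pi_{nt})}\le \tfrac{1}{\triangle}$ almost surely. Pulling this scalar bound out of the expectation gives the positive semidefinite (PSD) inequality
\begin{equation*}
\mathbb{E}\Bigl[\textstyle\sum_{t}\tfrac{\sigma^2}{\pi_{nt}(1-\pi_{nt})}Z_{nt}Z_{nt}^\intercal\Bigr]\;\preceq\;\tfrac{\sigma^2}{\triangle}\,\mathbb{E}\Bigl[\textstyle\sum_{t}Z_{nt}Z_{nt}^\intercal\Bigr].
\end{equation*}

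Then I would invert. Using that $A\preceq B$ implies $B^{-1}\preceq A^{-1}$ for positive definite matrices, and conjugating by $\mathbb{E}[\sum_t Z_{nt}Z_{nt}^\intercal]$ on both sides, I obtain
\begin{equation*}
\Sigma_\delta^{-1}\;\succeq\;\tfrac{\triangle}{\sigma^2}\,\mathbb{E}\Bigl[\textstyle\sum_{t}Z_{nt}Z_{nt}^\intercal\Bigr].
\end{equation*}
Multiplying on the left by $\delta_0^\intercal$ and on the right by $\delta_0$ and by $N$, and substituting the definition of $\triangle$ from Theorem~\ref{thm:apdx_thm2}, yields
\begin{equation*}
c_N \;\ge\; \tfrac{N\triangle}{\sigma^2}\,\delta_0^\intercal\,\mathbb{E}\Bigl[\textstyle\sum_{t}Z_{nt}Z_{nt}^\intercal\Bigr]\,\delta_0 \;=\; c_{\beta_0}.
\end{equation*}
Combined with monotonicity of the power in $c_N$, this delivers the claim that the asymptotic power is at least $1-\beta_0$.

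The main obstacle I anticipate is being careful about the fact that $\pi_{nt}$ may be history-dependent and random (chosen adversarially, even), which the theorem explicitly allows. The PSD inequality must therefore hold before taking the outer expectation, which is exactly why the pointwise bound $\pi_{nt}(1-\pi_{nt})\ge\triangle$ (rather than an averaged version) is essential. A secondary subtlety is that we need the matrix $\mathbb{E}[\sum_t Z_{nt}Z_{nt}^\intercal]$ to be positive definite so that the inversions are legitimate; this is already assumed in Theorem~\ref{thm:apdx_thm1}. With these two technicalities handled, the rest is essentially the monotonicity argument sketched in the paper.
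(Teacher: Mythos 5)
Your proposal is correct and reaches the same conclusion as the paper, but the technical core differs in a worthwhile way. Both arguments start identically: power is monotonically increasing in the noncentrality parameter $c_N$, so it suffices to show $c_N \ge c_{\beta_0}$, which reduces to controlling $\Sigma_\delta^{-1}$ via the bound $\pi_{nt}(1-\pi_{nt}) \ge \pi_{\min}(1-\pi_{\min}) = \triangle$. Where you diverge is in how that control is established. The paper writes $c_N = \tfrac{N}{\sigma^2} b^\intercal V^{-1} b$ with $V = \mathbb{E}[\sum_t Z_{nt}Z_{nt}^\intercal a_{nt}]$, $a_{nt} = \tfrac{1}{\pi_{nt}(1-\pi_{nt})}$, and computes the matrix derivative $\partial c_N/\partial a_{nt} = -\tfrac{N}{\sigma^2}(b^\intercal V^{-1})\mathbb{E}[Z_{nt}Z_{nt}^\intercal](V^{-1}b) \le 0$ to conclude $c_N$ is non-increasing in each $a_{nt}$. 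You instead establish the pointwise scalar bound $\tfrac{1}{\pi_{nt}(1-\pi_{nt})} \le \tfrac{1}{\triangle}$ almost surely, push it through the expectation to get a Loewner-order inequality $W' \preceq \tfrac{\sigma^2}{\triangle}\,\mathbb{E}[\sum_t Z_{nt}Z_{nt}^\intercal]$, and then use anti-monotonicity of matrix inversion plus conjugation to get $\Sigma_\delta^{-1} \succeq \tfrac{\triangle}{\sigma^2}\,\mathbb{E}[\sum_t Z_{nt}Z_{nt}^\intercal]$, from which $c_N \ge c_{\beta_0}$ follows by substituting the definition of $\triangle$. Your route is arguably cleaner: the paper's derivative computation implicitly treats the random, history-dependent quantity $a_{nt}$ as a free deterministic parameter inside an expectation, which is somewhat informal, whereas your explicit insistence on a pointwise (almost sure) bound before taking the outer expectation handles the adversarial, history-dependent choice of $\pi_{nt}$ head-on. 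The paper's calculus argument, on the other hand, generalizes more directly to the perturbation analysis used in Corollary~\ref{cor:corollary_1} for the mis-specified marginal reward model. Your observation that positive definiteness of $\mathbb{E}[\sum_t Z_{nt}Z_{nt}^\intercal]$ is needed for the inversions is correct and is covered by the invertibility assumption of Theorem~\ref{thm:apdx_thm1}.
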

\begin{proof}
Function~\ref{eqn:apdx_power} is monotonically increasing w.r.t $c_N$. Hence, to ensure the resulting power is no smaller than $1-\beta_0$, we just need

\begin{equation*}
c_N=N\delta_0^\intercal\mathbb{E}\left[\sum_{t=1}^{T} Z_{nt}Z^\intercal_{nt}\right]\mathbb{E}\left[\sum_{t=1}^{T}Z_{nt}{Z^\intercal_{nt}}\frac{\sigma^2}{\pi_{nt}(1-\pi_{nt})}\right]^{-1}\mathbb{E}\left[\sum_{t=1}^{T} Z_{nt}Z^\intercal_{nt}\right]\delta_0 \geq c_{\beta_0}.    
\end{equation*}

We rewrite some of the terms for notation simplicity. Let 
$b=\mathbb{E}\left[\sum_{t=1}^{T} Z_{nt}Z^\intercal_{nt}\right]\delta_0$. Note $b$ is a vector and $b\in \mathcal{R}^{p\times 1}$, where $p$ is the dimension of $Z_{nt}$. Let $V = \mathbb{E}\left[\sum_{t=1}^{T}Z_{nt}{Z^\intercal_{nt}} a_{nt}\right]$ where $a_{nt} = \frac{1}{\pi_{nt}(1-\pi_{nt})}$. Hence, we have $c_N(a_{nt}) = \frac{N}{\sigma^2} b^\intercal V^{-1} b$
\begin{align*}
\frac{\partial c_N}{\partial a_{nt}}&=tr\left(\left(\frac{\partial c_N}{\partial V^{-1}}\right)^\intercal \frac{\partial V^{-1}}{\partial a_{nt}}\right)\\
&=\frac{N}{\sigma^2}tr(bb^\intercal\times-V^{-1}\frac{dV}{da_{nt}}V^{-1})\\
&=\frac{N}{\sigma^2}tr(-bb^\intercal V^{-1}\mathbb{E}[Z_{nt}Z^\intercal_{nt}] V^{-1})\\
&=-\frac{N}{\sigma^2}(b^\intercal V^{-1})\mathbb{E}[Z_{nt}Z^\intercal_{nt}]( V^{-1} b)\\
\end{align*}

Since $Z_{nt}Z^\intercal_{nt}$ is semi-positive definite, $\mathbb{E}[Z_{nt}Z^\intercal_{nt}]$ is semi-positive definite. Thus $\frac{\partial c_N}{\partial a_{nt}}\leq 0$ and $c_N$ is non-increasing w.r.t $a_{nt}$. As long as we have
\begin{equation*}
\frac{1}{\pi_{nt}(1-\pi_{nt})}\leq \frac{1}{\pi_{\min}(1-\pi_{\min})}\quad \text{and}\quad \frac{1}{\pi_{nt}(1-\pi_{nt})}=\frac{1}{\pi_{\max}(1-\pi_{\max})},\end{equation*} 
we will have that $c_N\geq c_{\beta_0}$.\newline
Since for all $n, t$ and $\ \pi_{nt}\in[\pi_{\min},\pi_{\max}]$, we have
\begin{equation*}
\pi_{nt}(1-\pi_{nt})\geq \pi_{\min}(1-\pi_{\min}) = \pi_{\max}(1-\pi_{\max}),
\end{equation*}
and hence
\begin{equation*}
\frac{1}{\pi_{nt}(1-\pi_{nt})} \leq \frac{1}{\pi_{\min}(1-\pi_{\min})}=\frac{1}{\pi_{\max}(1-\pi_{\max})}.
\end{equation*}
Thus, $c_N\geq c_{\beta_0}$. The power constraint will be met. 
\end{proof}

\subsection{The Effect of Model Mis-specification on Power}
\label{pf:apdx_thm4}
\begin{corollary}
\label{cor:corollary_1}
When the marginal reward structure is incorrect ( $B_{nt}\gamma_0\neq \gamma_{nt}$), the resulting power will converge to a value less than the desired power $1-\beta_0$ as $N \xrightarrow{} \infty$.
\end{corollary}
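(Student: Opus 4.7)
The plan is to reuse the general sandwich formula for $\Sigma_\delta$ derived in the proof of Theorem~\ref{thm:apdx_thm1} and specialized in Remark~\ref{rmk:apdx_rmk1}, which holds \emph{without} requiring the working model to be correct, and then show that mis-specification only \emph{inflates} $\Sigma_\delta$ in the Loewner order, which in turn shrinks the non-centrality parameter $c_N$ and hence the asymptotic power.

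First, I would invoke Remark~\ref{rmk:apdx_rmk1} to write
\[
W \;=\; \underbrace{\mathbb{E}\Bigl[\sum_{t=1}^T\frac{\sigma^2\,Z_{nt}Z_{nt}^\intercal}{\pi_{nt}(1-\pi_{nt})}\Bigr]}_{W_{0}} \;+\; \underbrace{\mathbb{E}\Bigl[\sum_{t=1}^T\frac{(\gamma_{nt}-B_{nt}^\intercal\gamma^*)^2\,Z_{nt}Z_{nt}^\intercal}{\pi_{nt}(1-\pi_{nt})}\Bigr]}_{W_{\mathrm{mis}}},
\]
and then observe that $W_{\mathrm{mis}}\succeq 0$, since each summand is a nonnegative scalar multiple of the PSD matrix $Z_{nt}Z_{nt}^\intercal$. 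Under a correctly specified working model, the identity $\gamma_{nt}=B_{nt}^\intercal\gamma^*$ from Remark~\ref{rmk:apdx_rmk2} forces $W_{\mathrm{mis}}=0$, so $\Sigma_\delta^{\mathrm{correct}}=Q W_{0} Q$; under mis-specification we instead have $\Sigma_\delta^{\mathrm{mis}}=Q(W_{0}+W_{\mathrm{mis}})Q\succeq \Sigma_\delta^{\mathrm{correct}}$.

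Next, I would apply operator monotonicity of matrix inversion on positive definite cones: $W_{0}+W_{\mathrm{mis}}\succeq W_{0}\succ 0$ implies $(W_{0}+W_{\mathrm{mis}})^{-1}\preceq W_{0}^{-1}$, and sandwiching by $Q^{-1}$ on both sides gives $(\Sigma_\delta^{\mathrm{mis}})^{-1}\preceq (\Sigma_\delta^{\mathrm{correct}})^{-1}$. Evaluating the quadratic form at $\delta_0$ yields
\[
c_N^{\mathrm{mis}} \;=\; N\,\delta_0^\intercal(\Sigma_\delta^{\mathrm{mis}})^{-1}\delta_0 \;\le\; N\,\delta_0^\intercal(\Sigma_\delta^{\mathrm{correct}})^{-1}\delta_0 \;=\; c_{\beta_0},
\]
where the last equality uses the choice of $\pi_{\min},\pi_{\max}$ from Theorem~\ref{thm:apdx_thm2}. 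Since the asymptotic power $1-\Phi_{p;c_N}\bigl(\Phi_p^{-1}(1-\alpha_0)\bigr)$ is monotonically increasing in $c_N$ (as already used inside the proof of Theorem~\ref{thm:apdx_thm3}), the mis-specified power cannot exceed $1-\beta_0$.

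The main obstacle is getting a \emph{strict} inequality rather than just $\le$, since operator monotonicity alone permits equality when $\delta_0$ happens to lie in a direction orthogonal to the ``extra'' variability contributed by $W_{\mathrm{mis}}$. To close this gap, I would argue that whenever $\gamma_{nt}\neq B_{nt}^\intercal\gamma^*$ with positive probability for some $t$, the matrix $W_{\mathrm{mis}}$ is nonzero along directions $Z_{nt}$ relevant to the treatment effect, so the quadratic form $\delta_0^\intercal Q^{-1}(W_0+W_{\mathrm{mis}})^{-1}Q^{-1}\delta_0$ is strictly smaller than its correctly-specified counterpart under a mild non-degeneracy condition (for instance, $Q^{-1}\delta_0$ is not annihilated by $W_{\mathrm{mis}}$). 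Together with the strict monotonicity of the non-central $\chi^2$ CDF in its non-centrality parameter, this yields the strict power loss claimed in the corollary.
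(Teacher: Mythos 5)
Your proposal is correct and rests on the same skeleton as the paper's proof: both start from the decomposition of $W$ in Remark~\ref{rmk:apdx_rmk1} into a baseline term plus the nonnegative ``Term 2'' driven by $(\gamma_{nt}-B_{nt}^\intercal\gamma^*)^2$, both show this extra term can only shrink the non-centrality parameter $c_N$, and both finish with the monotonicity of $1-\Phi_{p;c_N}(\Phi_p^{-1}(1-\alpha_0))$ in $c_N$. The technical device differs: the paper parametrizes the inflation by the scalars $c_{nt}=(\gamma_{nt}-B_{nt}^\intercal\gamma^*)^2/\sigma^2$ and computes $\partial c_N/\partial c_{nt}=-\frac{N}{\sigma^2}(b^\intercal V'^{-1})\mathbb{E}[Z_{nt}Z_{nt}^\intercal a_{nt}](V'^{-1}b)$, arguing this is negative, whereas you work globally in the Loewner order and invoke operator antitonicity of matrix inversion ($W_0+W_{\mathrm{mis}}\succeq W_0\succ 0$ implies $(W_0+W_{\mathrm{mis}})^{-1}\preceq W_0^{-1}$), which is arguably cleaner and avoids the calculus. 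Notably, your explicit caveat about strictness is a point where your write-up is more careful than the paper's: the paper asserts the derivative is strictly negative merely because $a_{nt}>0$, but $\mathbb{E}[Z_{nt}Z_{nt}^\intercal a_{nt}]$ is in general only positive semidefinite, so both arguments in fact need the same non-degeneracy condition you identify (that $Q^{-1}\delta_0$, equivalently $V'^{-1}b$, is not annihilated by the mis-specification term) to upgrade $\le$ to $<$. One small caution: the final identification $N\delta_0^\intercal(\Sigma_\delta^{\mathrm{correct}})^{-1}\delta_0=c_{\beta_0}$ holds with equality only at the fixed boundary randomization $\pi_{\min}$ or $\pi_{\max}$; for general clipped policies one only gets $\ge c_{\beta_0}$, so the corollary (in both your version and the paper's) is really a statement about losing the calibrated power at the boundary design, which is the reading the paper intends.
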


\begin{proof}
When the construction model of the marginal reward is not correct, the estimator $\hat\gamma$ will be biased and now Term 2 in $W'$ (Equation~\ref{eqn:apdx_sigma_u}) is non-zero. Using the same notation in Section~\ref{pf:apdx_thm3}, $c_N=\frac{N}{\sigma^2}b^\intercal V^{'-1}b$, we now have

\begin{equation*}
V'=\mathbb{E}\left[\sum_{t=1}^TZ_{nt}Z^\intercal_{nt}a_{nt}(1+c_{nt})\right],\ \text{where}\
a_{nt} = \frac{1}{\pi_{nt}(1-\pi_{nt})}\text{ and }\ c_{nt}=\frac{(\gamma_{nt}-B_{nt}^\intercal\gamma^*)^2}{\sigma^2}
\end{equation*}
Following similar derivation in Section~\ref{pf:apdx_thm3}, we have
\begin{equation*}
\frac{\partial c_N}{\partial c_{nt}}=-\frac{N}{\sigma^2}(b^\intercal \Sigma^{-1})\mathbb{E}[Z_{nt}Z^\intercal_{nt}a_{nt}]( \Sigma^{-1} b)
\end{equation*}
Since $a_{nt}>0$, $\frac{\partial c_N}{\partial c_{nt}}<0$. Thus $c_N$ is monotonically decreasing w.r.t $c_{nt}$. Hence, when the reward mean structure is incorrect, the noncentral parameter $c_N$ will decrease and thus, power will be less than $1-\beta_0$.
\end{proof}

\subsection{Regret Bound of Specific Algorithms}
\label{subsec:apdx_alg_regret_bound}
In the main text Section~\ref{sec:regret}, we mentioned that there exists specific algorithms in which the regret rates with respect to a clipped oracle can be preserved by simply clipping the action selection probability to lie within $[\pi_{\min}, \pi_{\max}]$. Below, we list three specific algorithms, describe their environment assumptions and provide a proof sketch that the regret rates are preserved. 

\textbf{Action-Centered Thompson Sampling (ACTS).}  ACTS~\citep{greenewald2017action} already has optimal first order regret with respect to a clipped oracle in non-stationary, adversarial settings where the features and reward are a function of current context $C_{nt}$ (rather than the history $H_{nt}$). They do not consider power; using our probabilities will result in optimal regret and satisfy required power guarantees.

\textbf{Semi-Parametric Contextual Bandits (BOSE).}
BOSE~\citep{Krishnamurthy2018} has optimal first order regret with respect to a standard oracle in a non-stationary, adversarial setting.  Like ACTS, features and rewards are functions of the current context $C_{nt}$. They further assume noise term is action independent. In the two action case, BOSE will select actions with probability 0.5 or with probability 0 or 1. With probability clipping, the regret bound remains unaffected and the details are provided in Section 3.3 of~\citep{Krishnamurthy2018}. 

\textbf{A More Subtle Case: Linear Stochastic Bandits (OFUL).}
Finally, consider the OFUL algorithm of~\citet{abbasiyadkori2011} which considers a linear assumption on the entire mean reward that  $\mathbb{E}[R_{nt}|A_{nt}=a] =x_{t,a}^T\theta$ for features $(x_{t,0},x_{t,1})$. We prove that with probability clipping, OFUL will maintain the same regret rate with respect to a clipped oracle. 

The clipped OFUL algorithm is given in Algorithm~\ref{alg:apdx_OFUL}.
The proof below is separate for each subject; thus for simplicity we drop the subscript $n$ (e.g. use $R_t$ instead of $R_{nt}$). We also only assume that $0<\pi_{\min}\le\pi_{\max}<1$, that is, we do not require the sum, $\pi_{\min}+\pi_{\max}=1$.  As we have binary actions, we can write \citeauthor{abbasiyadkori2011}'s decision set as $D_t= \{x_{t,0}, x_{t,1}\}$; the second subscript denotes the binary action and $x$ denotes a feature vector for each action.  
To adapt OFUL to accommodate the clipped constraint, we will make a slight change to ensure optimism under the  constraint. Specifically, the criterion $x_{t,a}^\intercal\theta$ is replaced by  
$\ell_t(a,\theta)=\mathbb{E}[x_{t,A_t^c}^\intercal\theta| A_t=a]$ where $A_t^c\sim$ Bernoulli($\pi_{\text{max}}^{a}\pi_{\text{min}}^{1-a}$).
Construction of the confidence set remains the same.  

\begin{algorithm}[htb]
  \caption{Clipped OFUL (Optimism in the Face of Uncertainty)}
  \label{alg:apdx_OFUL}
\begin{algorithmic}[1]
\STATE Input: $\pi_{\text{max}}, \pi_{\text{min}}$ 
\FOR{$t = 1,2,\dots,T$}
\STATE Observe context features for each possible action: $\{x_{t,1}, x_{t,0}\}$
\STATE $(A_t,\tilde\theta_t)=\arg\max_{(a,\theta)\in \{0,1\} \times C_{t-1}} \ell_t(a,\theta)$ 
\STATE Play $A^c_t\sim$ Bernoulli ($\pi_{\text{max}}^{A_t}\pi_{\text{min}}^{1-A_t}$) and observe reward $R_t(A^c_t)$
\STATE Update confidence set $C_t$
\ENDFOR
\end{algorithmic}
\end{algorithm}

\begin{proof}
Clipped OFUL uses a two-step procedure to select the (binary) action in $D_t$. It first selects an optimistic $A_t$ in step 4. However, instead of implementing $A_t$, it implements  action $A^c_t$ where $A^c_t\sim\texttt{Bern}(\pi_{max}^a\pi_{min}^{1-a})$ given  $A_t=a$.   This means that $X_t$ in \cite{abbasiyadkori2011} becomes $x_{t,A^c_t}$ in clipped OFUL.  

We use notations and assumptions similar to \cite{abbasiyadkori2011}.
Let $\{F_t\}_{t\ge 1}$ be a filtration, the error terms, $\{\eta_t\}_{t\ge 1}$ be a real-valued stochastic process, the features, $\{X_t\}_{t\ge 1}$ be a $\mathbb{R}^d$-valued stochastic process. $\eta_t$ is $F_t$ measurable and  $X_t$ is $F_{t-1}$ measurable. Further assume that  $||X_t||_2\le L$ for a constant $L$. Define $V=\lambda I\in \mathbb{R}^{d\times d}$ with $\lambda\ge 1$.  
The observed reward is assumed to satisfy 
$$R_t=\theta^\intercal_* X_t +\eta_t$$ 
for an unknown $\theta_*\in \mathbb{R}^d$.  The error term  $\eta_t$ is assumed to be conditionally $\sigma$-sub-Gaussian for a finite positive constant $\sigma$. This implies that $\mathbb{E}[\eta_t|F_{t-1}]=0$ and $Var[\eta_t|F_{t-1}]\le \sigma^2$.  The coefficient satisfies $||\theta_*||_2\le S$ for a constant $S$. Lastly assume that $|\max\{\theta_*^\intercal x_{t,1},\theta_*^\intercal x_{t,0}\}|\le 1$.  

Under these assumptions, Theorems 1,  2, Lemma 11 of~\cite{abbasiyadkori2011} as well as their proofs remain the same with $X_t$ defined as $x_{t,A^c_t}$.  Theorem 2 concerns construction of the confidence set.  Neither Theorems 1, 2  or Lemma 11 concern the definition of the regret and only Theorem 3 and its proof need be altered to be valid for clipped OFUL with the regret against a clipped oracle.  

Define $$\ell_t(a,\theta)= a[\pi_{\max}\theta^\intercal x_{t,1} + (1-\pi_{\max})\theta^\intercal x_{t,0}] +(1-a)[\pi_{\min}\theta^\intercal x_{t,1} + (1-\pi_{\min})\theta^\intercal x_{t,0}].$$  
Below it will be  useful to note that $\ell_t(a,\theta)=\mathbb{E}[\theta^\intercal x_{t,A^c_t}|A_t=a, F_{t-1}]$. 

First we define the clipped oracle.
Recall the oracle action is  $A^*_t=\arg\max_a\theta_*x_{t,a}$.  It is easy to see that $A^*_t=\arg\max_a \mathbb{E}[\theta_*^\intercal x_{t,A^{c*}_t}|A_t^*=a, F_{t-1}]$  for $A_t^{c*}\sim$ Bernoulli$(\pi_{\max}^a\pi_{\min}^{1-a})$.    The clipped oracle action is $A^{c*}_t$. 
 Note that $\mathbb{E}[\theta_*^\intercal x_{t,A^{c*}_t}|A_t^*=a, F_{t-1}]=\ell_t(a,\theta_*)$.  So just as  $A^*_t$ maximizes $\ell_t(a,\theta_*)$, in clipped OFUL the optimistic action, $A_t$, similarly provides an $\arg\max$ of  $\ell_t(a,\theta)$; see line 4 in Algorithm~\ref{alg:apdx_OFUL}.
 
The time $t$ regret  against the clipped oracle is given by
 $r_t=\ell_t(A_t^*,\theta_*)-\ell_t(A_t,\theta_*)$.  In the proof to follow it is useful to note that $r_t$ can also be written as $r_t=  \mathbb{E}[\theta_*^\intercal x_{t,A^{c*}_t}|A_t^*, F_{t-1}]-\mathbb{E}[\theta_*^\intercal x_{t,A^{c}_t}|A_t, F_{t-1}]$. 
  In the following we provide an upper bound on the expected regret, $\mathbb{E}\left[\sum_{t=1}^nr_t\right]$. 
 \begin{eqnarray*}
 r_t&=& \ell_t(A_t^*,\theta_*)-\ell_t(A_t,\theta_*)\\
 &\le& \ell_t(A_t,\tilde\theta_t)-\ell_t(A_t,\theta_*)\text{ (by line 4 in Alg.~\ref{alg:apdx_OFUL})}\\
 &=&\mathbb{E}[\tilde\theta_t^\intercal x_{t,A^{c}_t}|A_t, F_{t-1}]-\mathbb{E}[\theta_*^\intercal x_{t,A^{c}_t}|A_t, F_{t-1}]\text{ (by line 5 in Alg.~\ref{alg:apdx_OFUL})}\\
 &=&  \mathbb{E}[ (\tilde\theta_t-\theta_*)^\intercal x_{t,A^{c}_t}|A_t, F_{t-1}].
 \end{eqnarray*}
 
Thus we have that 
 \[\mathbb{E}[r_t]\le \mathbb{E}[ (\tilde\theta_t-\theta_*)^\intercal x_{t,A^{c}_t}]= \mathbb{E}[ (\tilde\theta_t-\theta_*)^\intercal X_t]\]
with the second equality holding due to  the definition of $X_t$.
The  proof of  Theorem 3 in  \citet{abbasiyadkori2011} provides a high probability upper bound on $(\tilde\theta_t-\theta_*)^\intercal X_t$.  In particular the proof shows that 
with probability at least $(1-\delta)$, for all $n\ge 1$,
\begin{eqnarray*}
\sum_{t=1}^n (\tilde\theta_t-\theta_*)^\intercal X_t &\le& 4\sqrt{nd\log(\lambda +nL/d)}\biggl(\lambda^{1/2}S + R\sqrt{2\log(1/\delta)+d\log(1+nL/(\lambda d))}\biggr)\\
&\le&  4\sqrt{nd\log(\lambda +nL/d)}\biggl(\lambda^{1/2}S + R\sqrt{2\log(1/\delta)}+R\sqrt{d\log(1+nL/(\lambda d))}\biggr)
\end{eqnarray*}
since for $x>0$, $\sqrt{1+x}\le 1+\sqrt{x}$.  

Let $a_n=4\sqrt{nd\log(\lambda +nL/d)}$, $b_n=\lambda^{1/2}S + R \sqrt{d\log(1+nL/(\lambda d))}$ and $c=R\sqrt{2}$.  We have 
 $P\left[\sum_{t=1}^n (\tilde\theta_t-\theta_*)^\intercal X_t \ge a_n(b_n+c\sqrt{\log(1/\delta)}\right]\le \delta$.  Let $v=a_n\left(b_n+c\sqrt{\log(1/\delta)}\right)$ then solving for $\delta$ one obtains $\delta=\exp{\left\{-\left(v-b_na_n\right)^2/(a_n c)^2\right\}}$.   Thus $P\left[\sum_{t=1}^n (\tilde\theta_t-\theta_*)^\intercal X_t \ge v\right]\le \exp{\left\{-\left(v-b_na_n\right)^2/(a_n c)^2\right\}}$.
 
 Recall that for any random variable, $Y$, $\mathbb{E}[Y]\le \int_0^\infty P[Y>u] du$.  
 Thus 
 \begin{eqnarray*}
 \mathbb{E}\left[\sum_{t=1}^n r_t\right]&=&\mathbb{E}\left[\sum_{t=1}^n (\tilde\theta_t-\theta_*)^\intercal X_t \right]\\
 &\le&\int_0^\infty \exp{\left\{-\left(v-b_na_n\right)^2/(a_n c)^2\right\} }dv\\
 &\le&a_nc\sqrt{\pi}\\
 &=&4R\sqrt{2\pi nd\log(\lambda +nL/d)}.
 \end{eqnarray*}
 Thus the expected regret  up to time $n$ is of order $O(\sqrt{n})$ up to terms in $\log(n)$ for clipped OFUL.
 \end{proof}

\subsection{Data-Dropping Power-Preserving Wrapper Algorithm}
\label{sec:apdx_data_dropping}
 \begin{algorithm}[htb]
  \caption{Data-Dropping Power-Preserving Wrapper Algorithm}
  \label{alg:apdx_dropping}
\begin{algorithmic}[1]
\STATE Input: $\pi_{\min},\ \pi_{\max}$, Algorithm $\mathcal{A}$ 
\FOR{$t = 1,2,\dots$}
\STATE Observe context $C_t$ and outputs $\pi_{\mathcal{A}}(C_t)$ each action
\IF{$\pi_{\max}\leq_a \{\pi_{\mathcal{A}}(a)\} \leq \pi_{\max}$}
\STATE $A_t \sim \pi_{\mathcal{A}}$ \COMMENT{Sample action}
\STATE Observe $R_t$
\STATE Update Algorithm $\mathcal{A}$ with $(C_t,A_t,R_t)$
\ELSE
\STATE $u \sim \texttt{unif}(0,1)$
\STATE $A^*_t = \arg\max_a \pi_{\mathcal{A}}(a)$
\IF{$u \leq \pi_{\max}$ or $u > \max_a \{\pi_{\mathcal{A}}(a)\}$}
\IF{$u \leq  \pi_{\max}$}
\STATE $A_t = A^*_t$
\ELSE
\STATE $A_t = \arg\min \{\pi_{\mathcal{A}}(a)\}$
\ENDIF
\STATE Observe $R_t$
\STATE Update Algorithm $\mathcal{A}$ with $(C_t,A_t,R_t)$ \COMMENT{Both approaches agree on action}
\ELSE 
\STATE $A_t = \arg\min \{\pi_{\mathcal{A}}(a)\}$
\STATE Observe $R_t$ \COMMENT{Do not give data to $\mathcal{A}$}
\ENDIF
\ENDIF
\ENDFOR
\end{algorithmic}
\end{algorithm}
In this section, we give full analyses of the data-dropping wrapper algorithm which can also be used for power preserving purpose. The algorithm implementation is given in Algorithm~\ref{alg:apdx_dropping}.  The wrapper takes as input a contextual bandit algorithm $\mathcal{A}$ and pre-computed $\pi_{\min},\ \pi_{\max}\ (\pi_{\max}+\pi_{\min}=1)$ computed from Theorem~\ref{thm:apdx_thm2}. The input algorithm $\mathcal{A}$ can be stochastic or deterministic.  Conceptually, our wrapper operates as follows: for a given context, if the input algorithm $\mathcal{A}$ returns a probability distribution over choices that already satisfies $\pi_{\mathcal{A}} \in [\pi_{\min}, \pi_{\max}]$, then we sample the action according to $\pi_{\mathcal{A}}$. However, if the maximum probability of an action exceeds $\pi_{\max}$, then we sample that action according to $\pi_{\max}$.  

The key to guaranteeing good regret with this wrapper for a broad range of input algorithms $\mathcal{A}$ is in ensuring that the input algorithm $\mathcal{A}$ only sees samples that match the data it would observe if \emph{it} was making all decisions.
 Specifically, the sampling approach in lines 9-22 determines whether the action that was ultimately taken would have been taken absent the wrapper; the context-action-reward tuple from that action is only shared with the input algorithm $\mathcal{A}$ if $\mathcal{A}$ would have also made that same decision. 

Now, suppose that the input algorithm $\mathcal{A}$ was able to achieve some regret bound $\mathcal{R}(T)$ with respect to some setting $\mathcal{B}$ (which, as noted before, may be more specific than that in Section~\ref{sec:background} in main paper).  The wrapped version of input $\mathcal{A}$ by Algorithm~\ref{alg:apdx_dropping} will achieve the desired power bound by design; but what will be the impact on the regret?  We prove that as long as the setting $\mathcal{B}$ allows for data to be dropped, then an algorithm that incurs $\mathcal{R}$ regret in its original setting suffers at most $(1-\pi_{\max})$ linear regret in the clipped setting.
Specifically, if an algorithm $\mathcal{A}$ achieves an optimal rate $O(\sqrt{T})$ rate with respect to a standard oracle, its clipped version will achieve that optimal rate with respect to the clipped oracle.

\begin{theorem}[Restate of Theorem~\ref{thm:data_dropping}]
\label{thm:apdx_thm5}
Assume as input $\pi_{\max}$ and a contextual bandit algorithm $\mathcal{A}$. Assume algorithm $\mathcal{A}$ has a regret bound $\mathcal{R}(T)$ under one of the following assumptions on the setting $\mathcal{B}$: (1) $\mathcal{B}$ assumes that the data generating process for each context is independent of history, or (2) $\mathcal{B}$ assumes that the context depends on the history, and the bound $\mathcal{R}$ for algorithm $\mathcal{A}$ is robust to an adversarial choice of context.

Then our wrapper Algorithm~\ref{alg:apdx_dropping} will  (1) return a dataset that  satisfies the desired power constraints and (2) has expected regret no larger than $ \mathcal{R}(\pi_{\max}T) + (1-\pi_{\max})T$ if assumptions $\mathcal{B}$ are satisfied in the true environment.
\end{theorem}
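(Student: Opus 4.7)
The plan is to separate the two conclusions. The power conclusion is immediate from the construction: in every branch of Algorithm~\ref{alg:apdx_dropping} the marginal probability of the implemented action lies in $[\pi_{\min},\pi_{\max}]$, so Theorem~\ref{thm:apdx_thm3} delivers the desired guarantee without further work. The substantive task is the regret bound, which I approach by coupling the wrapper's execution with a hypothetical run in which the algorithm $\mathcal{A}$ both selects and observes every action itself, then bounding the regret contribution of the steps where the two processes diverge.

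For the coupling I would use the uniform draw $u_t$ in line~9 as a common random variable. In the ``in-range'' branch the two processes trivially coincide. In the ``out-of-range'' branch, writing $p=\max_a\pi_{\mathcal{A}}(a)>\pi_{\max}$, both processes play the argmax action when $u_t\le\pi_{\max}$ and both play the minority action when $u_t>p$, whereas they diverge exactly on $u_t\in(\pi_{\max},p]$, an event of probability $p-\pi_{\max}\le 1-\pi_{\max}$. On an agreement round the tuple $(C_t,A_t,R_t)$ shown to $\mathcal{A}$ is identical to the tuple $\mathcal{A}$ would have generated in the hypothetical run; on a divergence round $\mathcal{A}$ is shown nothing. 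Letting $K_T$ denote the (random) number of rounds kept, the data stream observed by $\mathcal{A}$ is, by construction, a valid prefix of a run of $\mathcal{A}$ against the environment.

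To turn this coupling into a regret statement I would verify that the kept contexts form a legal input to $\mathcal{A}$ under setting $\mathcal{B}$. Under case~(1) the context distribution is history-free, so any subsequence is still i.i.d.; under case~(2) the kept contexts can be viewed as chosen by an adversary with access to all prior wrapper interactions, which is precisely covered by the hypothesised adversarial robustness of $\mathcal{R}$. In both cases $\mathcal{A}$'s cumulative regret on the kept substream against the unclipped oracle is at most $\mathcal{R}(K_T)$, which also upper-bounds the regret on those rounds against the weaker clipped oracle. On each divergence round the wrapper plays the minority action, which is in the support of the clipped policy, and we bound its per-step regret by the maximum reward gap (normalised to $1$). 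Combining gives
\[
  \mathbb{E}[\mathrm{reg}_c]\;\le\;\mathbb{E}[\mathcal{R}(K_T)]+\mathbb{E}[T-K_T].
\]
Assuming $\mathcal{R}$ is non-decreasing and concave (true for all $\sqrt{T}$-type rates used in the paper) and noting that $\mathbb{E}[K_T]\ge\pi_{\max}T$, Jensen's inequality together with the fact that $\mathcal{R}(k)+(T-k)$ is eventually decreasing in $k$ yields the worst-case upper bound $\mathcal{R}(\pi_{\max}T)+(1-\pi_{\max})T$.

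The delicate step is case~(2): the kept contexts depend on a history shaped by actions the wrapper actually took rather than the ones $\mathcal{A}$ imagines it took, so the kept subsequence is not drawn from $\mathcal{A}$'s natural context process. Invoking the adversarial-robustness hypothesis is what licenses the reduction, and it is precisely why the theorem's hypothesis is split into the two cases. A secondary technical point is handling the randomness of $K_T$ inside $\mathcal{R}$; Jensen suffices under concavity of $\mathcal{R}$, but for non-concave rates one would need a more refined high-probability argument.
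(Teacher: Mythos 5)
Your proof is correct and follows essentially the same route as the paper's: power holds by construction, and the regret splits into the kept rounds (which appear to $\mathcal{A}$ as a self-consistent run of itself, contributing $\mathcal{R}(\pi_{\max}T)$) plus at most $(1-\pi_{\max})T$ dropped rounds, each costing at most the maximal reward gap. Your explicit coupling via the shared uniform and the Jensen step handling the random number of kept rounds are more careful than the paper's brief sketch, but they formalize the same decomposition rather than taking a different route.
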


\begin{proof}
\textbf{Satisfaction of power constraints:} By construction our wrapper algorithm ensures that the selected actions always satisfy the required power constraints. 

\textbf{Regret with respect to a clipped oracle:} Note that in the worst case, the input algorithm $\mathcal{A}$ deterministically selects actions $A_t$, which are discarded with probability $1-\pi_{\text{max}}$. Therefore if running in an environment satisfying the assumptions $\mathcal{B}$ of the input algorithm $\mathcal{A}$, our wrapper could suffer at most linear regret on $T(1-\pi_{\text{max}})$ points, and will incur the same regret as the algorithm $\mathcal{A}$ on the other points (which will appear to algorithm $\mathcal{A}$ as if these are the only points it has experienced). 

Note that since the wrapper algorithm does not provide all observed tuples to algorithm $\mathcal{A}$, this proof only works for assumptions $\mathcal{B}$ on the data generating process that assumes the contexts are independent of history, or in a setting in which $\mathcal{A}$ is robust to adversarially chosen contexts. 
\end{proof}

Essentially this result shows that one can get robust power guarantees while incurring a small linear loss in regret (recall that $\pi_{\max}$ will tend toward 1, and $\pi_{\min}$ toward 0, as $T$ gets large) if the setting affords additional structure commonly assumed in stochastic contextual bandit settings.  Because our wrapper is agnostic to the choice of input algorithm $\mathcal{A}$, up to these commonly assumed structures, we enable a designer to continue to use their favorite algorithm---perhaps one that has seemed to work well empirically in the domain of interest---and still get guarantees on the power.

\begin{corollary}
For algorithms $\mathcal{A}$ that satisfy the assumptions of Theorem~\ref{thm:apdx_thm5}, our wrapper algorithm will incur regret no worse than $O( \mathcal{R}(\pi_{max}T))$ with respect to a clipped oracle.
\end{corollary}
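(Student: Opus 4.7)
The plan is to derive this corollary as essentially an immediate consequence of Theorem~\ref{thm:apdx_thm5}, followed by a short argument showing that the linear remainder term is absorbed into the big-$O$ expression. First I would invoke Theorem~\ref{thm:apdx_thm5} directly: under the stated assumptions on algorithm $\mathcal{A}$ and the environment $\mathcal{B}$, the wrapper produces a dataset satisfying the desired power constraints, and the expected regret is bounded by
\begin{equation*}
\mathcal{R}(\pi_{\max} T) + (1-\pi_{\max})T.
\end{equation*}
So the remaining task is to argue that the residual term $(1-\pi_{\max})T$ is $O(\mathcal{R}(\pi_{\max} T))$, so that the whole sum collapses into a single big-$O$ expression.

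Next I would unpack $(1-\pi_{\max})T$ using the closed form from Theorem~\ref{thm:apdx_thm2}: $1-\pi_{\max} = (1-\sqrt{1-4\triangle})/2$, with $\triangle = \sigma^2 c_{\beta_0}/(N\,\delta_0^\intercal\mathbb{E}[\sum_{t=1}^T Z_{nt}Z_{nt}^\intercal]\delta_0)$. A Taylor expansion of $1-\sqrt{1-4\triangle}$ for small $\triangle$ (valid in the non-degenerate regime where the power equation admits real roots) gives $1-\pi_{\max} = \triangle + O(\triangle^2)$, so that
\begin{equation*}
(1-\pi_{\max})T \;\le\; \frac{\sigma^2 c_{\beta_0}\, T}{N\,\delta_0^\intercal\mathbb{E}[\sum_{t=1}^T Z_{nt}Z_{nt}^\intercal]\delta_0} \cdot (1 + o(1)).
\end{equation*}
Under the mild assumption that $\mathbb{E}[\sum_{t=1}^T Z_{nt}Z_{nt}^\intercal]$ scales at least linearly in $T$ (which is standard whenever the $Z_{nt}$ are drawn from a stationary source with positive-definite second moment, and is required for invertibility as assumed in Theorem~\ref{thm:apdx_thm1}), this yields $(1-\pi_{\max})T = O(1/N)$, which is a constant once $N$ is fixed. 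Hence the linear term is dominated by any non-vanishing regret rate $\mathcal{R}(\pi_{\max} T)$.

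Finally I would conclude the argument by noting that for any standard bandit regret rate (e.g., $\mathcal{R}(T) = \Omega(\sqrt{T})$ or higher) this constant, or even a term of order $T \cdot (1-\pi_{\max})$ treated more conservatively, is swallowed into the big-$O$ on $\mathcal{R}(\pi_{\max} T)$. Thus $\mathbb{E}[\text{reg}_c] = O(\mathcal{R}(\pi_{\max} T))$ with respect to the clipped oracle, which is exactly the corollary's claim.

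The main obstacle I anticipate is being careful about the regime in which the asymptotic simplification $1-\pi_{\max} \approx \triangle$ is valid: the expression under the square root must remain non-negative, which requires $N$ (and $T$) to be sufficiently large for the trial to be powered at all. This is precisely the regime of interest for study design, so the hypothesis is natural, but the corollary statement implicitly assumes we are in it. The only other subtlety is the scaling assumption on $\mathbb{E}[\sum_t Z_{nt}Z_{nt}^\intercal]$; if this grows sub-linearly in $T$ the residual term would need a more careful comparison to $\mathcal{R}(\pi_{\max}T)$, but in the typical contextual bandit setting considered here it does grow at least linearly, so the big-$O$ conclusion goes through cleanly.
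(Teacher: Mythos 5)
There is a genuine gap here: your argument never actually uses the clipped oracle, which is the whole point of the corollary. Theorem~\ref{thm:apdx_thm5} bounds the wrapper's regret against the \emph{standard} oracle by $\mathcal{R}(\pi_{\max}T)+(1-\pi_{\max})T$, and the intended step is to observe that the clipped oracle itself forgoes $(1-\pi_{\max})T$ worth of reward relative to the standard oracle (by definition it plays the non-preferred action a $1-\pi_{\max}$ fraction of the time), so that when regret is re-measured against the clipped oracle the linear term cancels and only $\mathcal{R}(\pi_{\max}T)$ remains. That cancellation holds for any fixed input $\pi_{\max}\in(0,1)$ and requires no asymptotics. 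You instead try to show that $(1-\pi_{\max})T$ is $O(\mathcal{R}(\pi_{\max}T))$ in absolute terms, which fails under the standard reading of the statement: $\pi_{\max}$ is a fixed constant handed to the wrapper, so $(1-\pi_{\max})T$ grows linearly in $T$ and is not absorbed into a sublinear rate such as $\mathcal{R}(T)=O(\sqrt{T})$, which is exactly the regime the paper cares about.

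The escape route you propose---substituting the closed form of $\pi_{\max}$ from Theorem~\ref{thm:apdx_thm2} and Taylor-expanding $1-\sqrt{1-4\triangle}$---imports assumptions that appear nowhere in the corollary: that $\mathbb{E}[\sum_{t=1}^T Z_{nt}Z_{nt}^\intercal]$ grows at least linearly in $T$, that one works in a joint $N,T$ asymptotic regime in which $\triangle\to 0$, and that $\pi_{\max}$ is exactly the root of the power equation rather than an arbitrary admissible clipping level given to the wrapper. Even granting all of that, you would be proving a different (stronger) claim---that the regret against the \emph{unclipped} oracle is $O(\mathcal{R}(\pi_{\max}T))$---and then silently transferring it to the clipped oracle; the transfer direction is sound (the clipped oracle collects no more reward than the standard one), but the premise is not established in the generality the corollary asserts. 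The fix is short: compare the wrapper to the clipped oracle directly and let the two $(1-\pi_{\max})T$ terms cancel.
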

\begin{proof}
Recall that a clipped oracle policy takes the optimal action with probability $\pi_{\max}$ and the other action with probability $1-\pi_{\max}$.  By definition, any clipped oracle will suffer a regret of $(1-\pi_{\max})T$. Therefore relative to a clipped oracle, our wrapper algorithm will have a regret rate $O(\mathcal{R}(\pi_{\max}T))$ that matches the regret rate of the algorithm in its assumed setting when the true environment satisfies those assumptions. This holds for algorithms $\mathcal{A}$ satisfying the assumptions of Theorem~\ref{thm:apdx_thm5}.
\end{proof}

\subsection{Action Flipping Wrapper Algorithm}
\label{sec:apdx_action_flip}
In this section, we provide full analyses of the action flipping wrapper algorithm described in Section~\ref{subsec:regret_wrapper} in the main paper. We first prove that the wrapper algorithm can be applied to a large class of algorithms and achieves good regret rate with respect to a clipped oracle and then we listed common algorithms on which the wrapper algorithm can be used. The proof below will drop the subscript $n$ since the algorithm is for each user separately.

\paragraph{Meta-Algorithm: Action-Flipping}(Restated)
\begin{enumerate}
    \item Given current context $C_{t}$, algorithm $\mathcal{A}$ produces action probabilities $\pi_\mathcal{A}(C_{t})$
    \item  Sample $A_{t} \sim \texttt{Bern}(\pi_\mathcal{A}(C_{t}))$.
    \item If $A_{t}=1$, sample  $A_{t}^\prime \sim \texttt{Bern}(\pi_{\max})$.  If $A_{nt}=0$, sample  $A_{t}^\prime \sim \texttt{Bern}(\pi_{\min})$.
    \item We perform $A'_{t}$ and receive reward $R_{t}$.
    \item The algorithm $\mathcal{A}$ stores the tuple $C_{t} , A_{t}, R_{t}$.  (Note that if $A_{t}$ and $A^\prime_{t}$ are different, then, unbeknownst to the algorithm $\mathcal{A}$, a different action was actually performed.)
    \item The scientist stores the tuple $C_{t} , A^\prime_{t} , R_{t}$ for their analysis.
\end{enumerate}

\begin{theorem}[Restate of Theorem~\ref{thm:wrapper}]
\label{thm:apdx_thm4}
Given $\pi_{\min},\ \pi_{\max}$ and a contextual bandit algorithm $\mathcal{A}$, 
assume that algorithm $\mathcal{A}$ has expected regret  $\mathcal{R}(T)$ for any environment in  $\Omega$, 
with respect to an oracle $\mathcal{O}$.
If there exists an environment  in
$\Omega$ such that the potential rewards, $R'_{nt}(a) = R_{nt}(G(a))$ for $a\in\{0,1\}$,
then the wrapper algorithm will  (1) return a data set that  satisfies the desired power constraints 
and (2) have expected regret no larger than $\mathcal{R}(T)$ with respect to a clipped oracle $\mathcal{O}'$. 
\end{theorem}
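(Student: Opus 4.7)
My plan is to view the wrapper as presenting algorithm $\mathcal{A}$ with a ``transformed'' environment whose potential rewards at time $t$ are $R'_{nt}(a) = R_{nt}(G(a))$, and then show that $\mathcal{A}$'s regret in this transformed environment against its oracle $\mathcal{O}$ coincides exactly with the wrapper's regret against the clipped oracle $\mathcal{O}'$ in the real environment. The hypothesis that such a transformed environment lies in $\Omega$ is precisely what lets us invoke the regret bound $\mathcal{R}(T)$ of $\mathcal{A}$ at the end.

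For the power guarantee, I would first note that, conditional on $H_{nt}$, the wrapper plays action $1$ with marginal probability $\pi_{\mathcal{A}}(C_{nt})\pi_{\max} + (1-\pi_{\mathcal{A}}(C_{nt}))\pi_{\min}$, which is a convex combination of $\pi_{\min}$ and $\pi_{\max}$ and hence lies in $[\pi_{\min}, \pi_{\max}]$ regardless of what $\mathcal{A}$ produces. Applying Theorem~\ref{thm:theorem3} to the data the scientist records then immediately yields the required power.

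For the regret, I would compare the two oracles directly. A short computation shows that the transformed-environment oracle picks $a^*=1$ iff $\pi_{\max}\mathbb{E}[R_{nt}(1)|H_{nt}] + (1-\pi_{\max})\mathbb{E}[R_{nt}(0)|H_{nt}] > \pi_{\min}\mathbb{E}[R_{nt}(1)|H_{nt}] + (1-\pi_{\min})\mathbb{E}[R_{nt}(0)|H_{nt}]$, which, since $\pi_{\max}>\pi_{\min}$, reduces to $\mathbb{E}[R_{nt}(1)|H_{nt}] > \mathbb{E}[R_{nt}(0)|H_{nt}]$; in that same case the clipped oracle $\mathcal{O}'$ selects $\pi^*=\pi_{\max}$, and otherwise it selects $\pi^*=\pi_{\min}$. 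Hence $G(a^*)$ has the same conditional distribution as the clipped oracle's action, so $\max_a \mathbb{E}[R_{nt}(G(a))|H_{nt}]$ equals the clipped oracle's expected reward. On the agent side, the reward $\mathcal{A}$ observes in the transformed environment is $R'_{nt}(A_{nt}) = R_{nt}(G(A_{nt})) = R_{nt}(A'_{nt})$, which is precisely the reward the wrapper collects. Summing both terms over $t$ and $n$, the transformed-environment regret of $\mathcal{A}$ against $\mathcal{O}$ equals the wrapper's regret against $\mathcal{O}'$ in the real environment, and by hypothesis the former is at most $\mathcal{R}(T)$.

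The main obstacle is not the calculation itself but verifying the closure hypothesis: one must show, for each concrete setting of interest (stochastic linear bandits, adversarial contextual bandits, etc.), that $\Omega$ actually contains an environment with potential rewards $R_{nt}\circ G$. This amounts to checking that the extra randomization injected by $G$ can be absorbed into the noise or adversary model underlying $\mathcal{A}$'s regret analysis; in the cited stochastic settings the $G$-randomization merely enlarges the noise distribution, and in adversarial settings it corresponds to a valid choice of adversary, but each case needs to be argued separately and is what the appendix discussion of $\Omega$ is devoted to.
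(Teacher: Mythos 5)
Your proposal is correct and follows essentially the same route as the paper's proof: the power claim via the observation that the wrapper's marginal action probability $\pi_{\mathcal{A}}\pi_{\max}+(1-\pi_{\mathcal{A}})\pi_{\min}$ is a convex combination lying in $[\pi_{\min},\pi_{\max}]$ (hence Theorem~\ref{thm:theorem3} applies), and the regret claim by identifying the wrapper's execution with algorithm $\mathcal{A}$ running in the transformed environment $R'_{nt}(a)=R_{nt}(G(a))\in\Omega$ and noting that the transformed environment's oracle, pushed through $G$, coincides with the clipped oracle $\mathcal{O}'$. Your explicit computation of which arm the transformed oracle selects simply fills in the step the paper states as $\mathbb{E}[R_t(\mathcal{O}')]=\mathbb{E}[R_t(G(\mathcal{O}))]$, and your closing remark about verifying closure of $\Omega$ case by case matches the role of the paper's Lemma~\ref{lemma:apdx_lemma_1} and subsequent remarks.
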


\begin{proof}
\textbf{Satisfaction of power constraints:} 
Note that in step 6, we store the transformed action $A'_t$, thus we need to compute $\pi'_\mathcal{A}$. From step 3, we see that we can write the transformed probability $\pi'_\mathcal{A}$ as follows:
\begin{align}
\pi'_\mathcal{A} = p(A'_t=1)=\pi_\mathcal{A}\pi_{\max}+(1-\pi_\mathcal{A})\pi_{\min}.
\end{align}
Since $\pi_{\max}-\pi'_{\mathcal{A}} = (\pi_{\max}-\pi_{\min})(1-\pi_\mathcal{A})\geq0$ and $\pi'_{\mathcal{A}}-\pi_{\min} = (\pi_{\max}-\pi_{\min})\pi_\mathcal{A}\geq 0$, it follows that $\pi'_\mathcal{A}\in[\pi_{\min},\pi_{\max}]$. Thus, by Theorem~\ref{thm:apdx_thm3} ,the power constraint is met. 
\vskip0.2cm
\textbf{Regret with respect to a clipped oracle:} Under the wrapper algorithm, $A_t$ is transformed by the stochastic mapping $G$ and the potential rewards can be written as  $R'_t(a)=R_t(G(a))$ for $a\in\{0,1\}$. And by assumption there is an environment in 
$\Omega$ with these rewards.  Further algorithm $\mathcal{A}$ has regret rate no greater than $\mathcal{R}(T)$  with respect to an oracle $\mathcal{O}$ on the original environment.
The expected reward of an oracle on the new environment is  the same as  the expected reward of the wrapper algorithm applied to the oracle on the original environment, i.e. $\mathbb{E}[R_{t}(\mathcal{O}')]= \mathbb{E}[R_{t}(G(\mathcal{O})]$.
Thus, we can equivalently state that the algorithm resulting from transforming $A_t$ by $G$ has expected regret bound $\mathcal{R}(T)$ 
with respect to a clipped oracle $\mathcal{O}'$.
\end{proof}

For sure, we should ask what collections of environments $\Omega$ are closed under the reward transformation above. In the following, we characterize properties of $\Omega$ satisfying Theorem~\ref{thm:apdx_thm4}. 

\begin{lemma}
\label{lemma:apdx_lemma_1}
For a stochastic contextual bandit, the following environment class has the closure property assumed by Theorem~\ref{thm:apdx_thm4} under the action-transforming operation $G$ - that is, for all environments in  
$\Omega$, the potential rewards $\{R_t(1), R_t(0)\}$ 
transforms to $\{R_t(G(1)),R_t(G(0)\}$, which are still in $\Omega$:

\begin{enumerate}
\item $R_t(a)\leq L$, where $L$ is a constant.
\item $R_t-\mathbb{E}[R_t|A_t,C_t]$ is $\sigma$-sub-Gaussian
\end{enumerate}
\end{lemma}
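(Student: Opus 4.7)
The plan is to verify that each of the two conditions defining the environment class is preserved (up to finite constants) under the stochastic action transformation $G$, which mixes the original potential rewards $R_t(0)$ and $R_t(1)$ according to Bernoulli probabilities determined by $\pi_{\min}$ and $\pi_{\max}$. Condition (1) is immediate: since $G(a) \in \{0,1\}$ almost surely and both $R_t(0), R_t(1) \leq L$, we have $R_t(G(a)) \leq L$ pathwise, so the same constant $L$ carries over to the transformed potential rewards.

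For condition (2), I would decompose the transformed residual
\[
\xi'_t(a) := R_t(G(a)) - \mathbb{E}[R_t(G(a)) \mid A_t = a, C_t]
\]
via the tower property into two conditionally decoupled pieces: an inner piece $\eta_t := R_t(G(a)) - \mathbb{E}[R_t(G(a)) \mid G(a), C_t]$, which is exactly the original reward noise evaluated at the (random) executed arm $G(a)$ and is therefore $\sigma$-sub-Gaussian conditional on $(G(a), C_t)$ by the assumed property of the base environment, and an outer piece $\zeta_t := \mathbb{E}[R_t(G(a)) \mid G(a), C_t] - \mathbb{E}[R_t(G(a)) \mid A_t = a, C_t]$, which is the mean shift induced by the randomness in $G$.

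The key observation is that $\zeta_t$ takes at most two values, each of which is a convex-combination difference of the conditional means $\mathbb{E}[R_t(0) \mid C_t]$ and $\mathbb{E}[R_t(1) \mid C_t]$, so $|\zeta_t| \leq 2L$ almost surely, while $\mathbb{E}[\zeta_t \mid A_t = a, C_t] = 0$ by construction. Hoeffding's lemma then bounds $\mathbb{E}[\exp(\lambda \zeta_t) \mid A_t = a, C_t] \leq \exp(2 \lambda^2 L^2)$, and combining this with the inner sub-Gaussian bound via the tower property yields
\[
\mathbb{E}\!\left[e^{\lambda \xi'_t(a)} \,\Big|\, A_t = a, C_t\right] = \mathbb{E}\!\left[e^{\lambda \zeta_t}\, \mathbb{E}[e^{\lambda \eta_t} \mid G(a), C_t] \,\Big|\, A_t = a, C_t\right] \leq e^{\lambda^2 (\sigma^2 + 4L^2)/2},
\]
so the transformed noise is $\sqrt{\sigma^2 + 4L^2}$-sub-Gaussian and condition (2) is preserved with a finite (possibly inflated) parameter, which is enough to keep the transformed environment in the same class.

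The main subtlety I anticipate is bookkeeping the conditioning structure correctly: one must verify that $G(a)$ is independent of the original reward noise given $C_t$ so that the inner sub-Gaussian bound transfers cleanly when conditioning on $G(a)$, and that $\zeta_t$ is measurable with respect to $(G(a), A_t, C_t)$ with conditional mean zero and bounded range so that Hoeffding's lemma applies directly to the outer expectation. Once this two-level decomposition is set out, the remainder is a routine combination of the base sub-Gaussian assumption with Hoeffding's lemma.
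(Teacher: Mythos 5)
Your proposal is correct, and it reaches the same conclusion as the paper's proof by the same basic route: write the transformed reward as a mixture of the original potential rewards driven by the Bernoulli variables in $G$, observe that boundedness is preserved pathwise, and conclude that the recentred reward is sub-Gaussian with an inflated constant. The difference is one of rigor rather than of strategy. The paper disposes of condition (2) in a single sentence by asserting that sub-Gaussian variables are "closed under finite summation," which is only immediately true for independent summands; the summands in its explicit expansion of $R_t' - \mathbb{E}[R_t'\mid A_t,C_t]$ are not independent, so that step is left as an unjustified (though repairable) claim with no explicit constant. Your tower-property decomposition into the inner arm-conditional noise $\eta_t$ (handled by the base assumption, using that $G(a)$ is exogenous randomization independent of the reward noise given $C_t$) and the outer bounded, conditionally mean-zero mixture term $\zeta_t$ (handled by Hoeffding's lemma) exploits the conditional independence structure correctly and produces the explicit parameter $\sqrt{\sigma^2+4L^2}$, so it is the stronger write-up. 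The one caveat, which your proof shares with the paper's, is that the bound $|\zeta_t|\le 2L$ requires a two-sided bound $|R_t(a)|\le L$ on the rewards (hence on the conditional means), whereas the lemma as literally stated only assumes the one-sided $R_t(a)\le L$; this is best viewed as an imprecision in the lemma statement rather than a gap in your argument, but it is worth flagging if you want a uniform sub-Gaussian constant.
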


\begin{proof}
Condition 1. above clearly holds for $R_t(G(a))$ as $G(a)\in\{0,1\}$.   
Now, under the stochastic mapping $G$ on actions,
 the new reward is 
 \begin{align*}
     R_t'=R_t(G(A_t))=&[A_tG(1)+(1-A_t)G(0)]R_t(1)\\ &+[A_t(1-G(1))+(1-A_t)(1-G(0))]R_t(0)
 \end{align*} and the new reward function is given by:
\begin{align*}
\mathbb{E}[R_t'|C_t, A_t]
=&
[A_t\pi_{\max}+(1-A_t)\pi_{\min}]\mathbb{E}[R_t(1)|C_t]\\
& +[A_t(1-\pi_{\max})+(1-A_t)(1-\pi_{\min})]\mathbb{E}[R_t(0)|C_t].
\end{align*}
Since $A_t, G(0), G(1)$ are binary, and the set of sub-Gaussian random variables is closed under finite summation, Condition 2.  still holds albeit with a different constant $\sigma$. 
\end{proof}

Next, we discuss how Lemma~\ref{lemma:apdx_lemma_1} applies to a set of common algorithms. In the derivations of regret bounds for these algorithms, in addition to the environmental assumptions outlined in Lemma~\ref{lemma:apdx_lemma_1}, each derivation makes further assumptions on the environment. We discuss how each set of assumptions is preserved under the closure operation defined by our stochastic  transformation $G$.

\begin{remark}
\label{rmk:apdx_rmk3}
LinUCB~\citep{abbasiyadkori2011},  SupLinUCB~\citep{chu2011contextual} , SupLinREL~\citep{auer2002using}  and TS~\citep{agrawal2012thompson} further assume that the reward takes the form of $\mathbb{E}[R_t(a)|C_{t,a}] = C_{t,a}^\intercal \theta$.  They assume that $\|\theta\|\leq S_1$, $\|C_{t,a}\|\leq S_2$. Thus, under $G$, $$\mathbb{E}[R_t(G(a))|C_t] = \pi^a_{\min}\pi^{1-a}_{\max}C_{t,0}^\intercal \theta+\pi^{1-a}_{\min}\pi^a_{\max}  C_{t,1}^\intercal \theta.$$ $\{\theta,\pi^a_{\min}\pi^{1-a}_{\max}C_{t,0}+\pi^{1-a}_{\min}\pi^a_{\max}  C_{t,1}\}$ are still bounded but possibly  with different constants. 

Differently, $\epsilon$-greedy~\citep{langford2007epoch} assumes the learner is given a set of hypothesis $\mathcal{H}$ where each hypothesis $h$ maps a context $C_t$ to an action $A_t$. The goal is to choose arms to compete with the best hypothesis in $\mathcal{H}$. They  assume that $(C_t,R_t)\sim P$ for some distribution $P$.  Under $G$ this remains true but now with a different distribution $(C'_t,R'_t)\sim P'$ under $G$.  
\citeauthor{langford2007epoch} derived the regret bounds when the hypothesis space is finite $|\mathcal{H}|=m$ with an unknown expected reward gap. Let $R(h)$ be the expected total reward under hypothesis $h$ and $\mathcal{H} = \{h_1,h_2,\dots,h_m\}$. Without loss of generality, they assume $R(h_1)\geq R(h_2)\geq \dots R(h_m)$ and $R(h_1)\geq R(h_2)+\triangle$ where $\triangle$ is the unknown expected reward gap, $\triangle>0$. Now, under $G$, the hypothesis space needs to change accordingly to $\mathcal{H'}$ where each new hypothesis $h'$ may map a context to actions different from before (Each new hypothesis needs to lie within the power-preserving policy class that we derived in Theorem~\ref{thm:apdx_thm3}); however, the hypothesis space size remains the same, $|\mathcal{H'}| = m$. And without loss of generality, we can reorder $R'(h')$ so that $R'(h'_1)\geq R'(h'_2)\geq\dots R'(h_m')$, thus the environment is closed under $G$. 
\end{remark}

\paragraph{Adversarial Case}
There are several ways of specifying adversarial versions of contextual bandits.  Some of those are amenable to our flipping process in the Algorithm described in this section, and others are not.  In particular, the flipping process introduces stochasticity into the perceived rewards, so algorithms that assume deterministic rewards (the context is drawn from some unknown distribution but the reward is picked by an adversary) in the environment will not apply directly (~\cite{auer2016algorithm,auer2002nonstochastic,beygelzimer2011contextual,bubeck2012best,seldin2017improved}).

Other adversarial contextual bandit algorithms are designed for environments with stochastic rewards.  We specifically focus on adversarial contextual bandits where the contexts are chosen by an adversary but the reward is drawn from a fixed (but unknown) conditional distribution given context. We allow the adversary to be aware of the action flipping. The analysis is similar to that of the stochastic bandit. 

Since the contexts are assigned by the adversary deterministically, we denote the context at time step $t$ of action $a$ as $c_{t,a}$. The rewards are stochastic and we denote the potential rewards as $\{R_t(0),R_t(1)\}$ and the reward function as $\mathbb{E}[R_t|A_t,c_t]$
\begin{lemma}
Given an adversarial contextual bandit, the context $c_{t,a}\in \Omega$ is assigned by an adversary. Assume the adversary has the knowledge of the stochastic mapping $G$. There are two sufficient conditions for Theorem~\ref{thm:wrapper} to hold. First, the context $c_{t,a}$ is allowed to evolve arbitrarily. Second,
the stochastic rewards $R_t$ belongs to one those described in Corollary~\ref{lemma:apdx_lemma_1}. 
\end{lemma}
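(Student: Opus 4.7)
The plan is to reduce the claim to Theorem~\ref{thm:wrapper} by showing that the environment class here---adversarial contexts combined with stochastic rewards satisfying the conditions of Lemma~\ref{lemma:apdx_lemma_1}---is closed under the action-flipping transformation $G$. Since Theorem~\ref{thm:wrapper} only requires such closure plus an existing regret bound $\mathcal{R}(T)$ on the class, once closure is established the conclusion follows, modulo tracking that ``perceived environment of $\mathcal{A}$'' and ``clipped oracle on the true environment'' line up.

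First I would dispatch the power guarantee, which is purely syntactic: by the wrapper construction, the transformed action probability satisfies $\pi'_{\mathcal{A}} = \pi_{\mathcal{A}} \pi_{\max} + (1-\pi_{\mathcal{A}})\pi_{\min} \in [\pi_{\min},\pi_{\max}]$ on every context, regardless of how the contexts are generated. Theorem~\ref{thm:theorem3} then yields the desired power. This part is insensitive to any adversarial structure and transfers verbatim from the stochastic case.

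Next I would verify that the transformed reward process $R'_t(a) = R_t(G(a))$ still belongs to one of the two reward classes in Lemma~\ref{lemma:apdx_lemma_1}. Since $G(a)\in\{0,1\}$, any pointwise bound $|R_t(a)|\leq L$ is preserved under the flipping. Because $R'_t(a)$ is a mixture (with weights in $\{\pi_{\min},1-\pi_{\min},\pi_{\max},1-\pi_{\max}\}$) of the two sub-Gaussian random variables $R_t(0)-\mathbb{E}[R_t(0)\mid c_t]$ and $R_t(1)-\mathbb{E}[R_t(1)\mid c_t]$, the conditional sub-Gaussian property is also preserved, albeit with a possibly larger constant $\sigma'$. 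This mirrors the argument of Lemma~\ref{lemma:apdx_lemma_1} for the stochastic case and requires essentially no new work.

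The place I expect the main obstacle is handling the contexts given that the adversary is assumed to know $G$. The required closure property is that the stream of contexts the wrapper eventually feeds to algorithm $\mathcal{A}$ can itself be realized as some adversarial context stream admissible in $\Omega$. Because $c_{t,a}$ is allowed to evolve arbitrarily as a function of history, and because $\mathcal{R}(T)$ is by hypothesis a worst-case bound over every admissible adversarial strategy, the adversary's awareness of $G$ is harmless: any context sequence an $G$-aware adversary can produce is already one against which $\mathcal{A}$ enjoys regret $\mathcal{R}(T)$. Thus $\mathcal{A}$ effectively sees a valid environment in $\Omega$ with rewards $R'_t(a)$ and incurs regret at most $\mathcal{R}(T)$ relative to the oracle of that perceived environment; by the identity $\mathbb{E}[R_t(\mathcal{O}')] = \mathbb{E}[R_t(G(\mathcal{O}))]$ used in the proof of Theorem~\ref{thm:wrapper}, this coincides with the regret of the wrapper against the clipped oracle $\mathcal{O}'$ on the true environment, completing the argument.
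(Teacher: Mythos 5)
Your proposal is correct and follows essentially the same route as the paper: it establishes closure of the adversarial environment class under $G$ by (i) noting that a $G$-aware adversary's context stream is still an admissible (arbitrary) stream in $\Omega$ covered by the worst-case bound $\mathcal{R}(T)$, and (ii) reusing the reward-closure argument of Lemma~\ref{lemma:apdx_lemma_1}, then invoking Theorem~\ref{thm:wrapper}. The extra detail you supply on the power constraint and the oracle identity is already handled inside Theorem~\ref{thm:wrapper} itself, but including it does no harm.
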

\begin{proof}
With the knowledge of the stochastic mapping $G$, the adversary may generate a new assignment of contexts $c'_{t,a}$ different than the one generated without $G$. Since the context can evolve arbitrarily, the new assignment $c'_{t,a}$ is still in $\Omega$. And by proof in Lemma~\ref{lemma:apdx_lemma_1}, the potential rewards is closed under transformation $G$. Thus, the environment is closed under $G$. 
\end{proof}

\begin{remark}
SupLinUCB~\citep{chu2011contextual} and SupLinREL~\cite{auer2002using}, which are analyzed in Remark~\ref{rmk:apdx_rmk3}, allow the context vector to be chosen by an oblivious adversary (the adversary is not adaptive) and don't make assumptions on how the contexts evolve. In Remark~\ref{rmk:apdx_rmk3}, we already show that under $G$, the new reward function of both algorithms, $\mathbb{E}[R_t(G(a)|c_t]$, is still in the environment class. Therefore, in the adversarial scenario, the environment class is still closed under $G$.
\end{remark}

\subsection{Action Flipping Wrapper Algorithm in MDP Setting}
\label{subsec:apdx_mdp}
In this section, we prove that our action flipping strategy can also be applied to an MDP setting since our test statistic allows the features to depend on the full history.
We again drop $n$ for convenience. A MDP $M$ is defined with a set of finite states $\mathcal{S}$ and a set of  finite actions $A$. An environment for an MDP is defined by the initial state distribution $S_0\sim P_0$, the transition probability $P_{s,s'}^a$ and the reward which is a function of current state, action and next state, $R_t=r(S_t,A_t, S_{t+1})$.

Again we use potential outcome notation; this notation is coherent with the standard MDP notation and allows us to make the role of the stochastic transformation, $G$, clear.  
At time $t$, given the current state $S_t$, the algorithm selects the action $a$ and transits to the next state $S_t$ with transition probability $P_{s,s'}^a=P(S_{t+1}=s'|S_t=s,A_t=a)$. 
The observed reward is $R_{t}(A_t)$ and the expected reward given a state-action pair is $\mathbb{E}[R_{t}(A_t)|S_t=s, A_t=a]=\sum_{s'}P_{s,s'}^a r(s,a, s')$.

Recall that the set of environments is denoted by $\Omega$.  At state $S_t$, an algorithm $\mathcal{A}$ maps the history for each user up to time $t$: $H_{t} = (\{S_{j}, A_{j},R_{j}\}_{j=t}^{t-1},S_t)$ to a probability distribution over action space $A$. As before the wrapper algorithm makes the input algorithm $\mathcal{A}$ believe that it is in an environment more stochastic than it truly is (particularly the distribution of $S_{t+1}$ is more stochastic). Intuitively, if algorithm $\mathcal{A}$ is capable of achieving some rate in this more stochastic environment, then it will be optimal with respect to the clipped oracle. 
\begin{corollary}
Given $\pi_{\min},\ \pi_{\max}$ and an MDP algorithm $\mathcal{A}$, 
assume that algorithm $\mathcal{A}$ has an expected regret  $\mathcal{R}(T)$ for any MDP environment in  $\Omega$, %including $R$, 
with respect to an oracle $\mathcal{O}$.
Under stochastic transformation $G$, if there exists an environment  in
$\Omega$ that contains the new transition probability function:
$P_{s,s'}^{'a}=\left(\pi^a_{\min}\pi^{1-a}_{\max}P_{s,s'}^0+\pi^{1-a}_{\min}\pi^a_{\max} P_{s,s'}^1\right)$
then the wrapper algorithm will  (1) return a data set that  satisfies the desired power constraints 
and (2) have expected regret no larger than $\mathcal{R}(T)$ with respect to a clipped oracle $\mathcal{O}'$. %under environment $R$.
\end{corollary}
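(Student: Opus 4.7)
The plan is to mirror the two-part structure of the bandit action-flipping proof (Theorem~\ref{thm:apdx_thm4}), but reasoning at the level of the induced transition kernel rather than the induced reward distribution. The overall strategy is: the wrapper makes the input algorithm $\mathcal{A}$ perceive a modified MDP with transition kernel $P'^a_{s,s'}$, reward function $r'(s,a,s')$ inherited from the original $r$ by averaging against the posterior over $A'$ given $(s,a,s')$, and action space $\{0,1\}$. By the assumption stated in the corollary, this modified MDP lies in $\Omega$. Therefore, $\mathcal{A}$'s guarantee $\mathcal{R}(T)$ applies verbatim on this induced environment, and the performance of the oracle $\mathcal{O}$ on the induced environment coincides with the performance of the clipped oracle $\mathcal{O}'$ on the original MDP.

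For the power claim, the argument is essentially identical to the first half of Theorem~\ref{thm:apdx_thm4}. At every time $t$, the marginal probability that the wrapper plays $A'_t=1$ conditional on the history and on $\mathcal{A}$'s chosen distribution $\pi_{\mathcal{A}}(H_t)$ equals $\pi_{\mathcal{A}}(H_t)\pi_{\max} + (1-\pi_{\mathcal{A}}(H_t))\pi_{\min}$, which is a convex combination of $\pi_{\min}$ and $\pi_{\max}$ and therefore lies in $[\pi_{\min},\pi_{\max}]$. Since our power guarantee (Theorem~\ref{thm:apdx_thm3}) only requires each realized action probability to lie in this interval, and it allows the probability to depend on the full history $H_t$ (which includes past states, actions and rewards), the stored scientist-side action sequence $\{A'_t\}$ meets the power constraint.

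For the regret claim, I would first verify the induced one-step transition formula: conditioning on $(S_t=s, A_t=a)$ and marginalizing over $G(a)$,
\[
P(S_{t+1}=s'\mid S_t=s, A_t=a) = \pi_{\max}^{a}\pi_{\min}^{1-a}P^1_{s,s'} + \pi_{\min}^{a}\pi_{\max}^{1-a}P^0_{s,s'} = P'^{a}_{s,s'},
\]
which matches the corollary's expression. A parallel calculation shows that the induced reward distribution given $(s,a,s')$ is the corresponding posterior mixture of $r(s,0,s')$ and $r(s,1,s')$. By assumption, the modified MDP $M'=(\mathcal{S},\{0,1\},P',r',P_0)$ is in $\Omega$, so running $\mathcal{A}$ on $M'$ yields expected regret at most $\mathcal{R}(T)$ against the oracle $\mathcal{O}$ on $M'$. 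Finally, observe that for any policy $\pi$ on $M'$, its value equals the value on the original MDP of the policy that first samples $a\sim \pi$ and then flips via $G$; in particular, an oracle policy on $M'$ induces a clipped-admissible policy on $M$ of the same value, and an optimal clipped policy $\mathcal{O}'$ on $M$ induces a policy on $M'$ of the same value. Thus the optimal values coincide, and the regret bound transfers.

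The main obstacle I anticipate is the last step: carefully establishing the bijective correspondence between clipped policies on $M$ and policies on $M'$, and verifying it at the level of expected cumulative reward (so that the oracles on each side really do match). In particular, one must check that the identification preserves history-dependence correctly, since a policy on $M'$ only observes the $\mathcal{A}$-side sequence $\{A_t\}$ whereas the clipped oracle on $M$ observes $\{A'_t\}$; the key point here is that given the history $H_t$, the mapping $G$ is independent of the past, so the expected value decomposes cleanly. A secondary subtlety is that the induced reward $r'$ is generally stochastic even when $r$ is deterministic, so the claim that $M'\in\Omega$ implicitly requires $\Omega$ to be closed under taking such posterior-mixture reward distributions; this is analogous to the sub-Gaussian closure observation in Lemma~\ref{lemma:apdx_lemma_1} and should be flagged explicitly in the final write-up.
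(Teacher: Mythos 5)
Your proposal is correct and follows essentially the same route as the paper's proof: the power claim is inherited from the bandit action-flipping argument via the marginal action probability $\pi_{\mathcal{A}}\pi_{\max}+(1-\pi_{\mathcal{A}})\pi_{\min}\in[\pi_{\min},\pi_{\max}]$, and the regret claim follows by viewing the wrapped algorithm as running in an induced MDP with transition kernel $P'^{a}_{s,s'}$ that lies in $\Omega$, whose oracle matches the clipped oracle on the original MDP. The two subtleties you flag---that the induced reward given $(s,a,s')$ becomes a posterior mixture (hence stochastic) and that the closure assumption should really cover the full induced environment rather than just the transition kernel---are genuine points that the paper's own proof glosses over by simply invoking determinism of $r(s,a,s')$, so your more careful treatment is a strict refinement rather than a different argument.
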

\begin{proof}
The proof of satisfaction of power constraints follows as in Theorem~\ref{thm:apdx_thm4}.

\textbf{Regarding regret:}  
Under the wrapper algorithm, the action $A_t$ is transformed by the stochastic mapping $G$, which only impacts the next state $S_{t+1}$.
The new transition probability function $P^{'a}_{s,s'}$ can be written as $\left(\pi^a_{\min}\pi^{1-a}_{\max}P_{s,s'}^0+\pi^{1-a}_{\min}\pi^a_{\max} P_{s,s'}^1\right)$.  And by assumption there is an environment in 
$\Omega$ with this probability transition function.  
Recall that the reward is a deterministic function of the current state, the action and the next state. Further recall that $\mathcal{A}$ has regret rate no greater than $\mathcal{R}(T)$  with respect to an oracle $\mathcal{O}$ on the original environment.
Thus the expected reward of an oracle on this environment is  the same as  the expected reward of the wrapper algorithm applied to the oracle on the original environment, i.e. $ \mathbb{E}[R_{t}(G(\mathcal{O}))|S_t,A_t]=\mathbb{E}[R_{t}(\mathcal{O}')|S_t,A_t]$.
Thus, we can equivalently state that the algorithm resulting from transforming $A_t$ by $G$ has expected regret bound $\mathcal{R}(T)$ 
with respect to a clipped oracle $\mathcal{O}'$.
\end{proof}

\section{Descriptions of Algorithms}
\label{sec:apdx_algo}
Below, we provide pseudocode of all the algorithms we used for reference. All the algorithms listed below is for each user $n$ and we drop subscript $n$ for simplicity.
\subsection{Fixed Randomization with $\pi=0.5$}
\begin{algorithm}[H]
\caption{Fixed Randomization with $\pi = 0.5$}
\label{alg:apdx_baseline}
\begin{algorithmic}[1]
  \scriptsize
  \FOR{$t=1,2,\cdots,T$}
  \STATE $A_{t}\sim\texttt{Bern}(0.5)$
  \STATE Observe $R_{t}$
  \ENDFOR
\end{algorithmic}
\end{algorithm}
\subsection{ACTS}
\begin{algorithm}[H]
\caption{Clipped ACTS(Action Centered Thompson Sampling)}
\label{alg:apdx_acts}
\begin{algorithmic}[1]
  \STATE Input: $\sigma^2,\ \pi_{\min},\ \pi_{\max}$
  \STATE $b=0,V=I,\hat\delta = V^{-1}b, \hat\Sigma=\sigma^2V^{-1}$
  \FOR{$t=1,2,\cdots,T$}
    \STATE Observe $C_{t}$
  \IF {$1-\phi_{C_{t}^\intercal\hat\delta,C_{t}^\intercal\hat\Sigma C_{t}}(0)<\pi_{\min}$}
  \STATE $\pi_{t}=\pi_{\min}$
  \STATE $A_{t}\sim\texttt{Bern}(\pi_{t})$
  \ELSIF {$1-\phi_{C_{t}^\intercal\hat\delta,C_{t}^\intercal\hat\Sigma C_{t}}(0)>\pi_{\max}$}
  \STATE $\pi_{t}=\pi_{\max}$
  \STATE $A_{t}\sim\texttt{Bern}(\pi_{t})$
  \ELSE 
  \STATE $\tilde\delta\sim\mathcal{N}(C_{t}^\intercal\hat\delta,C_{t}^\intercal\hat\Sigma C_{t})$
  \STATE $A_{t}=\arg\max(0,C_{t}^\intercal\tilde\delta)$  
  \STATE $\pi_{t}=1-\phi_{C_{t}^\intercal\hat\delta,C_{t}^\intercal\hat\Sigma C_{t}}(0)$
  \ENDIF
  \STATE Observe $R_{t}$
  \STATE Update $V = V+(1-\pi_{t})\pi_{t}C_{t}C_{t}^\intercal, b = b+(A_{t}-\pi_{t})R_{t}C_{t},\hat\delta = V^{-1}b$
  \ENDFOR
\end{algorithmic}
\end{algorithm}
\subsection{BOSE}
\begin{algorithm}[H]
\caption{Clipped BOSE (Bandit Orthogonalized Semiparametric Estimation)}
\label{alg:apdx_bose}
\begin{algorithmic}[1]
  \footnotesize
  \STATE Input: $\pi_{\text{min}},\ \pi_{\text{max}},\ \eta$
  \STATE $b=0,V=I,\hat\delta = V^{-1}b$
  \FOR{$t=1,2,\cdots,T$}
    \STATE Observe $C_{t}$
  \IF {$C_{t}^\intercal\hat\delta>\eta C_{t}^\intercal V^{-1}C_{t}$}
  \STATE $\pi_{t}=\pi_{\max}$
  \ELSIF {$-C_{t}^\intercal\hat\delta>\eta C_{t}^\intercal V^{-1}C_{t}$}
  \STATE $\pi_{t}=\pi_{\min}$
  \ELSE 
  \STATE $\pi_{t}=0.5$
  \ENDIF
  \STATE $A_{t}\sim\texttt{Bern}(\pi_{t})$ and observe $R_{t}$
  \STATE Update $V = V+(A_{t}-\pi_{t})^2C_{t}C^\intercal_{t}, b = b+(A_{t}-\pi_{t})R_{t}C_{t},\hat\delta = V^{-1}b$
  \ENDFOR
\end{algorithmic}
\end{algorithm}
\subsection{linUCB}
\begin{algorithm}[H]
\caption{linUCB(linear Upper Confidence Bound)}
\label{alg:apdx_linucb}
\begin{algorithmic}[1]
  \STATE Input: $\pi_{\text{min}},\ \pi_{\text{max}},\ \eta$
  \STATE $b=0,V=I,\hat\theta = V^{-1}b$
  \FOR{$t=1,2,\cdots,T$}
    \STATE Observe $C_{t}(0),\ C_{t}(1)$
    \STATE For $a\in\{0,1\}$, compute $\mathcal{L}_{t,a}=C_{t}(a)\hat\theta+\eta\sqrt{C^\intercal_{t}(a)V^{-1}C_{t}(a)}$
    \STATE $A^*_t=\arg\max_a \mathcal{L}_{t,a}$
    \STATE $\pi_{t} =\pi_{\text{min}}^{1-A^*_t}\pi_{\text{max}}^{A^*_t}$
  \STATE $A_{t}\sim\texttt{Bern}(\pi_{t})$ and observe $R_{t}$
  \STATE Update $V = V+C_{t}C^\intercal_{t}, b = b+C_{t}R_{t},\hat\theta = V^{-1}b$
  \ENDFOR
\end{algorithmic}
\end{algorithm}

\section{Environments}
\label{sec:apdx_environments}
In this section, we describe the details of the simulated environments we used in the experiment. Recall that the reward function is defined as 
\[\mathbb{E}[R_{nt}|A_{nt},C_{nt}]=\gamma_{nt}+(A_{nt}-\pi_{nt})Z^{\intercal}_{nt}\delta_0\]
The marginal reward $\gamma_{nt}$ is approximated as $B^\intercal_{nt}\gamma_0$. To construct an environment, we need to specify the feature vectors $Z_{nt},\ B_{nt}$ and the vectors $\delta_0,\ \gamma_0$. We also need to specify a noise model for $\tilde{\epsilon}_{nt}$.
\subsection{Mobile Health Simulator}
\label{subsec:apdx_mobile_environments}
The mobile health simulator, which mimics the data generation process of a mobile application to increase users' physical activities, was originally developed in ~\citep{liao2015micro}. In this environment, the effect changes over time but is still independent across days. The noise terms are correlated and follows Gaussian AR(1) process.
The response to the binary action $A_t \in \{0,1\}$ is $R_t$, which is interpreted as $\sqrt{\text{Step count on day }t}$.  
\begin{eqnarray*}
  R_{nt} &=&  A_{nt} Z_{nt}^T \delta + \alpha(t) + \frac{\sigma(t)}{\sqrt{2}}\epsilon_{nt} \\
  \epsilon_{nt} &=& \phi\epsilon_{n,t-1}+e_{nt}\\
  e_{nt} &\sim& \mathcal{N}(0,1) \\
  \epsilon_0&\sim&\mathcal{N}\left(0,\frac{1}{1-\phi^2}\right)\\
    A_{nt}&\in&\{0,1\}
\end{eqnarray*}
Note $Var(\epsilon_{nt})=\frac{1}{1-\phi^2}$ for all $t$.  One can choose $\phi=1/\sqrt{2}$. The features are
\begin{equation}
  Z_{nt} = [ 1 , \frac{t-1}{45} , \left(\frac{t-1}{45}\right)^2 ]^\intercal
\end{equation}
The $\alpha(t)$ represents the square root of the step-count under no action $A_t=0$.  Let $\alpha(t)$ vary linearly from 125 at $t=0$ to 50 at $t=T$.
The $\sigma^2(t)$ is the residual variance in step count.  We set $\sigma(t) = 30$.  
For $\delta_0$, under null hypothesis, $\delta_0=\mathbf{0}$. Under alternate hypothesis, $\delta^{(0)} = 6$. There is no effect at $T=90$ and peak effect at $T=21$. By solving the system, we have $\delta^\intercal_0=[6.00\ ,-2.48\ ,-2.79],\ \bar\delta=\sum_{t=1}^T Z^\intercal_{nt} \delta_0\approx 1.6$. 

To construct a correct working model for the marginal reward, we let
\begin{eqnarray*}
B^\intercal_{nt}&=& [\alpha(t),\pi_{nt}Z^\intercal_{nt}]\\
\gamma_0&=&[1,\delta^\intercal_0]
\end{eqnarray*}
\subsection{Environmental Set-up for Type 1 error Experiment}
For all environments, to verify  Type 1 error is recovered, during simulation, we set $\delta_0=\mathbf{0}$ where $\mathbf{0}$ is a zero vector. When solving for $\pi_{\text{min}},\ \pi_{\text{max}}$, we used $\delta_0$ values specified in the above sections. 
\subsection{Semi-parametric Contextual Bandit(SCB)}
\label{subsec:apdx_lin_cb_environments}
In this environment, for each user $n$, at each round, a feature $Z_{nt}$ is independently drawn from a sphere with norm $0.4$. Additionally:
\begin{eqnarray*}
R_{nt}&=&f_{nt}+A_{nt}Z^\intercal_{nt}\delta_0+\epsilon_{nt}\\
  \delta^\intercal_0&=&[0.382,-0.100,0.065] \quad  (\|\delta_0\|=0.4)\\
  f_{nt}&=& \frac{1}{900}t-0.05\\  
  \epsilon_{nt}&\sim&\mathcal{N}(0,\sigma^2)\quad\text{where}\quad\sigma^2=0.25\\ 
\end{eqnarray*}
We can see that $\mathbb{E}[R_{nt}(0)|H_{nt}]=f_{nt}$. According to Section~\ref{apdx:pf_reward_fcn}, the marginal reward $\gamma_{nt}=f_{nt}+\pi_{nt}Z^\intercal_{nt}\delta_0$. To construct a correct working model for the marginal reward, we let 
\begin{eqnarray*}
B^\intercal_{nt}&=& [t,1,\pi_{nt}Z^\intercal_{nt}]\\
\gamma^\intercal_0&=&[\frac{1}{900},-0.05,\delta^\intercal_0]
\end{eqnarray*}
\subsection{Adversarial Semi-parametric Contextual Bandit(ASCB)}
\label{subsec:apdx_cb_environments}
The adversarial semi-parametric contextual bandit is similar to SCB except that in each round, $\gamma_{nt}$ is chosen by an adaptive adversary. We specifically used the adversary ($f_{nt}$ below) introduced in ~\citep{abbasiyadkori2018}. The environment is defined as follows:
\begin{eqnarray*} 
R_{nt}&=&f_{nt}+A_{nt}Z^\intercal_{nt}\delta_0+\epsilon_{nt}\\
  Z^\intercal_{nt}&=&[-0.5,0.3\cdot (-1)^t,(t/100)^2]\\
  \delta^\intercal_0&=&[0.2,0.2,0.2]\\
  f_{nt}&=&-\max(0,A_{nt}Z_{nt}^\intercal\delta)\\
  \epsilon_{nt}&\sim&\mathcal{N}(0,\sigma^2)\quad\text{where}\quad\sigma^2=0.25\\
\end{eqnarray*}
Similar to SCB (Section~\ref{subsec:apdx_lin_cb_environments}), $\gamma_{nt}=f_{nt}+\pi_{nt}Z^\intercal_{nt}\delta_0$. We let 
\begin{eqnarray*}
B^\intercal_{nt}&=& [-\max(0,A_{nt}Z_{nt}^\intercal\delta),\pi_{nt}Z^\intercal_{nt}]\\
\gamma_0&=&[1,\delta^\intercal_0]
\end{eqnarray*}

\subsection{Environmental Set-up for Robustness Test}
\label{subsec:apdx_robust_env}
\textit{Robustness test of mis-estimated treatment effect.}
To study the impact of the estimated effect size, we tested two different types of mis-estimation: underestimation and overestimation of the average treatment effect. For the experiment purpose,
the guessed size of each dimension $d$ is set as $\delta^{(d)}_{est} = \delta^{(d)}/1.1$ (underestimation) and $\delta^{(d)}_{est}= \delta^{(d)}\times1.1$ (overestimation) while the effect size of the simulation environment remains as $\delta_0=\delta$. 

\textit{Robustness test of mis-estimated noise variance.}
To study the impact of the estimated noise variance size, we tested two different types of noise mis-estimation for SCB and ASCB: underestimation and overestimation of the environment noise. For the experiment purpose,
the guessed size of the noise variance is set as $\sigma^2_{est} = \sigma^2_{est}/1.2$ (underestimation) and $\sigma^2_{est} = \sigma^2_{est}\times 1.2$ (overestimation) while the noise variance of the simulation environment remains as $\sigma^2$ specified above. For mobile health, we mimic the data pattern that during the weekends, the users' behaviors are more stochastic due to less motivation.
Specifically, we let the noise variance of the weekend be $1.5^2$ times larger than that of the weekdays. The estimated noise variance is calculated using the average variance over time $\sigma^2_{est}=\frac{1}{T}\sum_{t=1}^T \sigma^2_t$.

\textit{Robustness test of mis-specified marginal reward model.}
To test the robustness of the power against the working model of the marginal reward, for all environments, we used a bad approximation where $B_{nt}=1$. 

\textit{Robustness test of mis-specified treatment effect model.}
We conducted two types of environments to demonstrate the robustness of the power against the treatment effect model mis-specification. 
(1) In the first experiment, we consider the effect where the constructed model lies within a subspace of the true model. We suppose that the experts consider the last feature in $Z_{nt}$ as irrelevant and drops it during the experiment. (2) In the second experiment, we consider the situation where the true treatment effect model is a nonlinear function of $Z_{nt}$. Specifically, the true treatment effect model is defines the same as in ASCB (Section~\ref{subsec:apdx_cb_environments}) and we approximated it with $R_{nt}=-0.15+0.003t+\epsilon_{nt}$.
\section{Experiment Settings}
For all environments, we use $N=20$ subjects and $T=90$ trajectory length. We ran $1,000$ simulations in total.
In the regret minimization algorithm, $C_{nt}$ is set as $Z_{nt}$. 
\subsection{Identifying optimal hyperparameters}
\label{apdx:subsec_hyperparameter}
For all algorithms, the hyperparameters are chose by maximizing the average return over $1,000$ individuals.
The prior of the ACTS algorithm is set as $\mathcal{N}(0,\sigma^2_0)$ and $\sigma^2_0$ is chosen between $[0.05,0.5]$ for SCB and ASCB, and between $[50,150]$ for the mobile health simulator.
The parameter $\eta$ of BOSE is chosen between $[0.1,2.0]$ for SCB and ASCB, and between $[10,150]$ for the mobile health simulator. The hyperparameter $\eta$ of linUCB is chosen between $[0.01,0.25]$ for SCB and ASCB, and between $[10,100]$ for the mobile health simulator. (Note: in reality, we would not be able to repeatedly run experiments of 1000 individuals to find the optimal hyperparameters; we do this to give the baseline versions of the algorithms their best chance for success.) The optimal hyperparameters, that is, those that minimize empirical regret, are listed below:
\begin{table}[htbp]
  \centering 
  \caption{Optimal hyperparameter chosen for a given pair of an algorithm and an environment}  
  \label{table:hyper}  
  \begin{tabular}{|c|c|c|c|}  
    \hline  
    &SCB&ASCB&Mobile Health Simulator\\
    \hline
    ACTS($\sigma^2_0$)&$0.15$&$0.05$&$60$\\
    \hline
    BOSE($\eta$)&$0.2$&$0.2$&$120$\\
    \hline
    linUCB($\eta$)&$0.03$&$0.02$&$95$\\
    \hline
    \end{tabular}
\end{table}

\subsection{Solved $\pi_\text{min}$, $\pi_\text{max}$}
Table ~\ref{table:solved_pi} lists solved $\pi$ values given a pair of an environment and a guessed effect size as well as given a pair of an environment and a guessed noise variance. We see the smaller in magnitude of $\delta_{est}$ or the larger $\sigma^2$, the closer $\pi_{\min},\pi_{\max}$ are to 0.5, which results in more exploration. The larger in magnitude of $\delta_{est}$ or the smaller $\sigma^2$, the further $\pi_{\min},\pi_{\max}$ are from 0.5 exploration, which results in less exploration.
\begin{table}[htbp]
  \centering 
  \caption{Solved $\pi_\text{min}$,$\pi_\text{max}$ given a pair of an environment and a guessed effect size or given a pair of an environment and a guessed noise variance}  
  \label{table:solved_pi}  
  \begin{tabular}{|c|c|c|c|c|c|c|}  
    \hline  
    &\multicolumn{2}{c|}{$\delta_{est}<\delta$}&\multicolumn{2}{c|}{$\delta_{est}=\delta$}&\multicolumn{2}{c|}{$\delta_{est}>\delta$}\\
    \hline
    &$\pi_{\min}$&$\pi_{\max}$&$\pi_{\min}$&$\pi_{\max}$ &$\pi_{\min}$&$\pi_{\max}$\\
    \hline
    SCB&$0.288$&$0.712$&$0.216$&$0.784$&$0.168$&$0.832$\\
    \hline
    ASCB&$0.301$&$0.699$&$0.225$&$0.775$&$0.174$&$0.826$\\
    \hline
    Mobile Health&$0.335$&$0.665$&$0.243$&$0.757$&$0.187$&$0.813$\\
    \hline
    &\multicolumn{2}{c|}{$\sigma^2_{est}<\sigma^2$}&\multicolumn{2}{c|}{$\sigma^2_{est}=\sigma^2$}&\multicolumn{2}{c|}{$\sigma^2_{est}>\sigma^2$}\\
    \hline
    &$\pi_{\min}$&$\pi_{\max}$&$\pi_{\min}$&$\pi_{\max}$ &$\pi_{\min}$&$\pi_{\max}$\\
    \hline
    SCB&$0.170$&$0.830$&$0.216$&$0.784$&$0.284$&$0.716$\\
    \hline
    ASCB&$0.176$&$0.824$&$0.225$&$0.775$&$0.297$&$0.703$\\
    \hline
    &\multicolumn{2}{c|}{$\sigma^2_{est}\neq\sigma^2$}&\multicolumn{2}{c|}{$\sigma^2_{est}=\sigma^2$}&\multicolumn{2}{c|}{}\\
    \hline
    &$\pi_{\min}$&$\pi_{\max}$&$\pi_{\min}$&$\pi_{\max}$ &\multicolumn{2}{c|}{}\\
    \hline
    Mobile Health&$0.433$&$0.567$&$0.243$&$0.757$&\multicolumn{2}{c|}{}\\
    \hline
    \end{tabular}
\end{table}

\section{Additional Results}
\subsection{Results for Additional Benchmark Environments}
\label{subsec:res_benchmark}
In this section, we show that our approaches can be generalized to other settings. We consider one stochastic environment (semiparametric contextual bandit (SCB)) and one adversarial environment (adversarial semiparametric contextual bandit (ASCB)) for benchmark testing. SCB samples $Z_{nt}$ and $\delta_0$ uniformly from a sphere and $\epsilon_{nt}$ are i.i.d.. Our adversarial semiparametric (ASCB) setting is from \citet{Krishnamurthy2018}; it uses a non-parametric component in the reward to corrupt the information the learner receives. Details in Appendix~\ref{subsec:apdx_lin_cb_environments} and~\ref{subsec:apdx_cb_environments}.

\begin{figure}[h]
    \centering
    \subfigure[SCB]{
    \includegraphics[width=0.4\textwidth,valign=t]{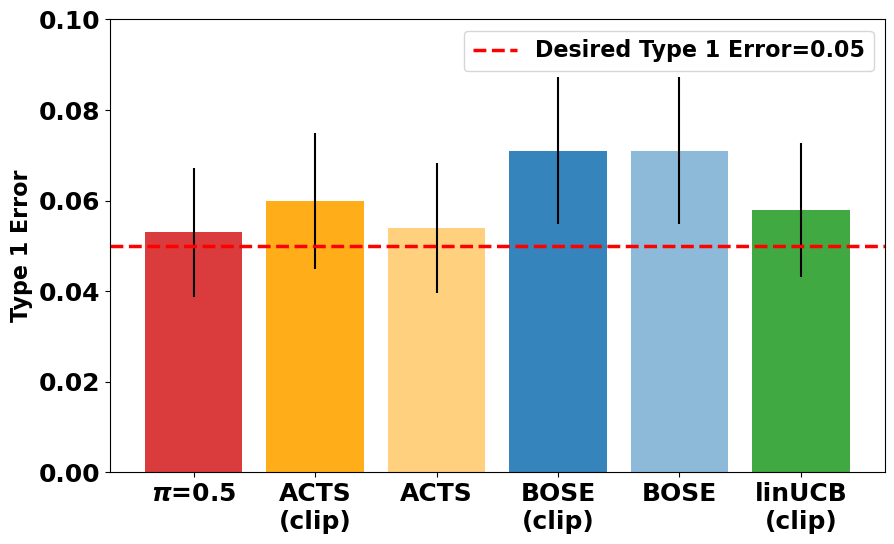}
    }
  \subfigure[ASCB]{
    \includegraphics[width=0.4\textwidth,valign=t]{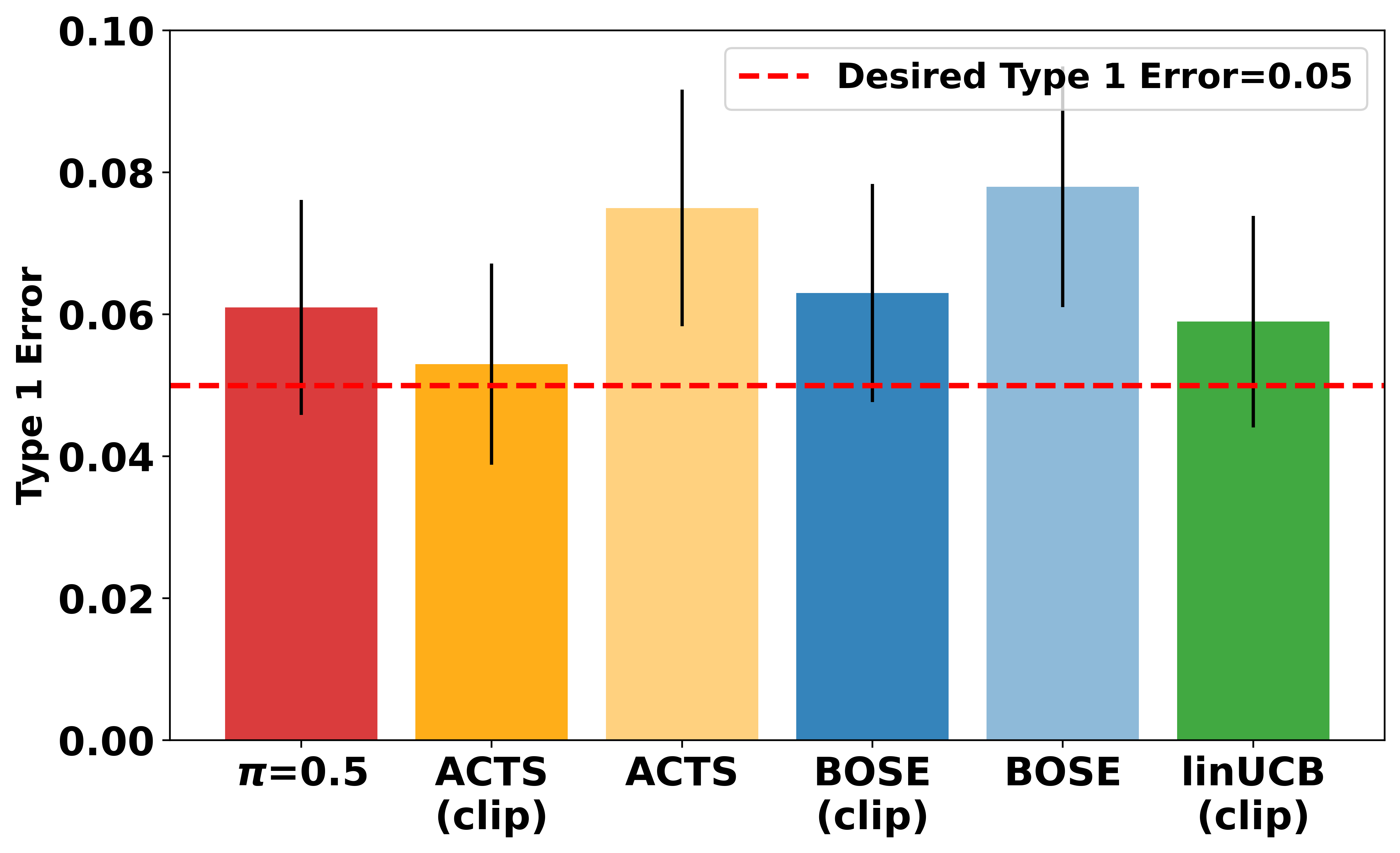}
    }
\caption{Type 1 error with 95\% Confidence Interval:  
We see some Type 1 errors are close
to $\alpha_0=0.05$ while 
some are a little larger than $0.05$, 
due to bias in our estimates of the covariance matrix.}
\label{fig:type_i}
\end{figure}

\emph{Type 1 Error.}  When there is no treatment effect, we see that benchmark environments also suffer from bias in ${\hat\Sigma}_\delta$ and results in Type 1 errors slightly higher than $0.05$ (In Figure~\ref{fig:type_i}, some bars are slightly higher than the red dashed line), suggesting that bias reduction is necessary for future work.  
\begin{figure}[h]
    \centering
    \subfigure[SCB]{
    \includegraphics[width=0.38\textwidth,valign=t]{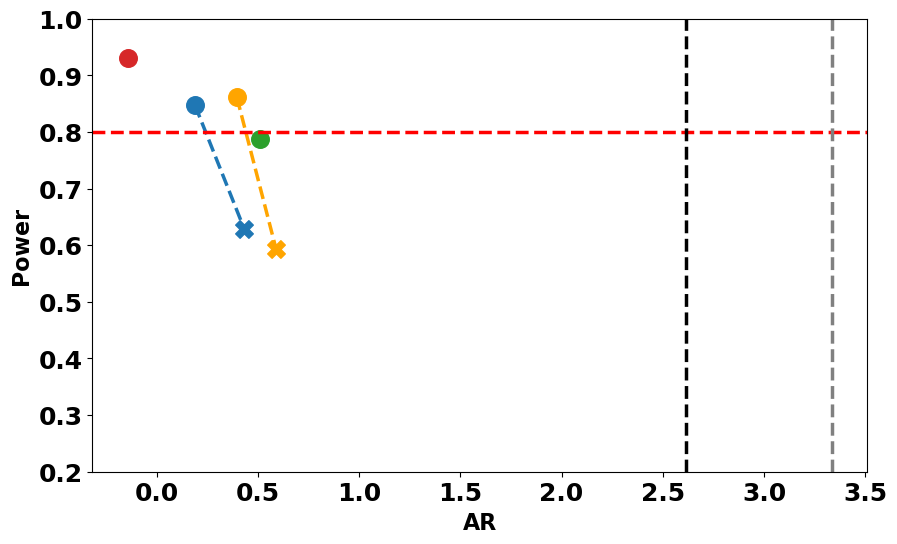}
    }
  \subfigure[ASCB]{
    \includegraphics[width=0.38\textwidth,valign=t]{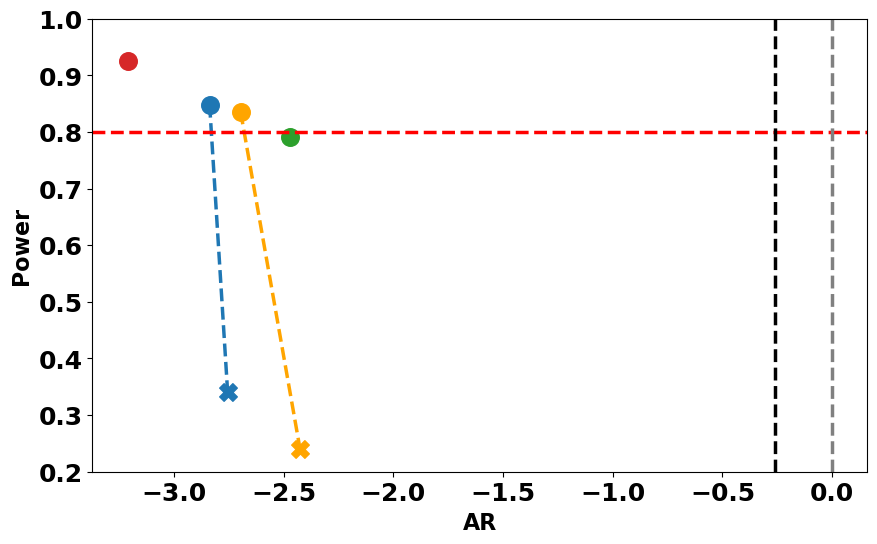}
    }
    \raisebox{3.6cm}{
     \includegraphics[width=0.18\textwidth,valign=t]{mlhc_fig/ar_vs_power_legend.png}}
    \caption{Average Return v.s. Resulting power: $x$-axis denotes average return and $y$-axis denotes the resulting power. Power tends to decrease as average return increases, though clipped linUCB preserves power with a stronger performance than the other baselines.}
     \label{fig:regret_vs_power}
\end{figure}

\emph{Power and Average Return.}
Similar to the mobile health simulator,  we also see the trade-off between the power and the average return in SCB and ASCB. Based on Figure~\ref{fig:regret_vs_power}, in both environments, Fixed Policy ($\pi$=0.5) achieves the highest power.
Comparing the powers of non-clipped algorithms to those of clipped algorithms, 
our clipping scheme achieves the desired power while the non-clipped algorithms fail 
especially in the harder environment (In ASCB, non-clipped algorithms are below the desired power level (even below $0.3$) while clipped ones are above). 
Clipped linUCB achieves the highest average rate while preserving the power guarantee.

\begin{figure}[htbp]
    \centering
    \subfigure[SCB]{
    \includegraphics[width=0.37\textwidth,valign=t]{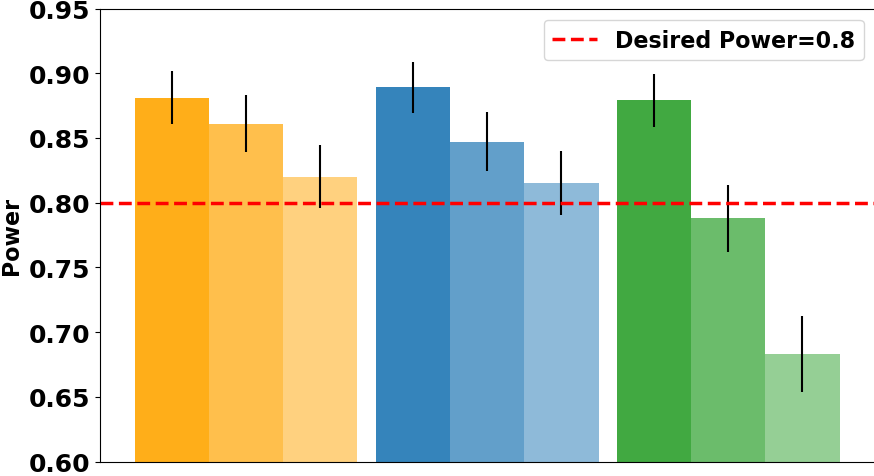}
    }
    \subfigure[ASCB]{
    \includegraphics[width=0.37\textwidth,valign=t]{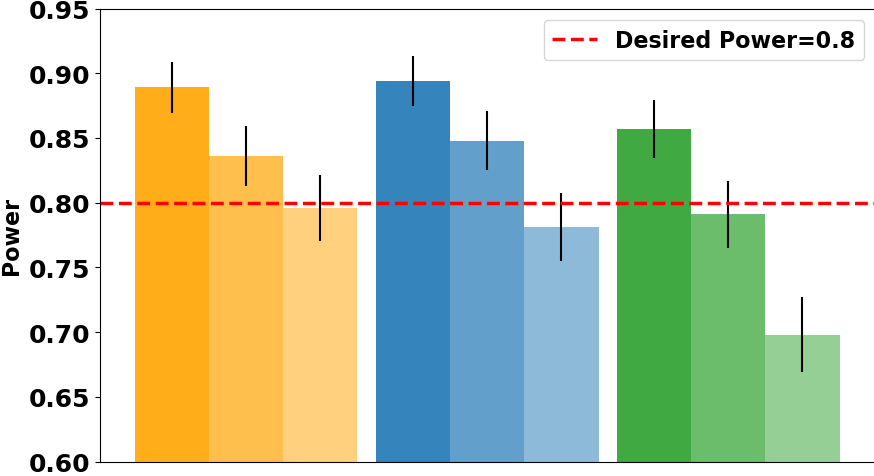}
    }
    \raisebox{3.1cm}{
    \includegraphics[width=0.17\textwidth,valign=t]{mlhc_fig/effect_legend.png}
    }
    \caption{Effect of mis-estimated treatment effect size on power: In general, when $Z_t\delta_{est}<Z_t\delta_0$, power is higher and when $Z_t\delta_{est}>Z_t\delta_0$, power is lower. ACTS and BOSE are more robust to effect mis-specification.}
    \label{fig:effect}
\end{figure}

\emph{Treatment Effect Size Mis-specification.} We consider the effect on the power when our guess of the effect size is overestimated ($Z_{nt}\delta_{est}>Z_{nt}\delta_0$) or underestimated ($Z_{nt}\delta_{est}<Z_{nt}\delta_0$). 
Similar to the mobile health simulator, in both cases, underestimation results in more exploration and higher power and vice versa (Figure~\ref{fig:effect}). Additionally,
linUCB is least robust to mis-estimated effect size as 
it drops the most when the effect size is underestimated with the resulting power still above $0.65$.

\begin{figure}[h]
    \centering
    \subfigure[SCB]{
    \includegraphics[width=0.37\textwidth,valign=t]{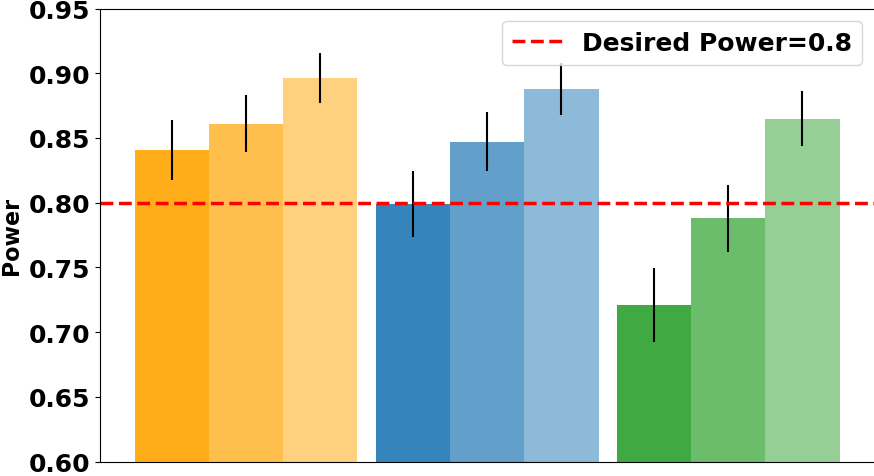}
    }
    \subfigure[ASCB]{
        \includegraphics[width=0.37\textwidth,valign=t]{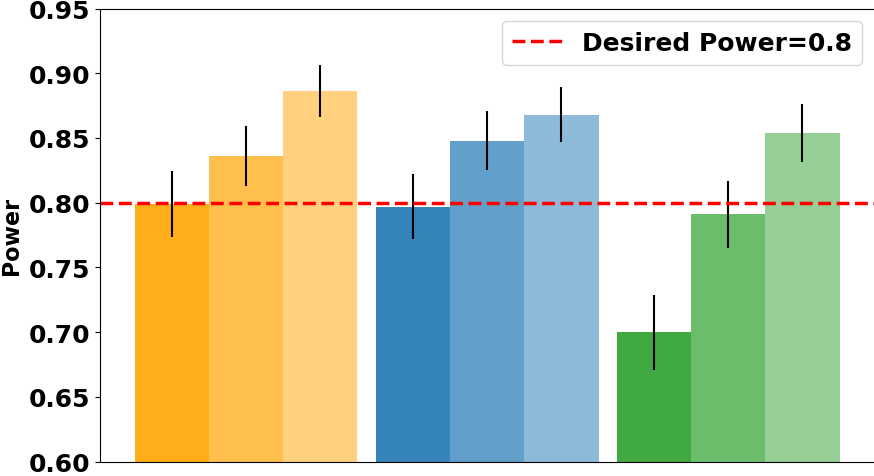}
    }
    \raisebox{3.1cm}{
        \includegraphics[width=0.18\textwidth,valign=t]{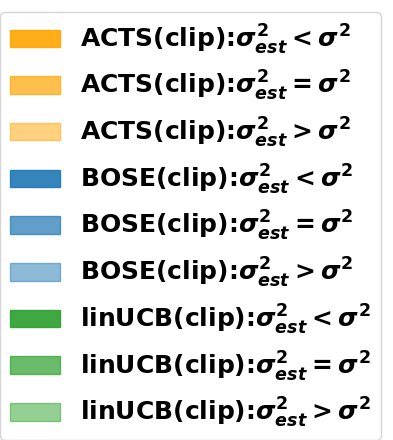}
    }
    \caption{Effect of mis-estimated noise model on power: In general, when $\sigma_{est}>\sigma$, power is higher and when $\sigma_{est}<\sigma$, power is lower. ACTS and BOSE are more robust to noise mis-estimation.}
    \label{fig:noise}
\end{figure}

\emph{Noise Model Mis-specification.} For SCB and ASCB, we test our approaches when the noise variance is overestimated ($\sigma^2_{est}> \sigma^2$) or underestimated ($\sigma^2_{est}< \sigma^2$). We show that overestimated noise variance results in more exploration because more information is needed in a noisy environment, and thus higher power, while underestimation results in less exploration and lower power, with worst case above $0.7$. The results are consistent to our discussion of Theorem~\ref{thm:theorem2}.

\begin{figure}[h]
    \centering
    \subfigure[SCB]{
    \includegraphics[width=0.37\textwidth,valign=t]{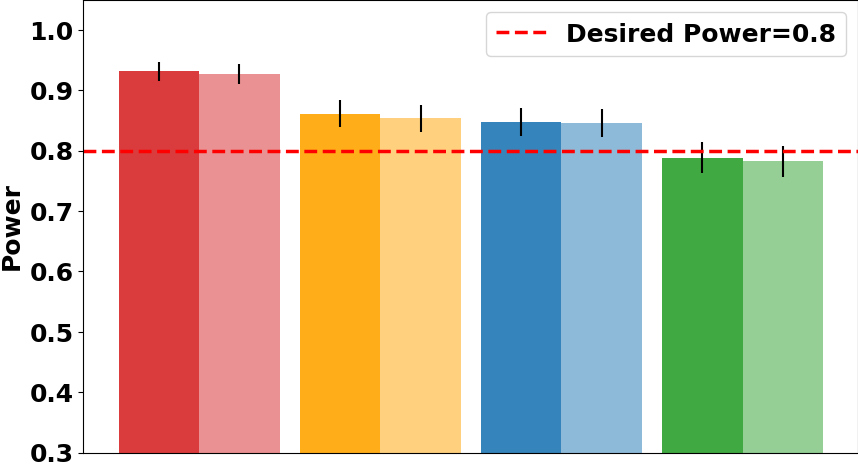}
    }
    ~
    \subfigure[ASCB]{
    \includegraphics[width=0.37\textwidth,valign=t]{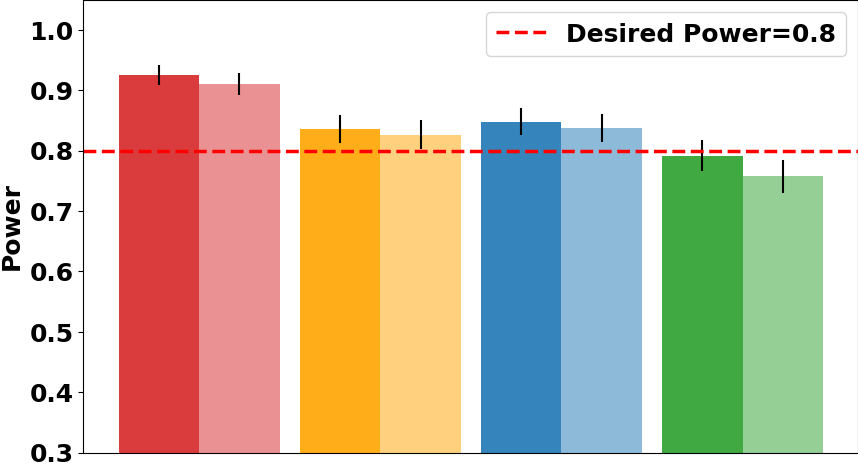}
    }
    \raisebox{3.1cm}{
    \includegraphics[width=0.17\textwidth,valign=t]{mlhc_fig/approx_model_legend.png}
    }
    \caption{Effect of mis-specified marginal reward model on power: Powers is robust to reward model mis-specification in SCB and ASCB where the bar heights are similar.}
    \label{fig:marginal_reward_approx}
\end{figure}

\begin{adjustbox}{minipage=0.82\paperwidth,margin=0pt,center}
    \centering
    \subfigure[SCB]{
    \includegraphics[width=0.2\paperwidth,valign=t]{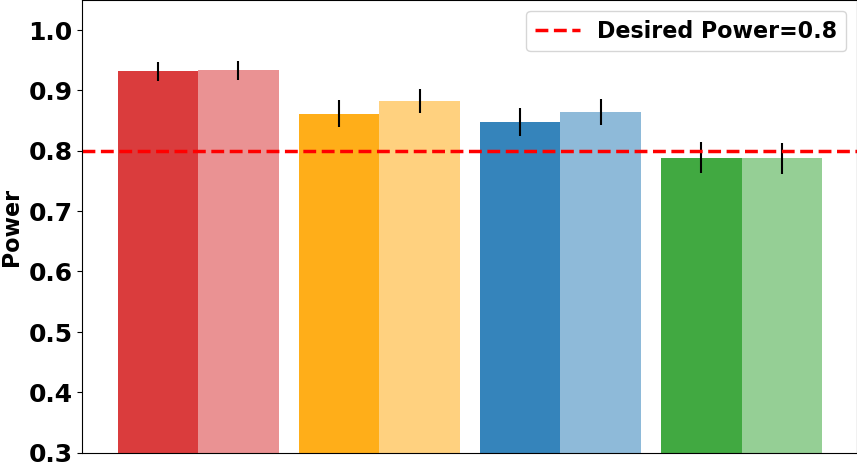}
    \label{fig:scb_treatmemt_effect_model_approx}
    }
    \subfigure[ASCB]{
    \includegraphics[width=0.2\paperwidth,valign=t]{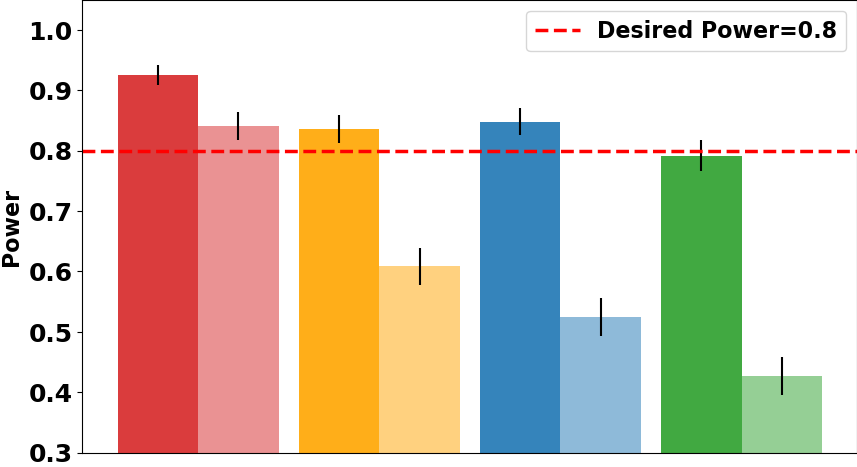}
    \label{fig:ascb_treatmemt_effect_model_approx}
    }
    \subfigure[Non-linear]{
    \includegraphics[width=0.2\paperwidth,valign=t]{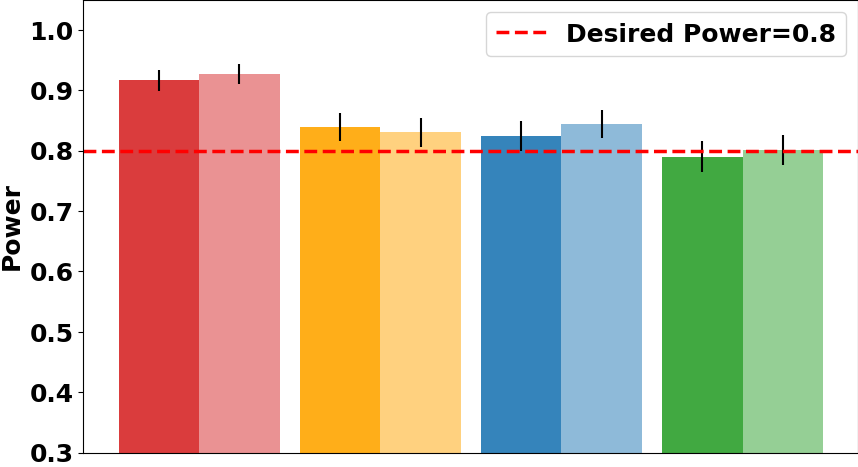}
        \label{fig:nonlinear_treatmemt_effect_model_approx}
    }
    \raisebox{2.45cm}{
    \includegraphics[width=0.12\paperwidth,valign=t]{mlhc_fig/effect_model_legend.png}}
    \captionof{figure}{Effect of mis-specified treatment effect model on power: Excluding a key feature can cause the power to decrease significantly with the worst case above $0.4$. When the feature dimension is correct but the features are incorrect, algorithms are still robust in terms of power with inflated resulting power.}
    \label{fig:treatment_effect_model_approx}
\end{adjustbox}

\emph{Marginal Reward Model Mis-specification.}
For both environments, we use $B_{nt}=1$ as a bad approximation of the marginal reward structure. The resulting powers are similar to those of correctly specified models (Figure~\ref{fig:marginal_reward_approx}). Thus, our methods are robust to marginal reward mis-specification in various settings. 

\emph{Treatment Effect Model Mis-specification.}  For SCB and ASCB, we consider the case where the constructed feature space is smaller than the true feature space. Excluding a key feature can have a big effect in a challenging environment: In Figure~\ref{fig:ascb_treatmemt_effect_model_approx}, for linUCB, power drops to around $0.4$.  We also consider the situation where the true treatment effect, which is a nonlinear function of features $Z_{nt}$, is approximated by a linear function. A different environment is built for this experiment (Appendix Section~\ref{sec:apdx_environments}). In Figure~\ref{fig:nonlinear_treatmemt_effect_model_approx}, we see that all the powers are similar to those when the model is correctly specified.

Based on the results of the robustness experiments, we see that although clipped
linUCB performs the best in term of the average return (Figure~\ref{fig:regret_vs_power}), it is the
least robust in terms of various model mis-specifications (Figure~\ref{fig:effect},~\ref{fig:noise},~\ref{fig:marginal_reward_approx} and~\ref{fig:treatment_effect_model_approx})).

\emph{Regrets with respect to the Clipped Oracle.} In both environments, the regret of clipped algorithms with respect to a clipped oracle is on the same scale as the regret of non-clipped algorithms with respect to a non-clipped oracle (Figure~\ref{fig:regret_vs_power}).

\begin{figure}[h]
    \centering
    \subfigure[SCB]{
    \includegraphics[width=0.38\textwidth,valign=t]{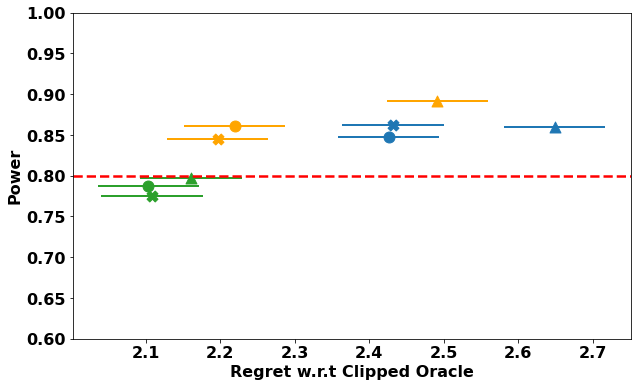}
    }
  \subfigure[ASCB]{
    \includegraphics[width=0.38\textwidth,valign=t]{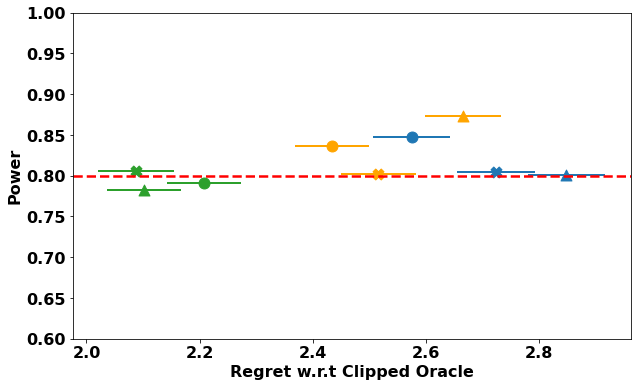}
  }
    \raisebox{3.5cm}{\includegraphics[width=0.18\textwidth,valign=t]{mlhc_fig/wrapper_legend.png}}
    \caption{Regret w.r.t clipped oracle v.s. Resulting power with different wrapper algorithms: $x$-axis is regret with respect to clipped oracle and $y$-axis is the resulting power. In SCB, ASCB, probability clipping and  data dropping works similarly in terms of regret and power. Mostly, action flipping works the worst in terms of regret but results in high power.}
     \label{fig:regret_vs_power_wrapper}
\end{figure}

\emph{Comparison of Wrapper Algorithms.} The power guarantee is preserved for all wrapper algorithms (Figure~\ref{fig:regret_vs_power_wrapper}). In more general settings, action flipping has a clear disadvantage comparing to the other two.
For ACTS, BOSE in environments SCB, ASCB, action flipping results in most power and most regret as we have more exploration due to forced stochasticity and a smaller perceived treatment effect in the modified environment (unlike dropping).  

\subsection{Type 1 Error}
\begin{table}[H]
\centering
 \caption{Type 1 error (Figure~\ref{fig:mb_type_i},~\ref{fig:type_i}) with 2 standard error$(\hat\alpha_0\pm 2\sqrt{\hat\alpha_0 (1-\hat\alpha_0)/S}$ where $S=1000$). We see some Type 1 errors are close to $\alpha_0=0.05$ while some are larger than $0.05$ but not significantly. }
  \label{table:type_i_err}  

  \begin{tabular}{|c|c|c|c|c|c|}  
    \hline  
    \multicolumn{6}{|c|}{ASCB}\\ 
    \hline
    Fix $\pi=0.5$&ACTS&ACTS (clip)&BOSE&BOSE (clip)&linUCB(clip)\\
    \hline  
  $0.060 \pm 0.015$&$0.074 \pm 0.017$&$0.054 \pm 0.014$&$0.078 \pm 0.017$&$0.063 \pm 0.015$&$0.060 \pm 0.015$\\
    \hline  
     \multicolumn{6}{|c|}{SCB}\\ 
    \hline
    Fix $\pi=0.5$&ACTS&ACTS (clip)&BOSE&BOSE (clip)&linUCB(clip)\\
    \hline  
  $0.053 \pm 0.014$&$0.054 \pm 0.014$&$0.06 \pm 0.015$&$0.071 \pm 0.016$&$0.071 \pm 0.016$&$0.058 \pm 0.015$\\
    \hline  
     \multicolumn{6}{|c|}{Mobile Health Simulator}\\ 
    \hline
    Fix $\pi=0.5$&ACTS&ACTS (clip)&BOSE&BOSE (clip)&linUCB(clip)\\
    \hline  
    $0.072 \pm 0.016$&$0.075 \pm 0.017$&$0.061 \pm 0.015$&$0.062 \pm 0.015$&$0.062 \pm 0.015$&$0.072 \pm 0.016$\\
    \hline  
  \end{tabular}
\end{table}

\subsection{Power, Average Return \& Regrets}
\begin{table}[htbp]
\footnotesize
  \centering 
\begin{adjustbox}{minipage=0.8\paperwidth,margin=0pt,center}
  \begin{tabular}{|c|c|c|c|c|c|c|}  
    \hline  
    \multicolumn{7}{|c|}{ASCB}\\ 
    \hline
  & Fix $\pi=0.5$&ACTS&ACTS (clip)&BOSE&BOSE (clip)&linUCB(clip)\\
    \hline  
    power &$0.925 \pm 0.017$&$0.241 \pm 0.027$&$0.836 \pm 0.023$&$0.340 \pm 0.030$&$0.848 \pm 0.023$&$0.791 \pm 0.026$\\
    \hline
    AR&$-3.210 \pm 0.067$&$-2.425 \pm 0.066$&$-2.696 \pm 0.066$&$-2.755 \pm 0.071$&$-2.837 \pm 0.068$&$-2.470 \pm 0.066$\\
    \hline
    $\textit{reg}$&$3.210 \pm 0.067$&$2.425 \pm 0.066$&$-$&$2.755 \pm 0.071$&$-$&$-$\\
    \hline
    $\textit{reg}_c$&$2.949 \pm 0.067$&$-$&$2.434 \pm 0.066$&$-$&$2.575 \pm 0.068$&$2.208 \pm 0.066$\\
    \hline
     \multicolumn{7}{|c|}{SCB}\\ 
     \hline
    power &$0.931 \pm 0.016$&$0.594 \pm 0.031$&$0.861 \pm 0.022$&$0.628 \pm 0.031$&$0.847 \pm 0.023$&$0.788 \pm 0.026$\\ 
     \hline
    AR &$-0.143 \pm 0.067$&$0.590 \pm 0.068$&$0.397 \pm 0.068$&$0.434 \pm 0.070$&$0.190 \pm 0.068$&$0.513 \pm 0.068$\\ 
    \hline
    $\textit{reg}$&$3.479 \pm 0.067$&$2.747 \pm 0.068$&$-$&$2.903 \pm 0.070$&$-$&$-$\\ 
 \hline
$\textit{reg}_c$&$2.759 \pm 0.067$&$-$&$2.219 \pm 0.068$&$-$&$2.426 \pm 0.068$&$2.104 \pm 0.068$\\ 
\hline
     \multicolumn{7}{|c|}{Mobile Health Simulator}\\ 
    \hline
    power &$0.911 \pm 0.018$&$0.390 \pm 0.031$&$0.789 \pm 0.026$&$0.667 \pm 0.030$&$0.901 \pm 0.019$&$0.797 \pm 0.025$\\ 
    \hline
    AR$(\times 10^3)$&$8.089 \pm 0.010$&$8.271 \pm 0.009$&$8.204 \pm 0.010$&$8.109 \pm 0.010$&$8.094 \pm 0.010$&$8.201 \pm 0.010$\\ 
    \hline
    $\textit{reg}(\times 10^3)$&$0.223 \pm 0.010$&$0.041 \pm 0.009$&$-$&$0.204 \pm 0.010$&$-$&$-$\\ 
    \hline
    $\textit{reg}_c(\times 10^3)$&$0.117 \pm 0.010$&$-$&$0.003 \pm 0.010$&$-$&$0.112 \pm 0.010$&$0.005 \pm 0.010$\\  
    \hline
  \end{tabular}
  \end{adjustbox}
    \caption{Resulting power, average return (AR), the regret with respect to the standard oracle ($\textit{reg}$) and the regret with respect to the clipped oracle ($\textit{reg}_c$) with 2 standard error (Figure~\ref{fig:mb_regret_vs_power},~\ref{fig:regret_vs_power}). With probability clipping, the correct power $\beta_0=0.80$ is recovered while without clipping, sufficient power is not guaranteed. There is a trade-off between the average return and the resulting power. The regrets of the clipped algorithms converge as expected with respect to the clipped oracle.}  
     \label{table:power}  

\end{table}

\subsection{Robustness Analysis}
In this section, we list the resulting power of the robustness experiments against various model mis-specifications in tables.

\begin{table}[htbp] 
  \centering 
  \scriptsize
  \begin{adjustbox}{minipage=0.97\paperwidth,margin=0pt,center}
  \begin{tabular}{|c|c|c|c|c|c|c|c|c|c|}  
    \hline  
    &\multicolumn{3}{|c|}{ASCB}&\multicolumn{3}{|c|}{SCB}&\multicolumn{3}{|c|}{Mobile Health}\\ 
    \hline
  &$\delta_{est}<\delta$&$\delta_{est}=\delta$&$\delta_{est}>\delta$&$\delta_{est}<\delta$&$\delta_{est}=\delta$&$\delta_{est}>\delta$&$\delta_{est}<\delta$&$\delta_{est}=\delta$&$\delta_{est}>\delta$\\
    \hline  
    ACTS&$0.889 \pm 0.020$&$0.836 \pm 0.023$&$0.796 \pm 0.025$&$0.881 \pm 0.020$&$0.861 \pm 0.022$&$0.820 \pm 0.024$&$0.862 \pm 0.022$&$0.789 \pm 0.026$&$0.724 \pm 0.028$\\
    \hline  
    BOSE&$0.894 \pm 0.019$&$0.848 \pm 0.023$&$0.781 \pm 0.026$&$0.889 \pm 0.020$&$0.847 \pm 0.023$&$0.815 \pm 0.025$&$0.918 \pm 0.017$&$0.901 \pm 0.019$&$0.841 \pm 0.023$\\
    \hline  
    linUCB&$0.857 \pm 0.022$&$0.791 \pm 0.026$&$0.698 \pm 0.029$&$0.879 \pm 0.021$&$0.788 \pm 0.026$&$0.683 \pm 0.029$&$0.879 \pm 0.021$&$0.797 \pm 0.025$&$0.726 \pm 0.028$\\
    \hline  
  \end{tabular}
  \end{adjustbox}
  \caption{Resulting power with $2$ standard error with mis-estimated treatment effect size (Figure~\ref{fig:mb_treatment},~\ref{fig:effect}) where $\delta_{est}$ denotes the estimated treatment effect size. In general, the power is lower when $\delta_{est}>\delta$ and higher when $\delta_{est}<\delta$. The power is robust against mis-estimated treatment effect with most powers above $0.7$.}
  \label{table:robustness_treatment_effect}  
\end{table}
\vspace*{-0.5cm}
\begin{table}[htbp]
  \centering 
  \footnotesize
  \begin{adjustbox}{minipage=0.9\paperwidth,margin=0pt,center}
  \begin{tabular}{|c|c|c|c|c|c|c|c|c|}  
    \hline  
    &\multicolumn{3}{|c|}{ASCB}&\multicolumn{3}{|c|}{SCB}&\multicolumn{2}{|c|}{Mobile Health}\\ 
    \hline
  &$\sigma_{est}>\sigma$&$\sigma_{est}=\sigma$&$\sigma_{est}<\sigma$&$\sigma_{est}>\sigma$&$\sigma_{est}=\sigma$&$\sigma_{est}<\sigma$&$\sigma_{est}\neq\sigma$&$\sigma_{est}=\sigma$\\
    \hline  
    ACTS&$0.883 \pm 0.020$&$0.836 \pm 0.022$&$0.799 \pm 0.025$&$0.896 \pm 0.019$&$0.861 \pm 0.022$&$0.841 \pm 0.023$&$0.802 \pm 0.025$&$ 0.789 \pm 0.025$\\
    \hline  
    BOSE&$0.868 \pm 0.021$&$0.843 \pm 0.023$&$0.797 \pm 0.025$&$0.888 \pm 0.020$&$0.844 \pm 0.023$&$0.794 \pm 0.026$&$0.801 \pm 0.025$&$0.901 \pm 0.019$\\
    \hline  
    linUCB&$0.854 \pm 0.022$&$0.793 \pm 0.026$&$0.7 \pm 0.029$&$0.865 \pm 0.022$&$0.788 \pm 0.026$&$0.721 \pm 0.028$&$0.824 \pm 0.025$&$0.793 \pm 0.026$\\
    \hline  
  \end{tabular}
  \end{adjustbox}
    \caption{Resulting power with $2$ standard error with mis-estimated noise (Figure~\ref{fig:mb_noise},~\ref{fig:noise}) variance where $\sigma_{est}$ denotes the estimated noise. In general, the power is lower when $\sigma_{est}<\sigma$ and higher when $\sigma_{est}>\sigma$. The power is robust against noise variance as the worst case is still above $0.7$ (given by linUCB in ASCB). }  
  \label{table:robustness_noise}  
\end{table}

\vspace*{-0.5cm}
\begin{table}[htbp]
  \centering 
     \begin{adjustbox}{minipage=0.8\paperwidth,margin=0pt,center}
  \begin{tabular}{|c|c|c|c|c|c|c|}  
    \hline  
    &\multicolumn{2}{|c|}{ASCB}&\multicolumn{2}{|c|}{SCB}&\multicolumn{2}{|c|}{Mobile Health}\\ 
    \hline
  &True $B_{nt}$&$B_{nt}=1$ &True $B_{nt}$&$B_{nt}=1$ &True $B_{nt}$&$B_{nt}=1$\\
    \hline  
      Fixed $\pi=0.5$&$0.925 \pm 0.017$&$0.910 \pm 0.018$&$0.931 \pm 0.016$&$0.926 \pm 0.017$&$0.911 \pm 0.018$&$0.887 \pm 0.020$\\
     \hline
    ACTS&$0.836 \pm 0.023$&$0.826 \pm 0.024$&$0.861 \pm 0.022$&$0.853 \pm 0.022$&$0.789 \pm 0.026$&$0.792 \pm 0.026$\\
    \hline  
    BOSE&$0.848 \pm 0.023$&$0.837 \pm 0.023$&$0.847 \pm 0.023$&$0.846 \pm 0.023$&$0.901 \pm 0.019$&$0.885 \pm 0.020$\\
    \hline  
    linUCB&$0.791 \pm 0.026$&$0.757 \pm 0.027$&$0.788 \pm 0.026$&$0.782 \pm 0.026$&$0.797 \pm 0.025$&$0.818 \pm 0.024$\\
    \hline  
  \end{tabular}
  \end{adjustbox}
    \caption{Resulting power with $2$ standard error with mis-specified marginal reward model (Figure~\ref{fig:mb_marginal_reward_approx},~\ref{fig:marginal_reward_approx}). Powers is robust to
reward model mis-specification as most resulting powers are close to $0.8$.}  
  \label{table:robustness_marginal_reward_model}  
\end{table}

\vspace*{-0.5cm}
\begin{table}[!htbp]
  \centering 
     \begin{adjustbox}{minipage=0.8\paperwidth,margin=0pt,center}
     \footnotesize
  \begin{tabular}{|c|c|c|c|c|c|c|c|}  
    \hline  
    &\multicolumn{3}{|c|}{ASCB}&\multicolumn{2}{|c|}{SCB}&\multicolumn{2}{|c|}{Mobile Health}\\ 
    \hline
  &True $Z_{nt}$&Drop&Nonlinear&True $Z_{nt}$&Drop&True $Z_{nt}$&Drop\\
    \hline  
      Fixed $\pi=0.5$&$0.925 \pm 0.017$&$0.840 \pm 0.023$&$0.926\pm0.016$&$0.931 \pm 0.016$&$0.933 \pm 0.016$&$0.911 \pm 0.018$&$0.927 \pm 0.016$\\
     \hline
    ACTS&$0.836 \pm 0.023$&$0.608 \pm 0.031$&$0.83 \pm 0.024$&$0.861 \pm 0.022$&$0.882 \pm 0.020$&$0.789 \pm 0.026$&$0.516 \pm 0.032$\\
    \hline  
    BOSE&$0.848 \pm 0.023$&$0.524 \pm 0.032$&$0.847 \pm 0.023$&$0.844 \pm 0.023$&$0.864 \pm 0.022$&$0.901 \pm 0.019$&$0.709 \pm 0.029$\\
    \hline  
    linUCB&$0.791 \pm 0.026$&$0.427 \pm 0.031$&$0.801 \pm 0.025$&$0.788 \pm 0.026$&$0.787 \pm 0.026$&$0.797 \pm 0.025$&$0.398 \pm 0.031$\\
    \hline  
  \end{tabular}
  \end{adjustbox}
    \caption{Resulting power with $2$ standard error with mis-specified treatment effect model (Figure~\ref{fig:mb_treatment_model_approx},~\ref{fig:treatment_effect_model_approx}). We see that the power is robust against both types of treatment effect mis-specification. linUCB is the least robust with the worst resulting power above $0.4$.}  
  \label{table:robustness_treatment_effect_model}  
\end{table}

 \subsection{Comparison of Wrapper Algorithms}
 The full results of action-flipping/ data-dropping/ probability-clipping wrapper algorithms are listed in Table~\ref{table:wrapper_ar}.
  \centering 
  \begin{longtable}{|c|c|c|c|c|} 
    \hline  
    \multicolumn{5}{|c|}{SCB}\\ 
    \hline
    &ACTS&ACTS (flip)&ACTS (drop)&ACTS (clip)\\
    \hline  
    power&$0.594 \pm 0.031$&$0.892 \pm 0.020$&$0.845 \pm 0.023$&$0.860 \pm 0.022$\\\hline
    AR  &$0.590 \pm 0.069$&$0.125 \pm 0.068$&$0.420 \pm 0.067$&$0.390 \pm 0.068$\\ \hline
$\emph{reg}$  &$2.747 \pm 0.069$&-&-&-\\ \hline
$\emph{reg}_c$ &-&$2.492 \pm 0.068$&$2.197 \pm 0.067$&$2.219 \pm 0.068$\\ \hline
    &BOSE&BOSE (flip)&BOSE (drop)&BOSE (clip)\\\hline
  power&$0.628 \pm 0.030$&$0.86 \pm 0.020$&$0.863 \pm 0.022$&$0.848 \pm 0.023$\\\hline
    AR  &$0.434 \pm 0.070$&$-0.033 \pm 0.067$&$0.179 \pm 0.069$&$0.190 \pm 0.068$\\ \hline
$\emph{reg}$  &$2.903 \pm 0.070$&-&-&-\\ \hline
$\emph{reg}_c$ &-&$2.649 \pm 0.067$&$2.437 \pm 0.069$&$2.426 \pm 0.068$\\\hline
    &linUCB&linUCB(flip)&linUCB(drop)&linUCB(clip)\\\hline 
    power& -&$0.797 \pm 0.025$&$0.775 \pm 0.025$&$0.788 \pm 0.026$\\\hline
    AR  &$1.243 \pm 0.069$&$0.455 \pm 0.068$&$0.508 \pm 0.068$&$0.513 \pm 0.068$\\ \hline
$\emph{reg}$  &$2.093 \pm 0.069$&-&-&-\\ \hline
$\emph{reg}_c$ &-&$2.161 \pm 0.068$&$2.109 \pm 0.068$&$2.104 \pm 0.068$\\\hline
     \multicolumn{5}{|c|}{ASCB}\\ 
    \hline
 &ACTS&ACTS (flip)&ACTS (drop)&ACTS (clip)\\
 \hline  
 power& $0.241 \pm 0.027$&$0.873 \pm 0.022$&$0.802 \pm 0.023$&$0.836 \pm 0.023$\\ \hline
 AR  &$-2.425 \pm 0.069$&$-2.947 \pm 0.067$&$-2.778 \pm 0.068$&$-2.696 \pm 0.068$\\ \hline
$\emph{reg}$  &$2.425 \pm 0.069$&-&-&-\\ \hline
$\emph{reg}_c$ &-&$2.666 \pm 0.067$&$2.516 \pm 0.068$&$2.434 \pm 0.068$\\\hline
    &BOSE&BOSE (flip)&BOSE (drop)&BOSE (clip)\\\hline
    power& $0.34 \pm 0.030$&$0.801 \pm 0.023$&$0.804 \pm 0.023$&$0.844 \pm 0.023$\\\hline
     AR  &$-2.755 \pm 0.072$&$-3.110 \pm 0.067$&$-2.985 \pm 0.068$&$-2.837 \pm 0.068$\\ \hline
$\emph{reg}$  &$2.755 \pm 0.072$&-&-&-\\ \hline
$\emph{reg}_c$ &-&$2.848 \pm 0.067$&$2.724 \pm 0.068$&$2.575 \pm 0.068$\\\hline
    &linUCB&linUCB(flip)&linUCB(drop)&linUCB(clip)\\\hline  
    power&-&$0.783 \pm 0.025$&$0.806 \pm 0.026$&$0.791 \pm 0.025$\\\hline
    AR  &$-1.655 \pm 0.066$&$-2.364 \pm 0.066$&$-2.349 \pm 0.067$&$-2.470 \pm 0.066$\\ \hline
$\emph{reg}$  &$1.655 \pm 0.066$&-&-&-\\ \hline
$\emph{reg}_c$ &-&$2.102 \pm 0.066$&$2.087 \pm 0.067$&$2.208 \pm 0.066$\\\hline
  \multicolumn{5}{|c|}{Mobile Health Simulator}\\ \hline
 &ACTS&ACTS (flip)&ACTS (drop)&ACTS (clip)\\
 \hline  
 power& $0.39 \pm 0.031$&$0.819 \pm 0.023$&$0.801 \pm 0.026$&$0.789 \pm 0.025$\\\hline
 AR($\times10^3$)  &$8.271 \pm 0.004$&$8.185 \pm 0.004$&$8.206 \pm 0.004$&$8.204 \pm 0.004$\\ \hline
$\emph{reg}(\times10^3)$  &$0.041 \pm 0.004$&-&-&-\\ \hline
$\emph{reg}_c(\times10^3)$ &-&$0.020 \pm 0.004$&$-0.036 \pm 0.004$&$0.025 \pm 0.004$\\\hline
     &BOSE&BOSE (flip)&BOSE (drop)&BOSE (clip)\\\hline
     power&$0.667 \pm 0.030$&$0.858 \pm 0.022$&$0.856 \pm 0.022$&$0.901 \pm 0.019$\\\hline
      AR($\times10^3$)  &$8.106 \pm 0.004$&$8.100 \pm 0.004$&$8.095 \pm 0.004$&$8.097 \pm 0.004$\\ \hline
$\emph{reg}(\times10^3)$  &$0.203 \pm 0.004$&-&-&-\\ \hline
$\emph{reg}_c(\times10^3)$ &-&$0.105 \pm 0.004$&$0.111 \pm 0.004$&$0.109 \pm 0.004$\\\hline
    &linUCB&linUCB(flip)&linUCB(drop)&linUCB(clip)\\\hline  
    power&-&$0.794 \pm 0.026$&$0.817 \pm 0.024$&$0.793 \pm 0.026$\\\hline
     AR($\times10^3$) &$8.295 \pm 0.004$&$8.189 \pm 0.004$&$8.192 \pm 0.004$&$8.201 \pm 0.004$\\ \hline
$\emph{reg}(\times10^3)$  &$0.017 \pm 0.004$&-&-&-\\ \hline
$\emph{reg}_c(\times10^3)$ &-&$0.016\pm 0.004$&$0.014 \pm 0.004$&$0.006 \pm 0.004$\\ \hline
  \caption{Average Return, $\textit{reg}$, $\textit{reg}_{c}$ with 2 standard errors: All wrapper algorithms achieve good regret rate with slightly different trade-offs given the situation.}  
  \label{table:wrapper_ar}  
  \end{longtable}

\end{document}